\DeclareMathOperator*{\argmin}{arg\,min}
\DeclareMathOperator{\Sup}{\textup{Sup}}
\title{Matrix-Valued LogSumExp Approximation for Colour Morphology}
\author{Marvin Kahra\inst{1}, Michael Breu\ss\inst{1}, Andreas Kleefeld\inst{2,3} and Martin Welk\inst{4}}
\authorrunning{M. Kahra et al.}
\institute{Institute for Mathematics, Brandenburg University of Technology Cottbus-Senftenberg, 03046 Cottbus, Germany \\
\email{\{marvin.kahra,breuss\}@b-tu.de} \and
Forschungszentrum Jülich GmbH, Jülich Supercomputing Centre, \\
Wilhelm-Johnen-Stra\ss e, 52425 Jülich, Germany \\
\email{a.kleefeld@fz-juelich.de} \and
University of Applied Sciences Aachen, Faculty of Medical Engineering and Technomathematics, Heinrich-Mu\ss{}mann-Str. 1, 52428 J\"ulich, Germany \and
UMIT TIROL – Private University for Health Sciences and Health
Technology, Eduard-Wallnöfer-Zentrum 1, 6060 Hall/Tyrol, Austria \\
\email{martin.welk@umit-tirol.at}
}
\begin{document}

\maketitle

\begin{abstract}

    Mathematical morphology is a part of image processing that uses a window that moves across the image to change certain pixels according to certain operations. The concepts of supremum and infimum play a crucial role here, but it proves challenging to define them generally for higher-dimensional data, such as colour representations. Numerous approaches have therefore been taken to solve this problem with certain compromises.

    In this paper we will analyse the construction of a new approach, which we have already presented experimentally in paper \cite{DGMM24}. This is based on a method by Burgeth and Kleefeld \cite{BurgethKleefeld}, who regard the colours as symmetric $2\times2$ matrices and compare them by means of the Loewner order in a bi-cone through different suprema. However, we will replace the supremum with the LogExp approximation for the maximum instead. This allows us to transfer the associativity of the dilation from the one-dimensional case to the higher-dimensional case. In addition, we will investigate the minimality property and specify a relaxation to ensure that our approach is continuously dependent on the input data.

    \keywords{mathematical morphology \and colour image \and matrix-valued image \and positive definite matrix \and symmetric matrix \and supremum}
\end{abstract}

\section{Introduction}

Mathematical morphology is a theory used to analyse spatial structures in images. Over the decades it has developed into a very successful field of image processing, see e.g. \cite{Najman-Talbot,Roerdink-2011,Serra-Soille} for an overview. Morphological operators basically consist of two main components. The first of these is the structuring element (SE), which is characterised by its shape, size and position. These in turn can be divided into two types of SEs, flat and non-flat cf. \cite{Haralick_1}. A flat SE basically defines a neighbourhood of the central pixel where appropriate morphological operations are performed, while a non-flat SE also contains a mask with finite values that are used as additive offsets. The SE is usually implemented as a window sliding over the image. The second main component is used to perform a comparison of values within an SE. The basic operations in mathematical morphology are dilation and erosion, where a pixel value is set to the maximum and minimum, respectively, of the discrete image function within the SE. Many morphological filtering procedures of practical interest, such as opening, closing or top hats, can be formulated by combining dilation and erosion operations. Since dilation and erosion are dual operations, it is often sufficient to restrict oneself to one of the two when constructing algorithms.

Let us also briefly extend this concept to colour morphology, as it is the underlying concept for our further considerations. As already mentioned, the most important operation in morphology is to perform a comparison of the tonal values or, in our case, the colour values within the SE over certain sets of pixels in an image domain. 
For the simpler application areas such as binary or grey value morphology, one can act directly on complete lattices in order to obtain a total order of the colour values, see \cite{Serra-Soille}. In the case of colour morphology, this is no longer the case, as there is no total order of the colour values. For this reason, corresponding semi-orders and different basic structures are used, cf. \cite{BAR76}. The first approach that could be used for this would be to regard each colour channel of an image as an independent image and to perform grey value morphology on each of them. This approach has the serious disadvantage that we lose the correlated information between the colour channels, which could be used to further improve the filtering results. The other approach, which is more popular, uses a vector space structure in which each colour is considered a vector in an underlying colour space. 
In order to compute a supremum or infimum, it is necessary to have an order for the vector space. However, there are a plethora of ordering approaches for colour morphology. For details of the most commonly used approaches, we refer the reader to the overview provided in \cite{overview_orderings}.
We will take the latter approach, but use symmetric matrices instead of vectors. Since there is also no total order for colour matrices either, we will order the elements by means of a semi-order, namely the Loewner order, see \cite{Loewner}. This means, though, that we need an additional function to select a minimum upper bound, namely the supremum function.

To calculate two of the basic operations of colour morphology, dilation and erosion, it is necessary to determine the supremum or infimum. Because of the duality between these two operations, it is common to consider only one of them. Here we will concentrate on dilation and the construction of the supremum. However, there are several approaches how to choose the supremum of a set of symmetric matrices, based on different norms. To give some examples, we mention here the nuclear norm, the Frobenius norm and the spectral norm. For a comparison of these norms we refer to the work by Welk, Kleefeld and Breu\ss\ \cite{WelkQuantile}. 

Here we want to consider another approach, namely the approximation of the supremum by a so-called LogSumExp approximation of Maslov \cite{Maslov}. This is an approximation which has already given promising results in the work of Kahra, Sridhar and Breu{\ss} \cite{KSB} for grey-scale images and for colour images in \cite{Srid} in conjunction with a fast Fourier transform. However, the latter only represents a one-dimensional channel-wise approach to colour morphology. Another connection worth mentioning is the work \cite{morph_op_mat_im} of Burgeth, Welk, Feddern and Weickert, where root and power functions were used for symmetric positive semidefinite matrices instead of logarithm and exponential function. However, our approach does not require a positive semidefinite matrix, but works with any colour matrix, and preserves the so-called transitivity of grey-scale morphology.

This paper will be the next step from \cite{DGMM24} for transferring the LogSumExp approach to colour morphology with tonal vectors/matrices. The goal is to present a clear characterisation of this approach for tonal value matrices to close the gap in the reasoning of \cite{DGMM24} and to extend it with regard to certain properties. In this way, we will end up with a dilation operator that, with a few minor compromises, combines many of the advantageous properties of the other multidimensional approaches while preserving the associativity of the dilation, which, as far as we know, is not the case with the other multidimensional approaches. In addition, we will present a relaxed formulation of the operator, which addresses one of the primary limitations of the operator.

\section{General Definitions}

To make this paper self-contained, we want to use this section to clarify some basic definitions and terminology, using our previous paper \cite{DGMM24} as a guide. This is divided into two subsections, one for the morphological terms and one for the terms related to the Loewner order. 

We will start with the morphological concepts of dilation/erosion for grey-scale images and then extend this to colour images. In particular, we will discuss how colours can be represented and which algebraic structures we will consider for this paper and describe the one we have chosen in more detail.

In the second subsection, we look at why we need such an order and why we have chosen it. We show what we mean by a minimiser of a convex set with respect to the Loewner order and which properties it must fulfil. This will give us a general explanation for a matrix supremum. Finally, we will give a brief overview of how different norms lead to different matrix suprema.

\subsection{Colour Morphology}

We begin with a two-dimensional, discrete image domain $\Omega \subseteq \mathbb{Z}^2$ and a single-channel grey-scale image, which is described by a function $f: \Omega \rightarrow [0,255]$. In the case of non-flat morphology, the \textbf{structuring element (SE)} can be represented as a function $b: \mathbb{Z}^2 \rightarrow \mathbb{R} \cup \{ - \infty\}$ with 
\begin{align}
    b(\boldsymbol x) := 
    \begin{cases}
        \beta(\boldsymbol x),  & \boldsymbol x \in B_0, \\
		- \infty, & \text{otherwise},
    \end{cases}
    \quad B_0 \subset \mathbb{Z}^2,
    \label{SE}
\end{align}
where $\beta$ is itself a small grey-scale image that should have the same scaling for the grey-scale values as the input image. It defines the height of the SE or in other words which pixel will be ``prioritised" by the filtering. $B_0$ is a set centred at the origin. However, the origin of the SE needs not always to be at its centre. It determines the shape and size of the SE, as it specifies which elements are to be compared with each other. Frequently used shapes for this are squares, discs, diamonds, hexagons and crosses, see \cite{Boroujerdi,MMSoille}. In the case of a flat filter (flat morphology), it is simply the special case $\beta(\boldsymbol x) = 0$. Two of the most elementary operations of mathematical morphology are \textbf{dilation} 
\begin{align}
    (f\oplus b)(\boldsymbol x) := \max_{\boldsymbol u\in \mathbb{Z}^2}  {\{f(\boldsymbol x- \boldsymbol u)\: +\: b(\boldsymbol u) \}}, \quad \boldsymbol{x} \in \Omega,
    \label{dilation}
\end{align}
and \textbf{erosion}
\begin{align}
    (f\ominus b)(\boldsymbol x) := \min_{\boldsymbol u\in \mathbb{Z}^2}  {\{f(\boldsymbol x + \boldsymbol u)\: -\: b(\boldsymbol u) \}}, \quad \boldsymbol{x} \in \Omega,
    \label{erosion}
\end{align}
see Figure \ref{fig:greyscale_operations} for an example. In particular, these two operations are dual in the following sense with respect to complementation. Let the range of the grey-scale values of $f$ be given by the interval $[f_{\min},f_{\max}] \subseteq [0,255]$, where $f_{\min}$ is the lower limit and $f_{\max}$ the upper limit of the grey-scale values of $f$. We define the \textbf{complementary image} $f^c$ as
\begin{align}
    f^c(\boldsymbol{x}) := f_{\max} - f(\boldsymbol{x}) + f_{\min}, \quad \boldsymbol{x} \in  \Omega.
    \label{comp_image}
\end{align}
Then one has
\begin{align}
    \left( f^c \oplus \Breve{b}\right)^c (\boldsymbol{x}) &= f_{\max} -  \max_{\boldsymbol u\in \mathbb{Z}^2} \left\{f_{\max} - f(\boldsymbol{x} - \boldsymbol{u}) + \Breve{b}(\boldsymbol{u}) + f_{\min}\right\} + f_{\min} \nonumber \\ \nonumber
    &= f_{\max} - f_{\max} - \max_{\boldsymbol u\in \mathbb{Z}^2} \left\{- f(\boldsymbol{x} - \boldsymbol{u}) + \Breve{b}(\boldsymbol{u})\right\} - f_{\min} + f_{\min} \\ \nonumber
    &=  \min_{\boldsymbol u\in \mathbb{Z}^2} \left\{f(\boldsymbol{x} - \boldsymbol{u}) - \Breve{b}(\boldsymbol{u})\right\} = \min_{\boldsymbol u\in \mathbb{Z}^2} \left\{f(\boldsymbol{x} + \boldsymbol{u}) - b(\boldsymbol{u})\right\} \\ 
    &= (f\ominus b)(\boldsymbol x), \quad \boldsymbol{x} \in \Omega,
    \label{duality_dil-ero}
\end{align}
where $\Breve{b}(\boldsymbol{x}) = b(-\boldsymbol{x})$. This shows the duality between dilation and erosion.

However, with these two operations, many other operations can be defined that are of great interest in practice, e.g. \textbf{opening} $f \circ b = (f \ominus b) \oplus b$ and \textbf{closing} $f \bullet b = (f \oplus b) \ominus b$, see Figure \ref{fig:greyscale_operations}. In general, an opening will result in the deletion of minor, protruding components of an object. For example, this can be observed by the first column on the left side of the entrance in Figure \ref{fig:greyscale_operations}. The closing operation will fill small holes or thin intruding parts of the object. This may entail the destruction of smaller dark areas, such as the shadows of the columns on the outermost left side or the thin flat window illustrated in Figure \ref{fig:greyscale_operations}. These filtering operations can be employed in conjunction with varying sizes of the SE to compute size distributions in binary images, see \cite{Najman-Talbot} for further information about granulometry and related applications.

\begin{figure}[t]
\centering
\minipage{0.18\linewidth}
    \includegraphics[width=\linewidth]{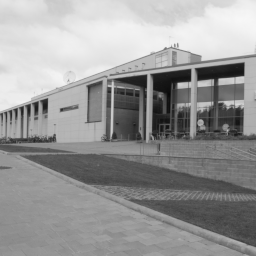}
\endminipage
\hspace{0.2cm}
\minipage{0.18\linewidth}
    \includegraphics[width=\linewidth]{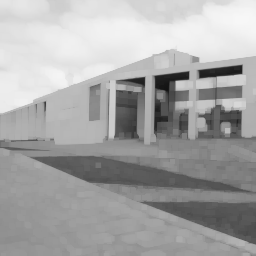}
\endminipage
\hspace{0.2cm}
\minipage{0.18\linewidth}
    \includegraphics[width=\linewidth]{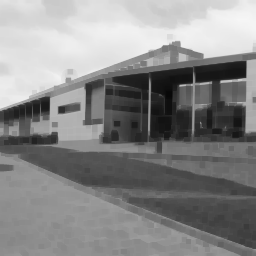}
\endminipage
\hspace{0.2cm}
\minipage{0.18\linewidth}
    \includegraphics[width=\linewidth]{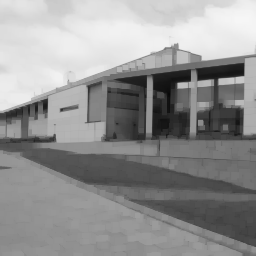}
\endminipage
\hspace{0.2cm}
\minipage{0.18\linewidth}
    \includegraphics[width=\linewidth]{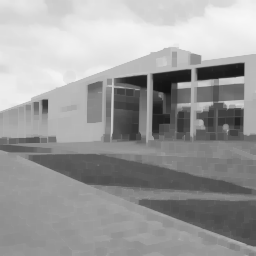}
\endminipage
\caption{\label{fig:greyscale_operations} Application of different morphological operators on a $256 \times 256$ grey-scale image with a $5 \times 5$ SE.
{\bf From left to right:} Downscaled original image from TAMPERE17 noise-free image database \cite{imageDB}, dilation, erosion, opening and closing.
}
\end{figure}

We turn now to our actual area of interest, namely colour morphology. This is similar to the grey-scale morphology already shown, with the difference that we no longer have just one channel, but three. There are many useful formats for expressing this, see \cite{SHA03}. A classic approach in this sense is the channel-by-channel processing of an image with the Red-Green-Blue (RGB) colour model, see e.g. \cite{Srid} for a recent example of channel-wise scheme implementation. However, instead of RGB vectors, we will use symmetric $2 \times 2$ matrices. 

For this we assume that the colour values are already normalised to the interval $[0,1]$ for the corresponding channels. First, we transfer this vector into the Hue-Chroma-Luminance (HCL) colour space by means of $M = \max\{R,G,B\}$, $m = \min\{R,G,B\}$, $C = M - m$, $L = \frac{1}{2} (M + m)$ and 
\begin{align*}
    H = 
    \begin{cases}
        \frac{G - B}{6C} \mod 1, & \textup{ if } M = R, \\
        \frac{B - R}{6C} + \frac{1}{3} \mod 1, & \textup{ if } M = G, \\
        \frac{R - G}{6C} + \frac{2}{3} \mod 1, & \textup{ if } M = B.
    \end{cases}
\end{align*}
Subsequently, the luminance $L$ is replaced with the modified luminance $\tilde{L} = 2L -1$ and the quantities $C$, $2\pi H$ and $\tilde{L}$ are regarded as radial, angular and axial coordinates of a cylindrical coordinate system, respectively. Since the transformation shown here maps each colour from the RGB colour space one-to-one to a colour in the HCL bi-cone, see Figure \ref{fig:bicone}, this represents a bijection onto the bi-cone, which in turn is interpreted with Cartesian coordinates by $x = C \cos(2\pi H)$, $y = C \sin(2\pi H)$ and $z = \tilde{L}$. Finally, we map these Cartesian coordinates onto a symmetric matrix in the following manner:
\begin{align*}
    \boldsymbol A := \frac{\sqrt{2}}{2} 
    \begin{pmatrix}
        z - y \hspace{0.5em}& x \\
        x \hspace{0.5em}& z + y
    \end{pmatrix},
\end{align*}
where the complete transformation process is a bijective mapping, see \cite{Loewner}.

\begin{figure}
    \centering
    \begin{minipage}{0.4\linewidth}
        \includegraphics[width=\linewidth]{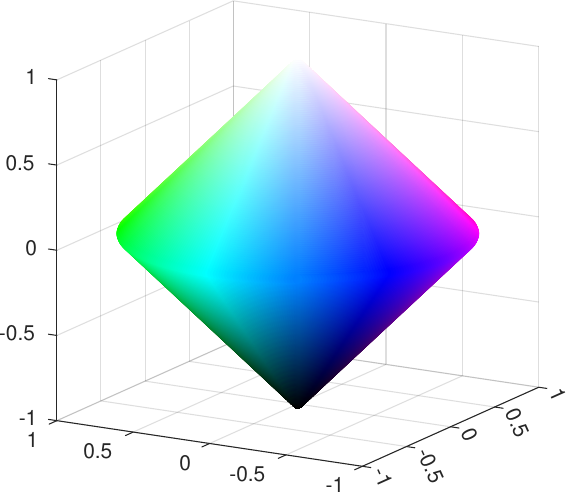}
    \end{minipage}
    \hspace{0.5cm}
    \begin{minipage}{0.4\linewidth}
        \includegraphics[width=\linewidth]{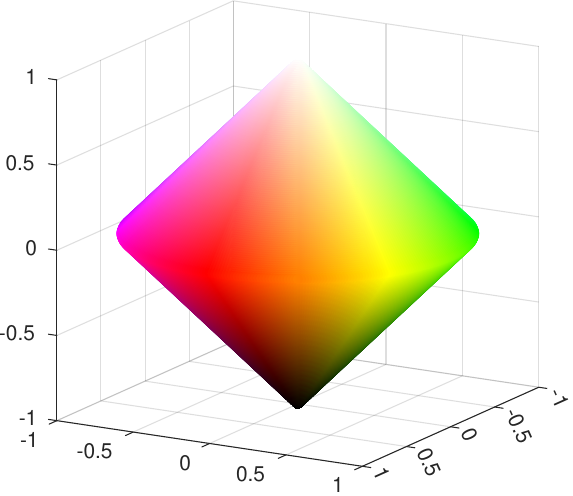}
    \end{minipage}
    \caption{The HCL bi-cone according to \cite{BurgethKleefeld}.}
    \label{fig:bicone}
\end{figure}

\subsection{The Loewner Order and the Decision of a Minimiser}

In the absence of a conventional ordering of the elements in either $\mathbb{R}^3$ or $\mathbb{R}^{2 \times 2}$, it becomes necessary to define the relative positions of two elements with respect to one another. In order to respond to this query, it is necessary to utilise a more lenient interpretation of the ordering relation, namely that of a semi-order. For this reason, we resort to the Loewner order and the colour morphological processing based on this, which was already presented in the work \cite{BurgethKleefeld} by Burgeth and Kleefeld. 

\begin{definition}
    We define the \textbf{set of symmetric matrices} as
    \begin{align}
        \textup{Sym}(n) := \{ \boldsymbol{A} = (a_{ij})_{i,j = 1, \dots, n} \in \mathbb R^{n \times n} : a_{ij} = a_{ji} \quad \forall i,j = 1, \dots, n \}, \quad n \in  \mathbb N,
        \label{symm_mat}
    \end{align}
    and the \textbf{set of positive semi-definite symmetric matrices} as 
    \begin{align}
        \textup{Sym}_+(n) := \{ \boldsymbol{A} \in \textup{Sym}(n) : \boldsymbol{x}^\textup T \boldsymbol{A} \boldsymbol{x} \geq 0 \quad \forall \boldsymbol{x} \in \mathbb R^n \textup{ with } \boldsymbol{x} \neq \boldsymbol{0} \}, \quad n  \in \mathbb N.
        \label{pos_semidef_mat}
        \end{align}
\end{definition}

\noindent One can see that $\textup{Sym}_+(n)$ is a convex cone, cf. \cite{convex_Analysis}. Thus, this cone induces a partial order in the space of symmetric matrices, which we define as follows.

\begin{definition}
    Let $\boldsymbol{A}, \boldsymbol{B} \in \textup{Sym}(n)$, $n \in \mathbb N$. We define the \textbf{Loewner semi-order} $\geq_\textup L$ as follows
    \begin{align}
        \boldsymbol{A} \geq_\textup L \boldsymbol{B} :\Longleftrightarrow \boldsymbol{A} - \boldsymbol{B} \in \textup{Sym}_+(n).
        \label{loewner_order}
    \end{align}
\end{definition}

One of the benefits of employing the Loewner order in conjunction with the HCL bi-cone is that the Cartesian coordinates of a given bi-cone vector can be utilised to represent each colour matrix $\boldsymbol{X}$ as a cone of a specified height within the bi-cone. In this representation, the radius of the cone of $\boldsymbol{X}$ is given by $\frac{1}{\sqrt{2}} \tr \boldsymbol{X}$. Therefore, a colour is larger than another colour in the Loewner sense if the cone of the larger colour contains the cone of the smaller colour as a subset. Since the radius is equal to the height of the resulting cones, we only need to find a larger base-circle for the corresponding cones, see Figure \ref{fig:Loewner_order_cones} for a visualising example. For more details,  we refer to the paper \cite{BurgethKleefeld} by Burgeth and Kleefeld.

\begin{figure}
    \centering
    \begin{minipage}{0.31\linewidth}
        \includegraphics[width=\linewidth]{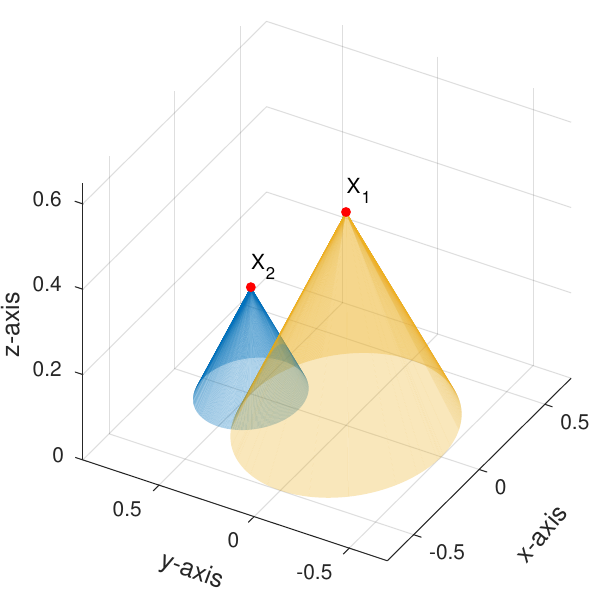}
    \end{minipage}
    \hspace{0.1cm}
    \begin{minipage}{0.31\linewidth}
        \includegraphics[width=\linewidth]{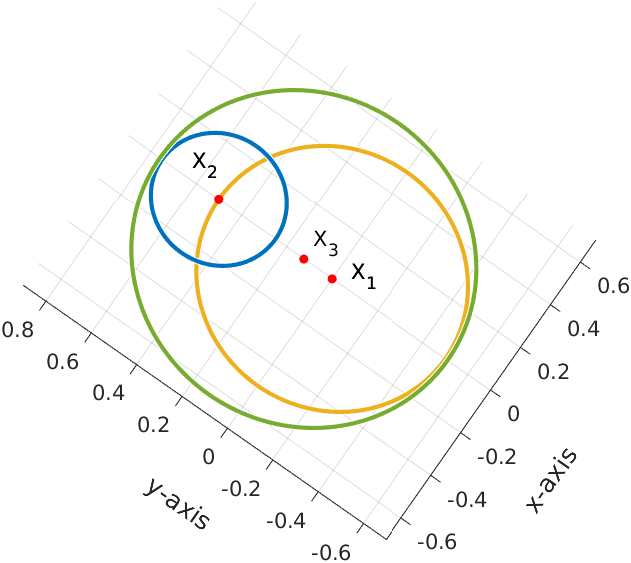}
    \end{minipage}
    \hspace{0.1cm}
    \begin{minipage}{0.31\linewidth}
        \includegraphics[width=\linewidth]{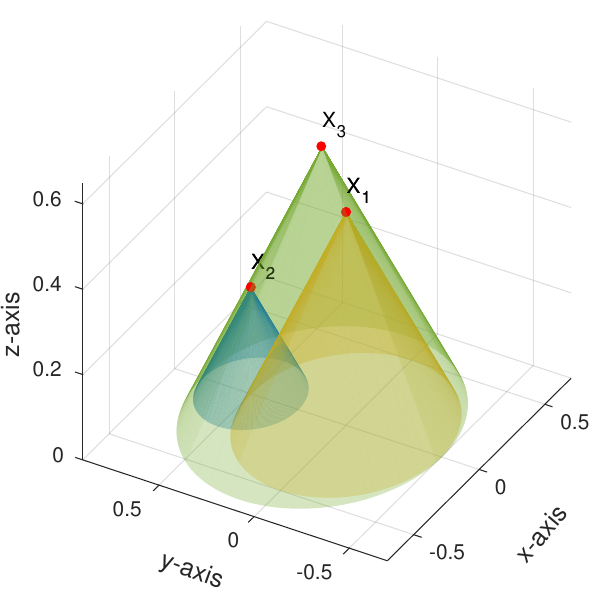}
    \end{minipage}
    \caption{Visualisation of the Loewner ordering in the HCL bi-cone for the three colour matrices $\boldsymbol{X}_i$, $i = 1, 2, 3$, with $\boldsymbol{X}_3 \geq_{\textup L} \boldsymbol{X}_1$ and $\boldsymbol{X}_3 \geq_{\textup L} \boldsymbol{X}_2$. 
    {\bf From left to right:} The cone representation of a yellow colour $\boldsymbol X_1$ and a cyan-blue colour $\boldsymbol X_2$, the base circles of $\boldsymbol X_1$ and $\boldsymbol X_2$ with a third base circle of a green colour $\boldsymbol X_3$ that encompasses both of them, the cone representation of all three colours.
    }
    \label{fig:Loewner_order_cones}
\end{figure}

The problem with this semi-order is that it is not a lattice order \cite{BorweinLewis} and therefore it is not possible to find a unique maximum or minimum. To get around this problem, one needs another property to select a uniquely determined maximum from the convex set of symmetric matrices $\mathcal{U}(\mathcal{X})$ that are upper bounds in the Loewner sense for the multi-set $\mathcal{X} = \{ \boldsymbol X_1, \boldsymbol X_2, \dots, \boldsymbol X_n \}$ of given symmetric real $2 \times 2$ data matrices with
\begin{align*}
    \mathcal{U}(\mathcal{X}) := \left\{ \boldsymbol Y \in \text{Sym}(2) : \boldsymbol X \leq_\textup L \boldsymbol Y \quad \forall \boldsymbol X \in \mathcal{X} \right\},
\end{align*}
see \cite{WelkQuantile} for this. In particular, $\mathcal{U}(\mathcal{X})$ is a convex set, since for any $\boldsymbol A, \boldsymbol B \in \mathcal{U}(\mathcal{X})$ one has
\begin{align*}
    \alpha \boldsymbol{A} + (1-\alpha) \boldsymbol{B} \geq_\textup L \alpha \boldsymbol{X} + (1-\alpha) \boldsymbol{X} = \boldsymbol{X} \quad \forall \boldsymbol{X} \in \mathcal{X}, \, \alpha \in [0,1]
\end{align*}
and the linear combination of two symmetric matrices always result in a symmetric matrix. For representation of the previously mentioned property, we use the function $\varphi : \mathcal{U}(\mathcal{X}) \rightarrow \mathbb{R}$, which should be convex on the set $\mathcal{U}(\mathcal{X})$ and Loewner-monotone, i.e.
\begin{align*}
    \varphi(\boldsymbol A) \leq \varphi(\boldsymbol B) \quad \Longleftrightarrow \quad \boldsymbol A \leq_\textup L \boldsymbol B. 
\end{align*}
To clarify this terminology of convexity, we introduce the following definition.

\begin{definition}
    Let $V_1, V_2$ be two real-valued vector spaces, $M \subset V_1$ a convex set and $K \subset V_2$ an order cone, i.e. $K := \{ \boldsymbol{x} \in V_2 : \boldsymbol{x} \geq 0 \}$ on the ordered vector space $(V_2, \geq)$. A function $\boldsymbol{f}: M \rightarrow V_2$ is called \textbf{convex on the set $M$} if and only if
    \begin{align}
        \alpha \boldsymbol{f}(\boldsymbol{x}) + (1 - \alpha) \boldsymbol{f}(\boldsymbol{y}) - \boldsymbol{f}(\alpha \boldsymbol{x} + (1-\alpha) \boldsymbol{y}) \in K \quad \forall \boldsymbol{x},\boldsymbol{y} \in M, \, \alpha\in [0,1].
        \label{convexity}
    \end{align}
\end{definition}

\noindent Furthermore, $\varphi$ should have a unique minimiser in $\mathcal{U}(\mathcal{X})$, thus we can define the $\varphi$-supremum of $\mathcal{X}$ as said minimiser:
\begin{align}
    \Sup_{\varphi}(\mathcal{X}) := \argmin_{\boldsymbol Y \in \mathcal{U}(\mathcal{X})} \varphi(\boldsymbol Y).
    \label{phi}
\end{align}
The matrix supremum introduced in the paper \cite{morph_op_mat_im} is based on the calculation of the trace and is therefore also called trace-supremum. That is, one has $\varphi(\boldsymbol{Y}) = \trace \boldsymbol{Y}$ and we get as supremum:
\begin{align*}
    \Sup_{\trace}(\mathcal{X}) := \argmin_{\boldsymbol Y \in \mathcal{U}(\mathcal{X})} \trace \boldsymbol Y.
\end{align*}
Based on the corresponding norms, the Frobenius supremum 
\begin{align*}
    \Sup_{2}(\mathcal{X}) := \argmin_{\boldsymbol Y \in \mathcal{U}(\mathcal{X})} \sum_{\boldsymbol X \in \mathcal{X}} \norm{\boldsymbol Y - \boldsymbol X}_2
\end{align*}
and the spectral supremum 
\begin{align*}
    \Sup_{\infty}(\mathcal{X}) := \argmin_{\boldsymbol Y \in \mathcal{U}(\mathcal{X})} \sum_{\boldsymbol X \in \mathcal{X}} \abs{\lambda_1(\boldsymbol Y - \boldsymbol X)},
\end{align*}
where $\lambda_1(\boldsymbol A)$ denotes the largest eigenvalue of $\boldsymbol A$, were derived in \cite{WelkQuantile}. At this point it should be noted that, in the case of positive semi-definite matrices, all three norms are Schatten norms $\norm{\,\cdot\,}_p$ for $p \in \{ 1, 2, \infty \}$.

In contrast to the above approaches utilising norms, which compare matrices that are upper bounds, we will adopt a more direct approach. To accomplish this, we will employ an approximation of the maximum function by Maslov \cite{Maslov} as a matrix-valued function to directly calculate a matrix that acts as an upper bound. The subsequent section aims to present a detailed characterisation of this approach.

\section{Characterisation of the Log-Exp-Supremum}

In this section we will construct a characterisation for the log-exp-supremum that depends solely on the input data, albeit not continuously. This characterisation is based on the spectral decomposition of the matrices under consideration and some properties of symmetric $2 \times 2$ matrices and the Rayleigh product. Given that this characterisation is based on spectral decomposition, this section will be divided into three parts. The first part will consider the case where there is a unique largest eigenvalue among the input data. The second part will consider the case where there is no unique largest eigenvalue. The third and final part will consider a general case that includes the other two cases and some associated properties. In particular, the log-exp-supremum is transitive, a property that will be demonstrated at the conclusion of this section.

\begin{definition}
    We define for a multi-set $\mathcal{X} = \{\boldsymbol X_1,\dots , \boldsymbol X_n\}$, $n \in \mathbb{N}$, of symmetric real $2 \times 2$ matrices the \textbf{log-exp-supremum (LES)} as 
\begin{align}
    \boldsymbol S := \Sup_{\textup{LE}}(\mathcal{X}) := \lim_{m \rightarrow \infty} \left( \frac{1}{m} \log \sum_{i=1}^n \exp(m \boldsymbol X_i) \right).
    \label{LES}
\end{align}
\end{definition}

\begin{remark}
    The log-exp-infimum follows by duality:
    \begin{align}
        \textup{Inf}_{\textup{LE}}(\mathcal{X}) := - \lim_{m \rightarrow \infty} \left( \frac{1}{m} \log \sum_{i=1}^n \exp(- m \boldsymbol X_i) \right).
        \label{LEI}
    \end{align}
\end{remark}

\noindent Furthermore, we explain the spectral decomposition of $\boldsymbol X_i$ by 
\begin{align}
    \begin{split}
        &\boldsymbol X_i = \lambda_i \boldsymbol u_i \boldsymbol u_i^\textup{T} + \mu_i \boldsymbol v_i \boldsymbol v_i^\textup{T}, \quad \lambda_i \geq \mu_i, \quad \langle \boldsymbol u_i, \boldsymbol v_i \rangle = 0, \quad \abs{\boldsymbol u_i} = 1 = \abs{\boldsymbol v_i}, \\
        &\boldsymbol u_i = (c_i,s_i)^\textup{T} , \quad \boldsymbol v_i = (-s_i,c_i)^\textup{T}, \quad  c_i = \cos(\varphi_i), \quad s_i = \sin(\varphi_i),\\
        &\varphi_i \in \left[- \frac{\pi}{2}, \frac{\pi}{2}\right], \quad i = 1,\dots,n,
    \end{split}
    \label{spectral_decomposition}
\end{align}
where $\lambda_i,\mu_i \in \mathbb{R}$ are the eigenvalues of $\boldsymbol X_i$ and $\boldsymbol u_i \perp \boldsymbol v_i$ are the normalised eigenvectors to the corresponding eigenvalues.

\subsection{LES for a unique largest Eigenvalue}

We will now assume that 
\begin{align}
    \text{$\lambda_1$ is the unique largest eigenvalue among all eigenvalues of $\mathcal{X}$}
    \label{lambda1_unique}
\end{align}
 and further that without loss of generality 
 \begin{align}
     \text{the eigenvectors of $\boldsymbol X_1$ have $\varphi_1 = 0$,}
     \label{rotation}
 \end{align}
i.e. they are axis-aligned and fulfil $\boldsymbol u_1 = \boldsymbol e_1 := (1,0)^\textup{T}$ and $\boldsymbol v_1 = \boldsymbol e_2 := (0,1)^\textup{T}$. Using these assumptions, we calculate the matrix exponential according to the general application of functions to diagonalisable matrices, see \cite{matrix_algebra}:
\begin{align*}
    \exp(m \boldsymbol X_i) &= \exp\left( m \lambda_i
    \begin{pmatrix}
        c_i^2 \hspace{0.5em}\hspace{0.5em}& c_i s_i \\
        c_i s_i \hspace{0.5em}& s_i^2
    \end{pmatrix}
    + m \mu_i 
    \begin{pmatrix}
        s_i^2 \hspace{0.5em}& -c_i s_i \\
        -c_i s_i \hspace{0.5em}& c_i^2
    \end{pmatrix}
    \right) \\
    &= 
    \begin{pmatrix}
        c_i^2 \mathrm{e}^{m \lambda_i} + s_i^2 \mathrm{e}^{m \mu_i} \hspace{0.5em}& c_i s_i \left( \mathrm{e}^{m\lambda_i} - \mathrm{e}^{m \mu_i} \right) \\
        c_i s_i \left( \mathrm{e}^{m\lambda_i} - \mathrm{e}^{m \mu_i} \right) \hspace{0.5em}& s_i^2 \mathrm{e}^{m \lambda_i} + c_i^2 \mathrm{e}^{m \mu_i}
    \end{pmatrix}
\end{align*}
and the summation over all $i = 1,\dots,n$ leads to
\begin{align*}
    \boldsymbol E_m &:= \sum_{i=1}^n \exp(m \boldsymbol X_i) \\
    &= 
    \begin{pmatrix}
        \mathrm{e}^{m \lambda_1} + \sum_{i=2}^n \left( c_i^2 \mathrm{e}^{m \lambda_i} + s_i^2 \mathrm{e}^{m \mu_i} \right) \hspace{0.5em}& \sum_{i=2}^n c_i s_i \left( \mathrm{e}^{m\lambda_i} - \mathrm{e}^{m \mu_i} \right) \\
        \sum_{i=2}^n c_i s_i \left( \mathrm{e}^{m\lambda_i} - \mathrm{e}^{m \mu_i} \right) \hspace{0.5em}& \mathrm{e}^{m \mu_1} + \sum_{i=2}^n \left( s_i^2 \mathrm{e}^{m \lambda_i} + c_i^2 \mathrm{e}^{m \mu_i} \right)
    \end{pmatrix}.
\end{align*}
Next, we determine the major eigenvector for this matrix when $m$ tends to infinity.

\begin{lemma}
    Let the conditions \eqref{lambda1_unique} and \eqref{rotation} be fulfilled according to \eqref{spectral_decomposition}. Then $\boldsymbol u_1 = (1,0)^\textup{T}$ represents the major eigenvector of $\boldsymbol E_m$ for $m \rightarrow \infty $.
\end{lemma}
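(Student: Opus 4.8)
The plan is to extract the dominant exponential growth rate. Assumption \eqref{lambda1_unique} guarantees that $\lambda_1$ strictly exceeds every other eigenvalue $\lambda_i,\mu_i$ for $i\ge 2$ as well as $\mu_1$, so the single summand $\mathrm{e}^{m\lambda_1}$ occurring in the $(1,1)$-entry of $\boldsymbol E_m$ outgrows every other term. Accordingly, I would divide $\boldsymbol E_m$ by the positive scalar $\mathrm{e}^{m\lambda_1}$ (which leaves eigenvectors unchanged) and study the entrywise limit as $m\to\infty$.

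In the $(1,1)$-entry every remaining summand carries a factor $\mathrm{e}^{m(\lambda_i-\lambda_1)}$ or $\mathrm{e}^{m(\mu_i-\lambda_1)}$ with a strictly negative exponent, so all of them vanish and the limit is $1$. Applying the same reasoning to the off-diagonal entry and to the $(2,2)$-entry, whose leading term $\mathrm{e}^{m\mu_1}$ also satisfies $\mu_1<\lambda_1$, shows that both tend to $0$. Hence I would conclude that $\mathrm{e}^{-m\lambda_1}\boldsymbol E_m \to \begin{pmatrix} 1 & 0 \\ 0 & 0 \end{pmatrix}$ entrywise, a matrix with the simple eigenvalues $1>0$ whose major eigenvector is $\boldsymbol e_1=(1,0)^{\textup T}=\boldsymbol u_1$.

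I expect the main obstacle to be turning this entrywise convergence into a genuine statement about eigenvectors, rather than leaving it at the heuristic level. Invoking continuity of the eigenprojection at a simple eigenvalue would suffice, but to keep the argument elementary I would instead compute the major eigenvector of the symmetric matrix $\boldsymbol E_m$ explicitly. Writing its entries as $a,b,d$, the major eigenvalue is $\lambda_{\max}=\tfrac{a+d}{2}+\sqrt{(\tfrac{a-d}{2})^2+b^2}$ with associated eigenvector proportional to $(\lambda_{\max}-d,\,b)^{\textup T}$. Since $a\sim\mathrm{e}^{m\lambda_1}$ dominates both $b$ and $d$, one has $\lambda_{\max}-d\sim\mathrm{e}^{m\lambda_1}$ while $b=o(\mathrm{e}^{m\lambda_1})$, so after normalisation the direction tends to $(1,0)^{\textup T}$.

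The only point requiring care in this direct computation is fixing the correct branch: because $a-d>0$ for all sufficiently large $m$, the major-eigenvector angle lies in $(-\tfrac{\pi}{4},\tfrac{\pi}{4})$, which rules out the spurious limit $(0,1)^{\textup T}$ and secures $\boldsymbol u_1=(1,0)^{\textup T}$ as the limit of the major eigenvector of $\boldsymbol E_m$.
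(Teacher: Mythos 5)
Your argument is correct, but it follows a genuinely different route from the paper. The paper fixes test vectors $\boldsymbol w_\varepsilon=\tfrac{1}{\sqrt{1+\varepsilon^2}}(1,\varepsilon)^{\textup T}$ and studies the Rayleigh product $R_{\boldsymbol E_m}(\boldsymbol w_\varepsilon)$, showing that $\lim_{m\to\infty}R_{\boldsymbol E_m}(\boldsymbol w_\varepsilon)/R_{\boldsymbol E_m}(\boldsymbol w_0)=\tfrac{1}{1+\varepsilon^2}$ attains its maximum at $\varepsilon=0$, and then refines this variational picture in Lemma~3 to get the decay rate $\varepsilon(m)\sim\mathrm{e}^{m(\lambda-\lambda_1)}$. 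You instead rescale by the scalar $\mathrm{e}^{-m\lambda_1}$, observe the entrywise limit $\operatorname{diag}(1,0)$, and read off the major eigenvector from the closed form $(\lambda_{\max}-d,\,b)^{\textup T}$ of a symmetric $2\times2$ matrix. Your version has a concrete advantage in rigour: the paper's computation of $\lim_m R_{\boldsymbol E_m}(\boldsymbol w_\varepsilon)/R_{\boldsymbol E_m}(\boldsymbol w_0)$ for each \emph{fixed} $\varepsilon$ only shows that the pointwise limit of the ratio is maximised at $\varepsilon=0$; concluding that the actual maximiser $\varepsilon(m)$ of $R_{\boldsymbol E_m}$ tends to $0$ requires an extra (uniformity or explicit-rate) step, which the paper only supplies afterwards in Lemma~3. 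Your explicit eigenvector formula gives the convergence of the eigenvector direction directly, with the branch issue ($a-d>0$ for large $m$) correctly handled, and it would even yield the asymptotic rate $b/(\lambda_{\max}-d)=\mathcal O(\mathrm{e}^{m(\lambda-\lambda_1)})$ as a free by-product. What the paper's Rayleigh-product setup buys in exchange is reusable machinery: the same $R(\varepsilon)$ and $\overline R(\varepsilon)$ computations carry Lemmas~2--5, whereas your normalisation trick is tailored to this one statement.
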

\begin{proof}
    We declare the Rayleigh product as
    \begin{align*}
        R_{\boldsymbol E_m}(\boldsymbol w) := \langle \boldsymbol w, \boldsymbol E_m \boldsymbol w \rangle, \quad \boldsymbol w \in \mathbb{R}^2,
    \end{align*}
    and consider it for vectors 
    \begin{align*}
        \boldsymbol w_\varepsilon = \frac{1}{\sqrt{1 + \varepsilon^2}} 
        \begin{pmatrix}
            1 \\
            \varepsilon
        \end{pmatrix}, \quad \abs{\varepsilon} \ll 1,
    \end{align*}
    which vary in the second component with $\varepsilon$ around $\boldsymbol u_1$. We determine
    \begin{align}
        R_{\boldsymbol E_m}(\boldsymbol w_\varepsilon) = \frac{1}{1 + \varepsilon^2} \left( \mathrm{e}^{m \lambda_1} + \varepsilon^2 \mathrm{e}^{m \mu_1} + \sum_{i=2}^n \left( (c_i + \varepsilon s_i)^2 \mathrm{e}^{m \lambda_i} + (s_i - \varepsilon c_i)^2 \mathrm{e}^{m \mu_i} \right) \right).
        \label{M_EW_E}
    \end{align}
    If $m \rightarrow \infty$, we observe
    \begin{align}
        \lim_{m \rightarrow \infty} \frac{\varepsilon^2 \mathrm{e}^{m \mu_1} + \sum_{i=2}^n \left( (c_i + \varepsilon s_i)^2 \mathrm{e}^{m \lambda_i} + (s_i - \varepsilon c_i)^2 \mathrm{e}^{m \mu_i} \right)}{\mathrm{e}^{m \lambda_1}} = 0,
        \label{lim_ev_1}
    \end{align}
    because we assumed that $\lambda_1$ is the unique largest eigenvalue and $\abs{\varepsilon} \ll 1$. By considering 
    \begin{align*}
        R_{\boldsymbol E_m}(\boldsymbol w_0) = \mathrm{e}^{m \lambda_1} + \sum_{i=2}^n \left( c_i^2 \mathrm{e}^{m \lambda_i} + s_i^2 \mathrm{e}^{m \mu_i} \right),
    \end{align*}
    we obtain
    \begin{align*}
        &\lim_{m \rightarrow \infty} \frac{R_{\boldsymbol E_m}(\boldsymbol w_\varepsilon)}{R_{\boldsymbol E_m}(\boldsymbol w_0)} \\
        & \quad = \frac{1}{1 + \varepsilon^2} \lim_{m \rightarrow \infty} \frac{\mathrm{e}^{m \lambda_1} + \varepsilon^2 \mathrm{e}^{m \mu_1} + \sum_{i=2}^n \left( (c_i + \varepsilon s_i)^2 \mathrm{e}^{m \lambda_i} + (s_i - \varepsilon c_i)^2 \mathrm{e}^{m \mu_i} \right)}{\mathrm{e}^{m \lambda_1} + \sum_{i=2}^n \left( c_i^2 \mathrm{e}^{m \lambda_i} + s_i^2 \mathrm{e}^{m \mu_i} \right)} \\
        & \quad = \frac{1}{1 + \varepsilon^2} \lim_{m \rightarrow \infty} \Bigg(  1 \\
        &\qquad + \underbrace{ \frac{\varepsilon^2 \mathrm{e}^{m \mu_1} + \sum_{i=2}^n \left( (2 \varepsilon c_i s_i + \varepsilon^2 s_i^2) \mathrm{e}^{m \lambda_i} + (-2 \varepsilon c_i s_i + \varepsilon^2 c_i^2) \mathrm{e}^{m \mu_i} \right)}{\mathrm{e}^{m \lambda_1} + \sum_{i=2}^n \left( c_i^2 \mathrm{e}^{m \lambda_i} + s_i^2 \mathrm{e}^{m \mu_i} \right)}}_{=: f(m) = \frac{f_1(m)}{f_2(m)}} \Bigg).
    \end{align*}
    We know that $f(m) \geq 0$ for all $m > 0$, since $\lambda_i \geq \mu_i$ for $i = 2,\dots,n$. Furthermore, we set 
    \begin{align*}
        g(m) := \sum_{i=2}^n \left( (c_i + \varepsilon s_i)^2 \mathrm{e}^{m \lambda_i} + (s_i - \varepsilon c_i)^2 \mathrm{e}^{m \mu_i} \right)
    \end{align*}
    and estimate $f_1(m) \leq \varepsilon^2 \mathrm{e}^{m \mu_1} + g(m)$. We therefore have the following estimation:
    \begin{align*}
        0 \leq \lim_{m \rightarrow \infty} f(m) = \lim_{m \rightarrow \infty} \frac{f_1(m)}{f_2(m)} \leq \lim_{m \rightarrow \infty} \frac{\varepsilon^2 \mathrm{e}^{m \mu_1} + g(m)}{f_2(m)} \overset{\eqref{lim_ev_1}}{=} 0,
    \end{align*}
    which leads to
    \begin{align*}
        \lim_{m \rightarrow \infty} \frac{R_{\boldsymbol E_m}(\boldsymbol w_\varepsilon)}{R_{\boldsymbol E_m}(\boldsymbol w_0)} = \frac{1}{1 + \varepsilon^2} (1 + 0) = \frac{1}{1 + \varepsilon^2}.
    \end{align*}
    This means that the function $\lim\limits_{m \rightarrow \infty} \frac{R_{\boldsymbol E_m}(\boldsymbol w_\varepsilon)}{R_{\boldsymbol E_m}(\boldsymbol w_0)}$ has a local maximum at $\varepsilon = 0$ and thus concludes the proof. 
    \hfill $\square$
\end{proof}

To interpret this lemma, we declare the major eigenvector of $\boldsymbol E_m$ as $\Bar{\boldsymbol u}_m$. Then we obtain from the lemma the equation $\lim\limits_{m \rightarrow 0} \Bar{\boldsymbol u}_m = \boldsymbol u_1$ and since the eigenvectors of $\frac{1}{m} \log \boldsymbol E_m$ are the same as those of $\boldsymbol E_m$, it also follows that $\boldsymbol u_1$ is the major eigenvector of $\boldsymbol S$. Since all considered matrices are $2 \times 2$ matrices, we can deduce the second normalised eigenvector of $\boldsymbol S$ as the vector that is orthogonal to the first eigenvector $\boldsymbol u_1 = (1,0)^\textup{T}$, namely $\boldsymbol v_1 = (0,1)^\textup{T}$. As the next step we will determine the eigenvalue corresponding to $\boldsymbol u_1$.

\begin{lemma}
    Let the conditions of Lemma 1 be fulfilled. Then the eigenvector $\boldsymbol u_1$ of $\boldsymbol S$ according to \eqref{LES} has the corresponding eigenvalue $\lambda_1$.
\end{lemma}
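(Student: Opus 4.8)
The plan is to leverage the fact that Lemma~1 has already determined \emph{both} eigenvectors of $\boldsymbol S$: it gives $\boldsymbol u_1 = (1,0)^{\textup T}$ as the major eigenvector, and since $\boldsymbol S \in \textup{Sym}(2)$ the remaining eigenvector must be the orthogonal vector $\boldsymbol v_1 = (0,1)^{\textup T}$. Consequently $\boldsymbol S$ is diagonal in the standard basis, and the eigenvalue attached to $\boldsymbol u_1$ is precisely the \emph{larger} of the two eigenvalues of $\boldsymbol S$. It therefore suffices to compute the major eigenvalue of $\boldsymbol S$ and to verify that it equals $\lambda_1$.

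Since the eigenvalues of a symmetric matrix depend continuously on its entries and $\boldsymbol S = \lim_{m\to\infty}\frac1m\log\boldsymbol E_m$, the major eigenvalue of $\boldsymbol S$ is the limit of the major eigenvalues of $\frac1m\log\boldsymbol E_m$. By the spectral mapping theorem the eigenvalues of $\frac1m\log\boldsymbol E_m$ are $\frac1m\log$ applied to the eigenvalues of $\boldsymbol E_m$; writing $\nu_m$ for the larger eigenvalue of $\boldsymbol E_m$, the quantity to be determined is thus $\lim_{m\to\infty}\frac1m\log\nu_m$.

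Next I would compute $\nu_m$ directly from the explicit form of $\boldsymbol E_m$. Denoting its $(1,1)$, $(1,2)$ and $(2,2)$ entries by $a_m$, $b_m$ and $d_m$, the closed-form eigenvalue formula for a symmetric $2\times 2$ matrix yields
\begin{align*}
    \nu_m = \frac{a_m + d_m}{2} + \sqrt{\left(\frac{a_m - d_m}{2}\right)^2 + b_m^2}\,.
\end{align*}
The decisive observation is that assumption \eqref{lambda1_unique} forces $\mathrm{e}^{m\lambda_1}$ to dominate everything: every other exponential occurring in $a_m$, $b_m$, $d_m$ carries an exponent strictly smaller than $\lambda_1$ (in particular $\mu_1 < \lambda_1$), so that $a_m = \mathrm{e}^{m\lambda_1}\bigl(1+o(1)\bigr)$ while $b_m$ and $d_m$ are both $o(\mathrm{e}^{m\lambda_1})$. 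Factoring $a_m$ out of the eigenvalue formula, the bracketed expression tends to $\tfrac12 + \tfrac12 = 1$, whence $\nu_m = \mathrm{e}^{m\lambda_1}\bigl(1 + o(1)\bigr)$ and $\frac1m\log\nu_m \to \lambda_1$, which is exactly the assertion.

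The main obstacle is the bookkeeping of the asymptotics: one must confirm that the subdominant off-diagonal contribution $b_m^2$ under the square root and the $(2,2)$ term $d_m$ do not disturb the leading logarithmic order. This follows cleanly from the strict spectral gap between $\lambda_1$ and all remaining eigenvalues, which makes every competing exponential negligible after division by $\mathrm{e}^{m\lambda_1}$. A secondary, more technical point is to justify that eigenvalue extraction commutes with the limit $m\to\infty$; this is immediate from the continuity of the eigenvalues as functions of the matrix entries, once one grants that the limit defining $\boldsymbol S$ exists.
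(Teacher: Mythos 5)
Your argument is correct, but it reaches the conclusion by a genuinely different route than the paper. The paper stays inside the Rayleigh-product framework set up for Lemma~1: it identifies the major eigenvalue of $\boldsymbol E_m$ with the value $R_{\boldsymbol E_m}(\boldsymbol w_\varepsilon)$ of the Rayleigh product at its maximising test vector, factors $\mathrm{e}^{m\lambda_1}$ out of that scalar expression, and shows that the remaining logarithmic correction terms vanish as $m\to\infty$. You instead bypass the Rayleigh product entirely and read the major eigenvalue of $\boldsymbol E_m$ off the closed-form formula $\nu_m = \frac{a_m+d_m}{2} + \sqrt{\bigl(\frac{a_m-d_m}{2}\bigr)^2 + b_m^2}$, then use the spectral gap from \eqref{lambda1_unique} to show $a_m = \mathrm{e}^{m\lambda_1}(1+o(1))$ dominates while $b_m, d_m = o(\mathrm{e}^{m\lambda_1})$, so that the bracket tends to $1$ and $\frac1m\log\nu_m\to\lambda_1$. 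Your version buys a more self-contained and arguably more rigorous derivation for the $2\times2$ case: it does not rely on the slightly informal step in the paper where the supremum of the Rayleigh quotient over the restricted family $\boldsymbol w_\varepsilon$ is identified with the largest eigenvalue. The paper's version buys uniformity of method — the same Rayleigh-product expression \eqref{M_EW_E} is reused in Lemmas~3 and~4 to extract finer asymptotics (the rate of $\varepsilon(m)$ and the second eigenvalue), which the trace-and-determinant formula would make considerably messier. Both proofs hinge on the same core fact, namely the strict spectral gap guaranteed by \eqref{lambda1_unique}, and your handling of the two technical points (commuting eigenvalue extraction with the limit via continuity of eigenvalues, and the subdominance of $b_m^2$ under the square root) is sound.
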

\begin{proof}
    Since the largest eigenvalue of $\boldsymbol E_m$ is just expressed by the value of the Rayleigh product at the point where it takes its maximum, i.e. at $\boldsymbol w_\varepsilon$, as we have established in the proof of Lemma 2, we obtain as major eigenvalue \eqref{M_EW_E}. This means that the major eigenvalue of $\frac{1}{m} \log \boldsymbol E_m$ is
    \begin{align*}
        &\frac{1}{m} \log R_{\boldsymbol E_m}(\boldsymbol w_\varepsilon) \\
        &= \frac{1}{m} \log \bigg[ \frac{1}{1 + \varepsilon^2} \Big( \mathrm{e}^{m \lambda_1} + \varepsilon^2 \mathrm{e}^{m \mu_1} + \sum_{i=2}^n \big( (c_i + \varepsilon s_i)^2 \mathrm{e}^{m \lambda_i} + (s_i - \varepsilon c_i)^2 \mathrm{e}^{m \mu_i} \big) \Big) \bigg] \\
        &= \frac{1}{m} \log \bigg[ \frac{\mathrm{e}^{m \lambda_1}}{1 + \varepsilon^2} \Big( 1 + \varepsilon^2 \mathrm{e}^{m (\mu_1 - \lambda_1)} + \sum_{i=2}^n \big( (c_i + \varepsilon s_i)^2 \mathrm{e}^{m (\lambda_i - \lambda_1)} \\
        & \quad + (s_i - \varepsilon c_i)^2 \mathrm{e}^{m (\mu_i - \lambda_1)} \big) \Big) \bigg] \\
        &= \frac{1}{m} \log \Big( 1 + \underbrace{\varepsilon^2 \mathrm{e}^{m (\mu_1 - \lambda_1)}}_{\rightarrow 0 ~ (m \rightarrow \infty)} + \sum_{i=2}^n \underbrace{\big( (c_i + \varepsilon s_i)^2 \mathrm{e}^{m (\lambda_i - \lambda_1)} + (s_i - \varepsilon c_i)^2 \mathrm{e}^{m (\mu_i - \lambda_1)} \big)}_{\rightarrow 0 ~ (m \rightarrow \infty)} \Big) \\
        &\quad + \frac{1}{m} \log \mathrm{e}^{m \lambda_1} - \frac{1}{m} \log(1 + \varepsilon^2),
    \end{align*}
    where the second and third term in the first log function vanish in the limit because $\mu_1,\lambda_i,\mu_i$ for $i = 2,\dots,n$ are smaller than $\lambda_1$. Thus the major eigenvalue of $\boldsymbol S$ is just 
    \begin{align*}
        \lim_{m \rightarrow \infty} \left( \frac{1}{m} \log R_{\boldsymbol E_m}(\boldsymbol w_\varepsilon) \right) = \lim_{m \rightarrow \infty} \left( \frac{1}{m} \log \mathrm{e}^{m \lambda_1} \right) = \lambda_1.
    \end{align*}
    \hfill $\square$
\end{proof}

The final step in the characterisation of $\boldsymbol S$ is to calculate the second eigenvalue corresponding to $\boldsymbol v_1$, but for this we need a refinement of our calculations in the proof of Lemma 1. To achieve this, we establish the following lemma:

\begin{lemma}
    Let the conditions of Lemma 1 be fulfilled and additionally $\varepsilon(m)$ be the value of $\varepsilon$ for which $\boldsymbol v_m = \boldsymbol w_\varepsilon$ holds for every $m > 0$. Then $\lim\limits_{m \rightarrow \infty} \varepsilon(m) = 0$ with $\varepsilon(m) \sim \mathrm{e}^{m(\lambda - \lambda_1)}$, where $\lambda = \max(\mu_1,\lambda_2,\dots,\lambda_n)$. 
\end{lemma}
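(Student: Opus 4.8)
The plan is to exploit that $\boldsymbol E_m$ is a symmetric $2 \times 2$ matrix, so that its eigenvector problem reduces to a single quadratic equation in the slope $\varepsilon$. Writing
\[
    \boldsymbol E_m = \begin{pmatrix} a_m & b_m \\ b_m & d_m \end{pmatrix},
\]
with $a_m$, $b_m$, $d_m$ read off from the explicit form of $\boldsymbol E_m$ established above, I would require that $\boldsymbol w_\varepsilon \propto (1,\varepsilon)^\textup{T}$ be an eigenvector. Comparing the two components of $\boldsymbol E_m (1,\varepsilon)^\textup{T} = \kappa (1,\varepsilon)^\textup{T}$ and eliminating the eigenvalue $\kappa$ yields the quadratic
\[
    b_m\, \varepsilon^2 + (a_m - d_m)\, \varepsilon - b_m = 0 .
\]
Its two roots have product $-1$, which is just the orthogonality of the two eigenvectors; the branch corresponding to $\boldsymbol v_m = \boldsymbol w_{\varepsilon(m)}$ (the eigenvector approaching $\boldsymbol u_1$) is the one tending to $0$, which after rationalising reads
\[
    \varepsilon(m) = \frac{2 b_m}{(a_m - d_m) + \sqrt{(a_m - d_m)^2 + 4 b_m^2}} = \frac{b_m}{a_m - d_m}\bigl(1 + o(1)\bigr),
\]
the last equality holding once $\abs{a_m - d_m} \gg \abs{b_m}$.

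The second step is the asymptotic analysis of this ratio as $m \to \infty$. In $a_m - d_m$ the term $\mathrm{e}^{m \lambda_1}$ dominates, since by \eqref{lambda1_unique} every other exponent $\mu_1, \lambda_i, \mu_i$ (for $i \geq 2$) is strictly smaller than $\lambda_1$; hence $a_m - d_m = \mathrm{e}^{m \lambda_1}(1 + o(1))$. In the off-diagonal entry $b_m = \sum_{i=2}^n c_i s_i (\mathrm{e}^{m \lambda_i} - \mathrm{e}^{m \mu_i})$ the largest exponent that occurs is $\max_{i \geq 2} \lambda_i \leq \lambda$, so $b_m = O(\mathrm{e}^{m \lambda})$. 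Dividing gives $\varepsilon(m) = O(\mathrm{e}^{m(\lambda - \lambda_1)})$, and since $\lambda < \lambda_1$ this already forces $\varepsilon(m) \to 0$, which is the assertion; the stated exponential rate is attained exactly when $\max_{i \geq 2} \lambda_i = \lambda$ and the coefficient $\sum_{i : \lambda_i = \lambda} c_i s_i$ is nonzero.

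The step I expect to be the main obstacle is controlling $b_m$, which drives the whole estimate but can be small or even vanish for special angle configurations --- for instance if every $\boldsymbol X_i$ attaining the top exponent is itself axis-aligned, so that $c_i s_i = 0$. In that degenerate situation the leading term of $b_m$ drops to a smaller exponent and $\varepsilon(m)$ decays strictly faster than $\mathrm{e}^{m(\lambda - \lambda_1)}$; this is exactly why the statement is safest to read as a bound on the decay rate rather than a sharp equivalence, and it is the form that is actually needed to make the later computation of the minor eigenvalue $\lambda$ go through. I would additionally keep track of the sign of $b_m$ (which only fixes the sign of $\varepsilon(m)$, not its magnitude) and of the uniformity of the $o(1)$ remainders, but these are routine once the dominant exponents have been identified.
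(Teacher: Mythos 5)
Your proof is correct but follows a genuinely different route from the paper. The paper Taylor-expands the Rayleigh quotient $R(\varepsilon)=R_{\boldsymbol E_m}(\boldsymbol w_\varepsilon)$ to second order about $\varepsilon=0$, computes the asymptotics of $R'(0)=\Theta(\mathrm{e}^{m\lambda})$ and $R''(0)=-2\mathrm{e}^{m\lambda_1}(1+\mathcal O(\mathrm{e}^{m(\lambda-\lambda_1)}))$, and locates the critical point at $\varepsilon\approx -R'(0)/R''(0)\sim\mathrm{e}^{m(\lambda-\lambda_1)}$. You instead exploit that $\boldsymbol E_m$ is symmetric $2\times2$ and solve the eigenvector condition exactly, obtaining the closed form
\begin{align*}
\varepsilon(m)=\frac{2b_m}{(a_m-d_m)+\sqrt{(a_m-d_m)^2+4b_m^2}},
\end{align*}
whose asymptotics follow from $a_m-d_m=\mathrm{e}^{m\lambda_1}(1+o(1))$ and $b_m=\mathcal O(\mathrm{e}^{m\lambda})$. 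Your version buys exactness --- there is no truncation error $\mathcal O(\varepsilon^3)$ to control, and the identification of the correct branch via the root product $-1$ is clean --- whereas the paper's Taylor route generalises more readily to situations without an explicit eigenvector formula. Your caveat about degenerate angle configurations is well taken and in fact also afflicts the paper's own argument: the claim $R'(0)=\Theta(\mathrm{e}^{m\lambda})$ fails when $\lambda=\mu_1>\max_{i\ge2}\lambda_i$ (since $\mu_1$ does not appear in $R'(0)=2\sum_{i=2}^n c_is_i(\mathrm{e}^{m\lambda_i}-\mathrm{e}^{m\mu_i})$ at all) or when every index attaining the top exponent has $c_is_i=0$; in those cases only the one-sided bound $\varepsilon(m)=\mathcal O(\mathrm{e}^{m(\lambda-\lambda_1)})$ survives, which, as you note, is all that the subsequent use in Lemma 4 actually requires.
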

\begin{proof}
    In the proof of Lemma 1 we have already seen that $\varepsilon \rightarrow 0$ for $m \rightarrow \infty$ and that the Rayleigh product fulfils 
    \begin{align*}
        R(\varepsilon) := R_{\boldsymbol E_m}(\boldsymbol w_\varepsilon) = \langle \boldsymbol w_\varepsilon, \boldsymbol E_m \boldsymbol w_\varepsilon \rangle \in C^\infty(\mathbb{R}).
    \end{align*}
    Since the critical point is at $\varepsilon \rightarrow 0$, we want to approximate this point by a Taylor approximation of order two:
    \begin{align}
        T_2R(\varepsilon,0) = R(0) + R'(0) \varepsilon + \frac{1}{2} R''(0) \varepsilon^2.
        \label{Taylor}
    \end{align}
    For this, we determine
    \begin{alignat*}{3}
        &R(0) &&= \mathrm{e}^{m \lambda_1} + \sum_{i=2}^n \left( c_i^2 \mathrm{e}^{m \lambda_i} + s_i^2 \mathrm{e}^{m \mu_i} \right), \\
        &R'(\varepsilon) &&= - \frac{2\varepsilon}{(1+\varepsilon^2)^2} \Big(R(\varepsilon)(1+\varepsilon^2) \Big) + \frac{1}{1+\varepsilon^2} \bigg( 2\varepsilon \mathrm{e}^{m \mu_1} \\
        & && \quad + \sum_{i=2}^n \Big( \big( 2 s_i c_i + 2 \varepsilon s_i^2 \big) \mathrm{e}^{m \lambda_i} + \big( - 2 s_i c_i + 2 \varepsilon c_i^2 \big) \mathrm{e}^{m \mu_i} \Big) \bigg) \\
        & &&= - \frac{2\varepsilon}{1+\varepsilon^2} R(\varepsilon) + \frac{2\varepsilon}{1+\varepsilon^2} \mathrm{e}^{m \mu_1} + \sum_{i=2}^n \frac{2(c_i + \varepsilon s_i)s_i}{1+\varepsilon^2} \mathrm{e}^{m \lambda_i} \\
        & && \quad + \sum_{i=2}^n \frac{2(-s_i + \varepsilon c_i) c_i}{1+\varepsilon^2} \mathrm{e}^{m \mu_i}, \\
        &R'(0) &&= \sum_{i=2}^n 2c_i s_i \mathrm{e}^{m \lambda_i} - \sum_{i=2}^n 2 s_i c_i \mathrm{e}^{m \mu_i} = 2 \sum_{i=2}^n c_i s_i \left( \mathrm{e}^{m \lambda_i} - \mathrm{e}^{m \mu_i} \right), \\
    \end{alignat*}
    \begin{alignat*}{3}
        &R''(\varepsilon) &&= \frac{-2 (1 + \varepsilon^2) + 4 \varepsilon^2}{(1+\varepsilon^2)^2} R(\varepsilon) - \frac{2\varepsilon}{1+\varepsilon^2} R'(\varepsilon) \\
        & && \quad + \frac{-2(1+\varepsilon^2) + 4\varepsilon^2}{(1+\varepsilon^2)^2} \mathrm{e}^{m \mu_1} + \sum_{i=2}^n \frac{2 s_i^2 (1+\varepsilon^2) - 4\varepsilon (c_i + \varepsilon s_i) s_i}{(1+\varepsilon^2)^2} \mathrm{e}^{m \lambda_i} \\
        & && \quad + \sum_{i=2}^n \frac{2 c_i^2 (1+\varepsilon^2) - 4\varepsilon (-s_i + \varepsilon c_i) c_i}{(1+\varepsilon^2)^2} \mathrm{e}^{m \mu_i} \\
        & &&= - \frac{2(1-\varepsilon^2)}{(1+\varepsilon^2)^2} R(\varepsilon) - \frac{2\varepsilon}{1+\varepsilon^2} R'(\varepsilon) + \frac{2(1-\varepsilon^2)}{(1+\varepsilon^2)^2} \mathrm{e}^{m \mu_1} \\
        & && \quad + \sum_{i=2}^n \frac{2(1-\varepsilon^2) s_i^2 - 4\varepsilon c_i s_i}{(1+\varepsilon^2)^2} \mathrm{e}^{m \lambda_i} + \sum_{i=2}^n \frac{2(1-\varepsilon^2) c_i^2 + 4\varepsilon c_i s_i}{(1+\varepsilon^2)^2} \mathrm{e}^{m \mu_i}, \\
        &R''(0) &&= -2R(0) + 2 \mathrm{e}^{m \mu_1} + \sum_{i=2}^n 2 s_i^2 \mathrm{e}^{m \lambda_i} + \sum_{i=2}^n 2 c_i^2 \mathrm{e}^{m \mu_i} \\
        & &&= -2\mathrm{e}^{m \lambda_1} - 2 \sum_{i=2}^n \left( c_i^2 \mathrm{e}^{m \lambda_i} + s_i^2 \mathrm{e}^{m \mu_i} \right) + 2 \mathrm{e}^{m \mu_1} + 2 \sum_{i=2}^n \left( s_i^2 \mathrm{e}^{m \lambda_i} + c_i^2 \mathrm{e}^{m \mu_i} \right)\\
        & &&= -2\mathrm{e}^{m \lambda_1} + 2 \mathrm{e}^{m \mu_1} - 2 \sum_{i=2}^n \left( c_i^2 - s_i^2 \right) \mathrm{e}^{m \lambda_i} - 2 \sum_{i=2}^n \left( s_i^2 - c_i^2 \right) \mathrm{e}^{m \mu_i} 
    \end{alignat*}    
    and by setting $\lambda := \max(\mu_1,\lambda_2,\dots,\lambda_n)$, we obtain
    \begin{alignat*}{3}
        &R(0) &&= \mathrm{e}^{m \lambda_1} \left( 1 + \sum_{i=2}^n \left( c_i^2 \mathrm{e}^{m (\lambda_i - \lambda_1)} + s_i^2 \mathrm{e}^{m (\mu_i - \lambda_1)} \right) \right) \\
        & &&= \mathrm{e}^{m \lambda_1} \left( 1 + \mathcal{O} \left( \mathrm{e}^{m (\lambda - \lambda_1)} \right) \right), \\
        &R'(0) &&= \Theta\left( \mathrm{e}^{m \lambda} \right), \\
        &R''(0) &&= -2 \mathrm{e}^{m \lambda_1} \bigg( 1 - \mathrm{e}^{m (\mu_1 - \lambda_1)}+ \sum_{i=2}^n \left( c_i^2 - s_i^2 \right) \mathrm{e}^{m (\lambda_i - \lambda_1)} \\
        & && \quad + \sum_{i=2}^n \left( s_i^2 - c_i^2 \right) \mathrm{e}^{m (\mu_i - \lambda_1)} \bigg) \\
        & &&=  -2 \mathrm{e}^{m \lambda_1} \left( 1 + \mathcal{O} \left( \mathrm{e}^{m (\lambda - \lambda_1)} \right) \right).
    \end{alignat*}
    In this context, the symbols $\mathcal{O}$ and $\Theta$ represent the corresponding Landau symbols. The term $\mathcal{O}(\, \cdot\,)$ symbolises an upper bound for the asymptotic order, while the term $\Theta(\,\cdot\,)$ denotes the exact asymptotic order. We will now determine the maximum of \eqref{Taylor} by means of differentiation:
    \begin{align*}
        &R'(\varepsilon) = \dv{\varepsilon} \Big(T_2R(\varepsilon,0) + \mathcal{O} \left(\varepsilon^3\right)\Big) = R'(0) + R''(0) \varepsilon + 3 \mathcal O\left(\varepsilon^2\right) \overset{!}{=} 0 
    \end{align*}
    \begin{align*}
        &\Longleftrightarrow \varepsilon = - \frac{R'(0) + 3 \mathcal O\left(\varepsilon^2\right)}{R''(0)} = - \frac{\Theta\left( \mathrm{e}^{m \lambda} \right) + 3 \mathcal O\left(\varepsilon^2\right)}{-2 \mathrm{e}^{m \lambda_1} \left( 1 + \mathcal{O} \left( \mathrm{e}^{m (\lambda - \lambda_1)} \right) \right)} \\
        &\Longleftrightarrow \varepsilon = \frac{1}{2 \Theta\left(\mathrm{e}^{m (\lambda_1 - \lambda)}\right) \left( 1 + \mathcal{O} \left( \mathrm{e}^{m (\lambda - \lambda_1)} \right) \right)} + \frac{3 \mathcal O \left( \varepsilon^2 \right)}{2 \mathrm{e}^{m \lambda_1} \left( 1 + \mathcal{O} \left( \mathrm{e}^{m (\lambda - \lambda_1)} \right) \right)} \\
        &\Longleftrightarrow \varepsilon = \frac{1}{\Theta\left(\mathrm{e}^{m (\lambda_1 - \lambda)}\right) +  \mathcal{O}(1)} + \frac{3 \mathcal O \left( \varepsilon^2 \right)}{2 \mathrm{e}^{m \lambda_1} \left( 1 + \mathcal{O} \left( \mathrm{e}^{- m (\lambda_1 - \lambda)} \right) \right)}.
    \end{align*}
    Since we can add a constant value to all eigenvalues and subtract it again later when we have formed the LES, we can assume without loss of generality that $\lambda_1 = 1$. If we let $m \rightarrow \infty$ then the first fraction will converge to zero and the second fraction will diverge in the denominator to infinity, because $\lambda_1 > \lambda$. This means, we have $\varepsilon \rightarrow 0$ for $m \rightarrow \infty$. Together with $R''(0) < 0$ it follows that $\varepsilon \rightarrow 0$ represents a maximum of $R(\varepsilon)$, which concludes this proof. \hfill $\square$
\end{proof}

With this enhancement of our previous calculation of Lemma 1 we are in the position to determine the second eigenvalue that is needed for the characterisation of $\boldsymbol S$. To this end, we note the next lemma.

\begin{lemma}
    Let the conditions of Lemma 3 be fulfilled. Then, the eigenvalue of $\boldsymbol S$ belonging to the eigenvector $\boldsymbol v_1$ is the largest eigenvalue of any of the matrices $\boldsymbol X_i$, $i=1,\dots,n$, whose eigenvector is not aligned with $\boldsymbol u_1$. 
\end{lemma}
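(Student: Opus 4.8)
The plan is to read off the eigenvalue attached to $\boldsymbol v_1$ as the exponential growth rate of the \emph{minor} eigenvalue of $\boldsymbol E_m$. Since $\boldsymbol S$ and $\frac1m\log\boldsymbol E_m$ share eigenvectors and the major eigenvector of $\boldsymbol S$ is $\boldsymbol u_1$ by the previous lemmas, the eigenvalue belonging to $\boldsymbol v_1$ equals $\lim_{m\to\infty}\frac1m\log\eta_m$, where $\eta_m$ is the minor eigenvalue of $\boldsymbol E_m$. For a symmetric matrix the minor eigenvalue is the Rayleigh product at the minor unit eigenvector, so I would take $\boldsymbol q_m$ orthogonal to the major eigenvector $\boldsymbol w_{\varepsilon(m)}$ supplied by Lemma 3, i.e.\ $\boldsymbol q_m = \frac{1}{\sqrt{1+\varepsilon(m)^2}}(-\varepsilon(m),1)^{\textup T}$, and write $\eta_m = R_{\boldsymbol E_m}(\boldsymbol q_m) = \frac{1}{1+\varepsilon(m)^2}\big(\varepsilon(m)^2 (\boldsymbol E_m)_{11} - 2\varepsilon(m)(\boldsymbol E_m)_{12} + (\boldsymbol E_m)_{22}\big)$.

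The decisive observation concerns the $(2,2)$-entry $(\boldsymbol E_m)_{22} = \mathrm e^{m\mu_1} + \sum_{i=2}^n\big(s_i^2\mathrm e^{m\lambda_i} + c_i^2\mathrm e^{m\mu_i}\big)$. Every exponent occurring with a nonzero coefficient belongs to an eigenvalue whose eigenvector is \emph{not} aligned with $\boldsymbol u_1 = \boldsymbol e_1$: the factor $s_i^2$ vanishes exactly when $\boldsymbol u_i \parallel \boldsymbol e_1$, the factor $c_i^2$ vanishes exactly when $\boldsymbol v_i \parallel \boldsymbol e_1$, and $\mu_1$ belongs to $\boldsymbol v_1 = \boldsymbol e_2 \perp \boldsymbol u_1$. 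As all coefficients are nonnegative there is no cancellation, so $\frac1m\log(\boldsymbol E_m)_{22}\to\Lambda_2$, where $\Lambda_2$ denotes precisely the largest eigenvalue among all $\boldsymbol X_i$ whose eigenvector is not aligned with $\boldsymbol u_1$.

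It then remains to show that the first two summands are exponentially negligible against $(\boldsymbol E_m)_{22}$, and this is where Lemma 3 is essential and where I expect the main difficulty to lie. The $(1,1)$-entry is the largest entry, of order $\mathrm e^{m\lambda_1}$, so even a tiny misalignment $\varepsilon(m)$ is amplified enormously; only a sufficiently sharp decay rate for $\varepsilon(m)$ can tame it. Using $\varepsilon(m)\to 0$ at the rate governed by $R'(0)$ from Lemma 3, the term $\varepsilon(m)^2(\boldsymbol E_m)_{11}$ has exponential order roughly $\mathrm e^{m(2\lambda'-\lambda_1)}$ and $\varepsilon(m)(\boldsymbol E_m)_{12}$ order at most the same, where $\lambda'$ is the largest eigenvalue realised by a \emph{rotated} eigenvector. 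Since $2\lambda'-\lambda_1 = \lambda' - (\lambda_1-\lambda') < \lambda'$ and every such rotated $\lambda'$ is itself admissible for $\Lambda_2$, one gets $2\lambda'-\lambda_1 < \Lambda_2$, so both corrections vanish in the $\frac1m\log$-limit. The delicate point is exactly the verification of this strict inequality in the degenerate situations where the eigenvalue attaining $\max(\mu_1,\lambda_2,\dots,\lambda_n)$ happens to have an eigenvector aligned with $\boldsymbol u_1$: there the coarse rate $\lambda=\max(\mu_1,\lambda_2,\dots,\lambda_n)$ only yields the borderline estimate $2\lambda-\lambda_1 \le \Lambda_2$, and one must invoke the genuine growth of $R'(0)$, which involves only the rotated eigenvectors, to recover strictness (consistently, $\varepsilon(m)$ is then even smaller, or identically zero).

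Combining the three estimates gives $\frac1m\log R_{\boldsymbol E_m}(\boldsymbol q_m)\to\Lambda_2$, and as $\frac{1}{1+\varepsilon(m)^2}\to 1$ does not affect the exponential rate, this limit is the eigenvalue of $\boldsymbol S$ belonging to $\boldsymbol v_1$, as claimed. As an independent check, and a shortcut that sidesteps the amplification issue entirely, one may instead use $\eta_m = \det\boldsymbol E_m/\xi_m$ with $\xi_m$ the major eigenvalue: then $\frac1m\log\eta_m = \frac1m\log\det\boldsymbol E_m - \lambda_1$, and expanding $\det\big(\sum_i\exp(m\boldsymbol X_i)\big)$ as a quadratic form in the summands shows its leading exponent is $\lambda_1+\Lambda_2$, which returns the same value after essentially the same rotated-eigenvector bookkeeping.
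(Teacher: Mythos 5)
Your proposal is correct and follows essentially the same route as the paper: both evaluate the Rayleigh product of $\boldsymbol E_m$ at the vector $\frac{1}{\sqrt{1+\varepsilon^2}}(-\varepsilon,1)^{\textup T}$ orthogonal to the major test vector, use the decay rate of $\varepsilon(m)$ from Lemma 3 to show that the $\varepsilon^2\mathrm e^{m\lambda_1}$ contamination is exponentially negligible, and identify the surviving exponent as the largest eigenvalue whose eigenvector is not aligned with $\boldsymbol u_1$. Your entry-wise bookkeeping of $(\boldsymbol E_m)_{22}$ treats uniformly what the paper splits into cases 2--5, but the underlying argument is the same.
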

\begin{proof}
    $1.)$ Analogous to the proof of Lemma 1, we start with the Rayleigh product of $\boldsymbol E_m$, but with an eigenvector that is perpendicular to our first ``test eigenvector'' $\boldsymbol w_\varepsilon$, namely $\overline{\boldsymbol w}_\varepsilon = \boldsymbol w_\varepsilon^\perp = \frac{1}{\sqrt{1+\varepsilon^2}} (- \varepsilon, 1)^\textup{T}$:
    \begin{align*}
        \overline{R}(\varepsilon) &= R_{\boldsymbol E_m}(\overline{\boldsymbol w}_\varepsilon) = \langle \overline{\boldsymbol w}_\varepsilon, \boldsymbol E_m \overline{\boldsymbol w}_\varepsilon \rangle \\
        &= \frac{1}{1+\varepsilon^2} \left( \varepsilon^2 \mathrm{e}^{m \lambda_1} + \mathrm{e}^{m \mu_1} + \sum_{i=2}^n \left( \left( s_i - \varepsilon c_i \right)^2 \mathrm{e}^{m \lambda_i} + \left( c_i + \varepsilon s_i \right)^2 \mathrm{e}^{m \mu_i} \right) \right).
    \end{align*}\vspace{0.2cm}
    
    \noindent 2.) Let us first assume that the second largest eigenvalue is $\lambda_2$ and $s_2 \neq 0$, so that the eigenvector $\boldsymbol{u}_2$ is not aligned with the first eigenvector $\boldsymbol u_1$. Then we can rewrite this equation as
    \begin{align*}
        \overline{R}(\varepsilon) &= \frac{\mathrm{e}^{m \lambda_2}}{1 + \varepsilon^2} \bigg( \big( s_2 - \varepsilon c_2 \big)^2 + \varepsilon^2 \mathrm{e}^{m (\lambda_1 - \lambda_2)} + \mathrm{e}^{m (\mu_1- \lambda_2)} + \big( c_2 + \varepsilon s_2 \big)^2 \mathrm{e}^{m (\mu_2- \lambda_2)} \\
        & \quad + \sum_{i=3}^n \Big( \big( s_i - \varepsilon c_i \big)^2 \mathrm{e}^{m (\lambda_i - \lambda_2)} + \big( c_i + \varepsilon s_i \big)^2 \mathrm{e}^{m (\mu_i - \lambda_2)} \Big) \bigg).
    \end{align*}
    If we only look at the terms in the big brackets, we find that only $\big( s_2 - \varepsilon c_2 \big)^2$ is constant with respect to $m$, while all the other terms, with the exception of $\varepsilon^2 \mathrm{e}^{m (\lambda_1 - \lambda_2)}$, involve exponential functions with negative multiples of $m$, which will approach zero for $m \rightarrow \infty$. We conclude from Lemma 3 that
    \begin{align*}
        \varepsilon \sim \mathrm{e}^{m(\lambda_2 - \lambda_1)} \Longleftrightarrow \varepsilon^{-1} \sim \mathrm{e}^{m(\lambda_1 - \lambda_2)} \Longleftrightarrow \varepsilon^2 \mathrm{e}^{m(\lambda_1 - \lambda_2)} \sim \varepsilon \sim \mathrm{e}^{m(\lambda_2 - \lambda_1)}
    \end{align*}
    and we also know that $\varepsilon \rightarrow 0$ for $m \rightarrow \infty$. So will $\varepsilon^2 \mathrm{e}^{m(\lambda_1 - \lambda_2)} \rightarrow 0$ for $m \rightarrow \infty$. The remaining terms we consider are
    \begin{align}
        \overline{R}(\varepsilon) = \frac{\mathrm{e}^{m \lambda_2}}{1 + \varepsilon^2} \left( \big( s_2 - \varepsilon c_2 \big)^2 + \mathcal{O}\big( \mathrm{e}^{m(\lambda_2 - \lambda_1)} + \mathrm{e}^{m(\mu - \lambda_2)} \big) \right),
        \label{leading terms}
    \end{align}
    where $\mu$ is the next largest eigenvalue after $\lambda_2$. From there, we calculate
    \begin{align*}
        \frac{1}{m} \log \overline{R}(\varepsilon) &= \lambda_2 - \frac{1}{m} \log (1 + \varepsilon^2) \\
        & \quad + \frac{1}{m} \log \left( \big( s_2 - \varepsilon c_2 \big)^2 + \mathcal{O}\big( \mathrm{e}^{m(\lambda_2 - \lambda_1)} + \mathrm{e}^{m(\mu - \lambda_2)} \big) \right) \\
        &= \lambda_2 - \frac{\log (1 + \varepsilon^2)}{m} + \frac{\log \big( (s_2 - \varepsilon c_2)^2 \big)}{m} \\
        & \quad + \frac{\log \Big( 1 + \mathcal{O}\left( \mathrm{e}^{m(\lambda_2 - \lambda_1)} + \mathrm{e}^{m(\mu - \lambda_2)} \right) \Big)}{m}
    \end{align*}
    and for $m \rightarrow \infty$ we obtain that $\frac{1}{m} \log \overline{R}(\varepsilon) \rightarrow  \lambda_2$, since $\varepsilon$ goes exponentially to 0 for $m \rightarrow \infty$. This proves this case. \vspace{0.2cm}

    \noindent 3.) Let us now assume that $\mu_1$ is the second largest eigenvalue. In this case, the results of Lemma 3 would be obtained with $\lambda = \mu_1$, and the same procedure as that employed in the current proof would be followed, but with the exclusion of the $\mu_1$ term in place of the $\lambda_2$ term. Thereby, we would achieve
    \begin{align*}
        \overline{R}(\varepsilon) = \frac{\mathrm{e}^{m \mu_1}}{1 + \varepsilon^2} \left( 1 + \mathcal{O}\big( \mathrm{e}^{m(\mu_1 - \lambda_1)} + \mathrm{e}^{m(\mu - \mu_1)} \big) \right)
    \end{align*}
    for the leading terms in \eqref{leading terms}, where $\mu$ is the next largest eigenvalue after $\mu_1$. The rest is done analogously. \vspace{0.2cm}

    \noindent 4.) Now we assume that $\lambda_2$ has the same eigenvector as $\lambda_1$. Then it follows that $s_2 = 0$ and therefore $\big( s_2 - \varepsilon c_2 \big)^2 \rightarrow 0$ for $m \rightarrow \infty$ in equation \eqref{leading terms}. Because of that we would extract the next smaller eigenvalue and repeat the above calculation until we find an eigenvalue whose eigenvector is not $\boldsymbol u_1$. Then we would do the same proof with the term of this eigenvalue instead the $\lambda_2$ term. \vspace{0.2cm}

    \noindent 5.) The final case would be that the second largest eigenvalue $\lambda_2$ is not unique. This would result in further terms like $\big( s_2 - \varepsilon c_2 \big)^2$ in \eqref{leading terms} for the other eigenvalues equal to $\lambda_2$. Since these terms remain $\mathcal{O}(1)$, they will not change the result of the proof. \hfill $\square$
\end{proof}

\noindent We summarise our previous findings on our first theorem:

\begin{theorem}
    Let $\mathcal{X} = \{\boldsymbol X_1,\dots , \boldsymbol X_n\}$, $n \in \mathbb{N}$, be a multi-set of symmetric real $2 \times 2$ matrices and $\lambda_1$ be the unique largest eigenvalue of all matrices of $\mathcal{X}$ with the corresponding normalised eigenvector $\boldsymbol u_1$. Then the log-exp-supremum of $\mathcal{X}$ has the representation
    \begin{align}
        \boldsymbol S := \lim_{m \rightarrow \infty} \left( \frac{1}{m} \log \sum_{i=1}^n \exp(m \boldsymbol X_i) \right) = \lambda_1 \boldsymbol u_1 \boldsymbol u_1^\textup{T} + \mu_* \boldsymbol v_1 \boldsymbol v_1^\textup{T},
        \label{LES-unique}
    \end{align}
    where $\mu_*$ is the next largest eigenvalue among all matrices of $\mathcal{X}$ whose normalised eigenvector $\boldsymbol{v}_*$ is not aligned with the eigenvector $\boldsymbol{u}_1$ and $\boldsymbol v_1$ is the normalised eigenvector perpendicular to $\boldsymbol u_1$.
\end{theorem}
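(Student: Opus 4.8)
The plan is to treat this theorem as the synthesis of Lemmas~1--4, since each ingredient of the claimed spectral decomposition has already been isolated; what remains is to assemble them and to discharge the normalisation \eqref{rotation}.

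First I would record the structural fact underlying every step: because $\boldsymbol{E}_m = \sum_{i=1}^n \exp(m\boldsymbol{X}_i)$ is symmetric positive definite, the map $\boldsymbol{E}_m \mapsto \frac{1}{m}\log\boldsymbol{E}_m$ acts only on the spectrum and leaves the eigenframe untouched. Hence $\boldsymbol{S}$ and $\boldsymbol{E}_m$ share their eigenvectors in the limit, and it suffices to track the two eigenvectors and two eigenvalues of $\boldsymbol{E}_m$ as $m\to\infty$. Lemma~1 supplies the major eigenvector $\boldsymbol{u}_1$, and orthogonality in $\mathbb{R}^2$ forces the second eigenvector to be $\boldsymbol{v}_1 \perp \boldsymbol{u}_1$. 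Lemma~2 gives the eigenvalue $\lambda_1$ attached to $\boldsymbol{u}_1$, while Lemmas~3 and 4 identify the eigenvalue attached to $\boldsymbol{v}_1$ as $\mu_*$, the largest eigenvalue among the $\boldsymbol{X}_i$ whose eigenvector is not collinear with $\boldsymbol{u}_1$. Substituting these four quantities into the spectral decomposition of a symmetric $2\times 2$ matrix yields exactly \eqref{LES-unique}.

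Then I would lift the result from the normalised setting to the general statement. Lemmas~1--4 were all proved under the reduction \eqref{rotation}, i.e. $\boldsymbol{u}_1 = \boldsymbol{e}_1$. To remove this I would invoke the orthogonal equivariance of the LES: for any $\boldsymbol{Q}\in O(2)$ one has $\exp(m\,\boldsymbol{Q}\boldsymbol{X}_i\boldsymbol{Q}^\textup{T}) = \boldsymbol{Q}\exp(m\boldsymbol{X}_i)\boldsymbol{Q}^\textup{T}$, and since conjugation commutes with the finite sum, with the matrix logarithm, and with the limit in $m$, it follows that $\Sup_{\textup{LE}}(\{\boldsymbol{Q}\boldsymbol{X}_i\boldsymbol{Q}^\textup{T}\}) = \boldsymbol{Q}\,\Sup_{\textup{LE}}(\{\boldsymbol{X}_i\})\,\boldsymbol{Q}^\textup{T}$. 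Choosing $\boldsymbol{Q}$ to be the rotation sending $\boldsymbol{e}_1$ to the prescribed $\boldsymbol{u}_1$ transports the normalised identity \eqref{LES-unique} to the general one, since conjugation maps $\boldsymbol{e}_1\boldsymbol{e}_1^\textup{T}$ to $\boldsymbol{u}_1\boldsymbol{u}_1^\textup{T}$ and $\boldsymbol{e}_2\boldsymbol{e}_2^\textup{T}$ to $\boldsymbol{v}_1\boldsymbol{v}_1^\textup{T}$ while leaving the eigenvalues $\lambda_1$ and $\mu_*$ invariant.

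The main obstacle I anticipate is not any single display but the justification that controlling the eigenvectors and eigenvalues separately is enough to pin down $\lim_{m\to\infty}\frac{1}{m}\log\boldsymbol{E}_m$ as a matrix. I would make this rigorous by noting that a symmetric $2\times 2$ matrix is a continuous function of its ordered eigenpairs away from the degenerate locus, and that a genuine spectral gap is available here, guaranteed by the uniqueness of $\lambda_1$ (which gives $\mu_* < \lambda_1$). Once Lemma~1 fixes the limiting eigendirection and Lemmas~2 and 4 fix the two eigenvalues, convergence of the full matrix $\boldsymbol{S}$ to $\lambda_1\boldsymbol{u}_1\boldsymbol{u}_1^\textup{T} + \mu_*\boldsymbol{v}_1\boldsymbol{v}_1^\textup{T}$ follows. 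The subtlety to watch is that the second eigenvalue may be reached only through the chain of reductions in parts~3--5 of Lemma~4; here I would simply point back to those cases to confirm that $\mu_*$ is well defined regardless of how the ``next largest'' eigenvalue arises.
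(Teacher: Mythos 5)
Your proposal is correct and follows essentially the same route as the paper: assemble Lemmas~1--4 under the normalisation \eqref{rotation} to obtain $\lambda_1\boldsymbol{e}_1\boldsymbol{e}_1^\textup{T}+\mu_*\boldsymbol{e}_2\boldsymbol{e}_2^\textup{T}$, then transport the result by conjugating with the rotation $\boldsymbol{R}_{\varphi_1}$. Your explicit justification of the orthogonal equivariance (conjugation commuting with $\exp$, the sum, $\log$ and the limit) only spells out what the paper uses implicitly when it writes $\boldsymbol{S}=\boldsymbol{R}_{\varphi_1}\boldsymbol{S}_{-\varphi_1}\boldsymbol{R}_{\varphi_1}^\textup{T}$.
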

\begin{proof} 
    A rotation matrix $\boldsymbol{R}_{-\varphi_1}$ is employed to rotate all eigenvectors in $\mathcal{X}$, with the rotation angle represented by $-\varphi_1$. This results in the transformation of $\boldsymbol{X}_1$ into a diagonal matrix with diagonal entries $\lambda_1$ and $\mu_1$.
    Thereby, assumption \eqref{rotation} is fulfilled and we can apply Lemma 1--4 to the rotated input matrices $\mathcal{X}_{-\varphi_1}$. For the LES, we get
    \begin{align*}
        \boldsymbol{S}_{-\varphi_1} := \Sup_{\textup{LE}}(\mathcal{X}_{-\varphi_1}) = \lambda_1 \boldsymbol{e}_1 \boldsymbol{e}_1^{\textup T} + \mu_* \boldsymbol{e}_2 \boldsymbol{e}_2^{\textup T},
    \end{align*}
    where $\mu_*$ is the next largest eigenvalue among all matrices of $\mathcal{X}$ whose normalised eigenvector $\boldsymbol{v}_*$ is not aligned with the eigenvector $\boldsymbol{e}_1 = (1, 0)^\textup T$, and $\boldsymbol e_2 = (0, 1)^\textup T$. We achieve the LES of the original $\mathcal{X}$ by rotating all of the eigenvectors back with $\boldsymbol{R}_{\varphi_1}$:
    \begin{align*}
        \boldsymbol{S} &= \boldsymbol R_{\varphi_1} \boldsymbol{S}_{-\varphi_1} \boldsymbol R_{\varphi_1}^{\textup T} = \boldsymbol R_{\varphi_1} \left( \lambda_1 \boldsymbol{e}_1 \boldsymbol{e}_1^{\textup T} + \mu_* \boldsymbol{e}_2 \boldsymbol{e}_2^{\textup T} \right) \boldsymbol R_{\varphi_1}^{\textup T} \\
        &= \lambda_1 (\boldsymbol R_{\varphi_1} \boldsymbol{e}_1) \left( \boldsymbol R_{\varphi_1} \boldsymbol{e}_1 \right)^{\textup T} + \mu_* (\boldsymbol R_{\varphi_1} \boldsymbol{e}_2) \left( \boldsymbol R_{\varphi_1} \boldsymbol{e}_2 \right)^{\textup T} = \lambda_1 \boldsymbol{u}_1 \boldsymbol{u}_1^{\textup T} + \mu_* \boldsymbol{v}_1 \boldsymbol{v}_1^{\textup T}.
    \end{align*}
    \hfill $\square$
\end{proof}

\noindent Before we look at the case where the largest eigenvalue is not unique, let us calculate a small example with numerical values for better understanding.

\begin{example}
    We consider as RGB colours blue $\boldsymbol C_1 = (0, 0, 1)$, a medium-dark brown $\boldsymbol C_2 = \left(\frac{3}{5}, \frac{2}{5}, \frac{1}{5} \right)$ and a shade of blue-magenta $\boldsymbol C_3 = \left(\frac{1}{3}, \frac{1}{3}, \frac{5}{6}\right)$ and calculate the LES for them. To do this, we first convert the RGB colours into symmetric matrices as described in section 2.1: 
    \begin{align*}
         &\boldsymbol X_1 = \frac{1}{2 \sqrt{2}}
         \begin{pmatrix}
            \sqrt{3} \hspace{0.5em}& -1 \\
            -1 \hspace{0.5em}& -\sqrt{3}
        \end{pmatrix}
        \approx
        \begin{pmatrix}
            \hspace{0.8em}0.6124 \hspace{0.5em}& -0.3535 \\
            -0.3535 \hspace{0.5em}& -0.6124
        \end{pmatrix}
        , \\ 
        &\boldsymbol X_2 = \frac{1}{5 \sqrt{2}}
        \begin{pmatrix}
            -2 \hspace{0.5em}& \sqrt{3} \\
            \sqrt{3} \hspace{0.5em}& 0
        \end{pmatrix}
        \approx
        \begin{pmatrix}
            -0.2828 \hspace{0.5em}& 0.2450 \\
            \hspace{0.8em}0.2450 \hspace{0.5em}& 0
        \end{pmatrix}
        \quad \text{and} \quad \\
        &\boldsymbol X_3 = \frac{1}{12 \sqrt{2}}
        \begin{pmatrix}
            2 + 3 \sqrt{3} \hspace{0.5em}& -3 \\
            -3 \hspace{0.5em}& 2 - 3 \sqrt{3}
        \end{pmatrix}
        \approx
        \begin{pmatrix}
            \hspace{0.8em}0.4240 \hspace{0.5em}& -0.1768 \\
            -0.1768 \hspace{0.5em}& -0.1883
        \end{pmatrix}
        .
    \end{align*}
    Then, we form the spectral decomposition \eqref{spectral_decomposition} with the eigenvalues $\lambda_i, \mu_i$ and eigenvectors $\boldsymbol{u}_i, \boldsymbol{v}_i$ for $i = 1,2,3$:
    \begin{align*}
        &\lambda_1 = \frac{1}{\sqrt{2}} \approx 0.7071,  &&\boldsymbol u_1 = \frac{1}{\sqrt{8 + 4 \sqrt{3}}} 
        \begin{pmatrix}
            -2-\sqrt{3} \\
            1
        \end{pmatrix}
        \approx
        \begin{pmatrix}
            -0.9659 \\
            \hspace{0.8em}0.2588
        \end{pmatrix}
        ,\\ 
        &\mu_1 = - \lambda_1,  &&\boldsymbol v_1 = \frac{1}{\sqrt{8 - 4 \sqrt{3}}}
        \begin{pmatrix}
            2-\sqrt{3} \\
            1
        \end{pmatrix}
        \approx
        \begin{pmatrix}
            -0.2588 \\
            -0.9659
        \end{pmatrix}
        , \\
        & \lambda_2 = \frac{1}{5 \sqrt{2}} \approx 0.1414,  &&\boldsymbol u_2 = \frac{1}{2} 
        \begin{pmatrix}
            1 \\
            \sqrt{3}
        \end{pmatrix}
        \approx
        \begin{pmatrix}
            0.5000 \\
            0.8660
        \end{pmatrix}
        ,\\
        &\mu_2 = - \frac{3}{5 \sqrt{2}} \approx -0.4243,  &&\boldsymbol v_2 = \frac{1}{2}
        \begin{pmatrix}
            -\sqrt{3} \\
            1
        \end{pmatrix}
        \approx
        \begin{pmatrix}
            -0.8660 \\
            \hspace{0.8em}0.5000
        \end{pmatrix}
        \quad \text{and} 
    \end{align*}
    \begin{align*}
        & \lambda_3 = \frac{2}{3 \sqrt{2}} \approx 0.4714,  &&\boldsymbol u_3 = \frac{1}{\sqrt{8 + 4 \sqrt{3}}}
        \begin{pmatrix}
            -2-\sqrt{3} \\
            1
        \end{pmatrix}
        \approx
        \begin{pmatrix}
            -0.9659 \\
            \hspace{0.8em}0.2588
        \end{pmatrix}
        , \\ 
        &\mu_3 = - \frac{1}{3 \sqrt{2}} \approx - 0.2357,  &&\boldsymbol v_3 = \frac{1}{\sqrt{8 - 4 \sqrt{3}}}
        \begin{pmatrix}
            2-\sqrt{3} \\
            1
        \end{pmatrix}
        \approx
        \begin{pmatrix}
            0.2588 \\
            0.9659
        \end{pmatrix}
        .
    \end{align*}
    The largest eigenvalue of this is $\lambda_1$ and it is also unique. According to Theorem 1 we take as eigenvectors for $\boldsymbol S$ the vectors $\boldsymbol u_1$ and $\boldsymbol v_1$. The second largest eigenvalue is $\lambda_3$, but $\boldsymbol u_3 = \boldsymbol u_1$, so we do not consider this eigenvalue. Therefore, the next largest eigenvalue is $\lambda_2$ and since it has a different eigenvector direction than $\lambda_1$, we select it as the second eigenvalue for $\boldsymbol S$. In conclusion, we thus obtain for the LES:
    \begin{align*}
        \boldsymbol S = \lambda_1 \boldsymbol u_1 \boldsymbol u_1^\textup{T} + \lambda_2 \boldsymbol v_1 \boldsymbol v_1^\textup{T} = \frac{1}{5 \sqrt{2}}
        \begin{pmatrix}
            3+\sqrt{3} \hspace{0.5em}& -1 \\
            -1 \hspace{0.5em}& 3-\sqrt{3}
        \end{pmatrix}
        \approx
        \begin{pmatrix}
            \hspace{0.8em}0.6692 \hspace{0.5em}& -0.1414 \\
            -0.1414 \hspace{0.5em}& \hspace{0.8em}0.1793
        \end{pmatrix}
        ,
    \end{align*}
    which represents in the RGB space a medium-light shade of blue-magenta colour: $\left(\frac{3}{5}, \frac{3}{5}, 1\right)$.
\end{example}

\subsection{LES for non-unique largest Eigenvalues}

Until now, we have assumed that the largest eigenvalue $\lambda_1$ is unique, but we now want to show what we get when this is not the case. To do this, we first show that the largest eigenvalue of $\boldsymbol S$ cannot be greater than $\lambda_1$:

\begin{lemma}
    Let $\boldsymbol S$ be defined as in \eqref{LES} and condition \eqref{rotation} be fulfilled according to \eqref{spectral_decomposition}. Furthermore, let $\boldsymbol X_1$ has the largest not necessarily unique eigenvalue $\lambda_1$ of all matrices of $\mathcal{X}$. Then the largest eigenvalue of $\boldsymbol S$ cannot be greater than $\lambda_1$.
\end{lemma}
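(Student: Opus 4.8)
The plan is to bound the largest eigenvalue of $\boldsymbol E_m$ from above by a quantity whose $\tfrac{1}{m}\log$ converges to $\lambda_1$, and then pass to the limit. Since $\boldsymbol E_m=\sum_{i=1}^n\exp(m\boldsymbol X_i)$ is a sum of symmetric positive definite matrices it is itself symmetric positive definite, so $\tfrac{1}{m}\log\boldsymbol E_m$ is well defined and shares its eigenvectors with $\boldsymbol E_m$, while its eigenvalues are $\tfrac{1}{m}\log$ of those of $\boldsymbol E_m$. In particular the largest eigenvalue of $\boldsymbol S$ equals $\lim_{m\to\infty}\tfrac{1}{m}\log\Lambda_m$, where $\Lambda_m$ denotes the largest eigenvalue of $\boldsymbol E_m$; it therefore suffices to control $\Lambda_m$.

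First I would write $\Lambda_m$ as the maximum of the Rayleigh product over all unit vectors,
\begin{align*}
    \Lambda_m = \max_{|\boldsymbol w| = 1} \langle \boldsymbol w, \boldsymbol E_m \boldsymbol w \rangle = \max_{|\boldsymbol w| = 1} \sum_{i=1}^n \langle \boldsymbol w, \exp(m \boldsymbol X_i) \boldsymbol w \rangle.
\end{align*}
For each fixed $i$, the matrix $\exp(m \boldsymbol X_i)$ is symmetric with eigenvalues $\mathrm{e}^{m \lambda_i} \geq \mathrm{e}^{m \mu_i}$, so its Rayleigh product on any unit vector is bounded above by its largest eigenvalue $\mathrm{e}^{m \lambda_i}$. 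Using that $\lambda_i \leq \lambda_1$ for every $i$ by hypothesis, this yields $\langle \boldsymbol w, \exp(m \boldsymbol X_i) \boldsymbol w \rangle \leq \mathrm{e}^{m \lambda_1}$ uniformly in $\boldsymbol w$ and $i$, hence
\begin{align*}
    \Lambda_m \leq \sum_{i=1}^n \mathrm{e}^{m \lambda_1} = n\, \mathrm{e}^{m \lambda_1}.
\end{align*}

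Finally I would take $\tfrac{1}{m}\log$ of this estimate and let $m \to \infty$:
\begin{align*}
    \lambda_{\max}(\boldsymbol S) = \lim_{m \to \infty} \frac{1}{m} \log \Lambda_m \leq \lim_{m \to \infty} \left( \frac{\log n}{m} + \lambda_1 \right) = \lambda_1,
\end{align*}
which is the claim. The estimate itself is elementary; the point requiring care is the spectral-mapping step that identifies the largest eigenvalue of $\boldsymbol S$ with $\lim_{m}\tfrac{1}{m}\log\Lambda_m$ and guarantees this limit exists. I expect this to be the main obstacle, since it relies on the monotonicity of $x \mapsto \tfrac{1}{m}\log x$ applied to the ordered eigenvalues of the positive definite matrix $\boldsymbol E_m$ together with the existence of the defining limit in \eqref{LES}. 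A matching lower bound $\Lambda_m \geq \mathrm{e}^{m \lambda_1}$, obtained by testing the Rayleigh product against $\boldsymbol u_1$ and keeping only the $i=1$ summand (all summands being positive), can be recorded alongside it to make the squeeze explicit and confirm that the limit is attained at $\lambda_1$.
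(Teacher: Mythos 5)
Your proposal is correct and follows essentially the same route as the paper: both bound the relevant Rayleigh product of $\boldsymbol E_m$ above by $n\,\mathrm{e}^{m\lambda_1}$ (the paper by substituting $\lambda_1$ for every eigenvalue in the explicit coordinate expression of $R(\varepsilon)$, you by the termwise operator-norm bound $\langle\boldsymbol w,\exp(m\boldsymbol X_i)\boldsymbol w\rangle\leq\mathrm{e}^{m\lambda_i}\leq\mathrm{e}^{m\lambda_1}$), and then conclude via $\frac{1}{m}\log(n\,\mathrm{e}^{m\lambda_1})\to\lambda_1$. Your coordinate-free variational formulation via $\Lambda_m=\max_{|\boldsymbol w|=1}\langle\boldsymbol w,\boldsymbol E_m\boldsymbol w\rangle$ is a clean equivalent of the paper's computation and even dispenses with the rotation assumption, but the underlying estimate is identical.
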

\begin{proof}
    We have already seen that the largest eigenvalue of $\boldsymbol S$ arises from the Rayleigh product $R(\varepsilon)$, $\abs{\varepsilon} \ll 1 $, which can be estimated as follows:
    \begin{align*}
        R(\varepsilon) &= \frac{1}{1 + \varepsilon^2} \left( \mathrm{e}^{m \lambda_1} + \varepsilon^2 \mathrm{e}^{m \mu_1} + \sum_{i=2}^n \left( (c_i + \varepsilon s_i)^2 \mathrm{e}^{m \lambda_i} + (s_i - \varepsilon c_i)^2 \mathrm{e}^{m \mu_i} \right) \right) \\
        &\leq \frac{1}{1 + \varepsilon^2} \left( \mathrm{e}^{m \lambda_1} + \varepsilon^2 \mathrm{e}^{m \lambda_1} + \sum_{i=2}^n \left( (c_i + \varepsilon s_i)^2 \mathrm{e}^{m \lambda_1} + (s_i - \varepsilon c_i)^2 \mathrm{e}^{m \lambda_1} \right) \right) \\
        &= \frac{\mathrm{e}^{m \lambda_1}}{1 + \varepsilon^2} \left( 1 + \varepsilon^2 + \sum_{i=2}^n \left( (c_i + \varepsilon s_i)^2 + (s_i - \varepsilon c_i)^2 \right) \right) \\
        &= \frac{\mathrm{e}^{m \lambda_1}}{1 + \varepsilon^2} \left( 1 + \varepsilon^2 + \sum_{i=2}^n \left( c_i^2 + 2\varepsilon c_i s_i + \varepsilon^2 s_i^2 + s_i^2 - 2\varepsilon c_i s_i + \varepsilon^2 c_i^2 \right) \right) \\ 
        &= \frac{\mathrm{e}^{m \lambda_1}}{1 + \varepsilon^2} \left( 1 + \varepsilon^2 + \sum_{i=2}^n \left( 1 + \varepsilon^2 \right) \right) = \mathrm{e}^{m \lambda_1} \left( 1 + \sum_{i=2}^n 1 \right) = n \mathrm{e}^{m \lambda_1}.
    \end{align*}
    Thus the largest eigenvalue of $\boldsymbol S$ can be approximated with
    \begin{align*}
        \lim_{m \rightarrow \infty} \frac{1}{m} \log R(\varepsilon) &\leq \lim_{m \rightarrow \infty} \frac{1}{m} \log \left( n \mathrm{e}^{m \lambda_1} \right) = \lim_{m \rightarrow \infty} \frac{1}{m} \left( \log \left( \mathrm{e}^{m \lambda_1} \right) + \log n \right) \\
        &= \lim_{m \rightarrow \infty} \left( \lambda_1 + \frac{\log n}{m} \right) = \lambda_1.
    \end{align*}
    \hfill $\square$
\end{proof}

In a similar way, one can also show that $\boldsymbol S$ is indeed an upper bound for the $\boldsymbol X_i$, $i = 1,\dots,n$. To do this, we prove the following lemma:

\begin{lemma}
    The LES $\boldsymbol S$ according to \eqref{LES} is an upper bound in the Loewner sense for the given matrices $\mathcal{X}$.
\end{lemma}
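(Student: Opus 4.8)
The plan is to prove directly that $\boldsymbol S \geq_\textup{L} \boldsymbol X_j$ for every index $j \in \{1,\dots,n\}$, which is exactly the assertion that $\boldsymbol S$ is a Loewner upper bound for $\mathcal{X}$. The guiding idea is that the expression defining $\boldsymbol S$ is assembled entirely from order-preserving operations, so I would establish the inequality at finite $m$ and only pass to the limit at the very end.

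First I would fix $m > 0$ and recall that each matrix exponential $\exp(m \boldsymbol X_i)$ is symmetric positive definite, hence lies in $\textup{Sym}_+(2)$. Consequently, for any fixed $j$,
\[
\boldsymbol E_m - \exp(m \boldsymbol X_j) = \sum_{i \neq j} \exp(m \boldsymbol X_i) \in \textup{Sym}_+(2),
\]
since a sum of positive semi-definite matrices is again positive semi-definite. This yields the Loewner inequality $\boldsymbol E_m \geq_\textup{L} \exp(m \boldsymbol X_j)$ for each $j$.

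The decisive step is to transport this inequality through the map $\boldsymbol A \mapsto \tfrac{1}{m}\log \boldsymbol A$. Here I would invoke the classical fact that the matrix logarithm is \emph{operator monotone} on the positive definite matrices: if $\boldsymbol A \geq_\textup{L} \boldsymbol B$ with $\boldsymbol A,\boldsymbol B$ positive definite, then $\log \boldsymbol A \geq_\textup{L} \log \boldsymbol B$. Since $\boldsymbol E_m$ (a sum of positive definite matrices) and $\exp(m \boldsymbol X_j)$ are both positive definite, applying this to the inequality above gives $\log \boldsymbol E_m \geq_\textup{L} \log \exp(m \boldsymbol X_j) = m \boldsymbol X_j$, where the last identity uses that $m\boldsymbol X_j$ is the symmetric principal logarithm of $\exp(m\boldsymbol X_j)$. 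Multiplying by the positive scalar $1/m$ preserves the Loewner order, so $\tfrac{1}{m}\log \boldsymbol E_m \geq_\textup{L} \boldsymbol X_j$.

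Finally I would pass to the limit: the relation $\tfrac{1}{m}\log \boldsymbol E_m - \boldsymbol X_j \in \textup{Sym}_+(2)$ holds for all $m>0$, and since $\textup{Sym}_+(2)$ is closed and the limit defining $\boldsymbol S$ exists, the limiting matrix $\boldsymbol S - \boldsymbol X_j$ again lies in $\textup{Sym}_+(2)$; hence $\boldsymbol S \geq_\textup{L} \boldsymbol X_j$, and as $j$ was arbitrary, $\boldsymbol S$ is an upper bound for $\mathcal{X}$. The one genuinely nontrivial ingredient is the operator monotonicity of $\log$: ordinary scalar monotonicity would not suffice, because a monotone scalar function need not be operator monotone (for example $t \mapsto t^2$ is not), so this step must rest on Löwner's theory of operator monotone functions rather than on a naive eigenvalue comparison. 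Everything else — the finite-$m$ inequality, the positive scaling, and the closedness of the cone under limits — is elementary.
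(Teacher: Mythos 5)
Your proposal is correct and follows essentially the same route as the paper: establish $\boldsymbol E_m \geq_\textup{L} \exp(m\boldsymbol X_j)$ at finite $m$ from positivity of the summands, transport the inequality through the operator-monotone matrix logarithm, rescale by $1/m$, and pass to the limit. You actually spell out two steps the paper leaves implicit (the identity $\log\exp(m\boldsymbol X_j)=m\boldsymbol X_j$ and the closedness of $\textup{Sym}_+(2)$ under the limit $m\to\infty$), which is a welcome clarification rather than a deviation.
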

\begin{proof}
    Apparently one has 
    \begin{align*}
        \sum_{i=1}^n \exp(m \boldsymbol X_i) \geq_\textup L \exp(m \boldsymbol X_j) \quad \forall j \in \{ 1,\dots,n \},  
    \end{align*}
    which in combination with the fact that the logarithm is an operator-monotone function \cite{lowner_monotone}, i.e. $\boldsymbol A \leq_\textup L \boldsymbol B \Longrightarrow \log \boldsymbol A \leq_\textup L \log \boldsymbol B$ for symmetric positive definite matrices $\boldsymbol A,\boldsymbol B$, results in $\boldsymbol S \geq_\textup L \boldsymbol X_j$ for all $j \in \{1,\dots,n\}$. \hfill $\square$
\end{proof}

We now continue our considerations regarding the other largest eigenvalue. If there is a second eigenvalue which is equal to $\lambda_1$, it can then be either $\mu_1$ or one of the $\lambda_j$, since $\lambda_j \geq \mu_j$. If it were $\mu_1$, then $\boldsymbol X_1$ would have the representation $\lambda_1 \boldsymbol I$ and the only matrix with equally aligned eigenvectors that would be greater than or equal to $\lambda_1 \boldsymbol I$ in the Loewner sense without having an even greater eigenvalue would be $\lambda_1 \boldsymbol I$ itself.

For the other case that $\lambda_j = \lambda_1$ for a fixed $j \in \{2,\dots,n\}$ will we assume that without loss of generality $j = 2$ holds. We summarise the argumentation necessary for this in the following theorem:

\begin{theorem}
    Let $\boldsymbol S$ be the LES of the multi-set $\mathcal{X} = \{\boldsymbol X_1,\dots, \boldsymbol X_n\}$, $n \in \mathbb N$, of symmetric real $2 \times 2$ matrices with the spectral decomposition \eqref{spectral_decomposition} and $\lambda_1 = \lambda_2$ the largest eigenvalues of all these matrices.
    Then one has 
    \begin{align*}
        \boldsymbol S 
        =
        \begin{cases}
            \lambda_1 \boldsymbol I, & \text{ if } \boldsymbol u_1 \neq \pm\boldsymbol u_2, \\
            \lambda_1 \boldsymbol u_1 \boldsymbol u_1^\textup{T} + \mu_* \boldsymbol v_1 \boldsymbol v_1^\textup{T}, & \text{ otherwise},
        \end{cases}
    \end{align*}
    where $\mu_* \leq \lambda_1$ is the next largest eigenvalue whose eigenvector $\boldsymbol{v}_*$ is not aligned with $\boldsymbol{u}_1$.
\end{theorem}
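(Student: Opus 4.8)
The plan is to split into the two branches of the statement and to rely on Lemma 5 and Lemma 6, which already furnish the two facts we need: by Lemma 5 the largest eigenvalue of $\boldsymbol S$ is at most $\lambda_1$, and by Lemma 6 we have $\boldsymbol S \geq_\textup{L} \boldsymbol X_i$ for every $i$. Throughout I keep the normalisation \eqref{rotation}, so that $\boldsymbol u_1 = (1,0)^\textup{T}$ and hence $c_1 = 1$, $s_1 = 0$.

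For the branch $\boldsymbol u_1 \neq \pm \boldsymbol u_2$ I would argue purely through the Rayleigh quotient. From $\boldsymbol S \geq_\textup{L} \boldsymbol X_1$ and $\boldsymbol S \geq_\textup{L} \boldsymbol X_2$ one gets $\langle \boldsymbol u_k, \boldsymbol S \boldsymbol u_k\rangle \geq \langle \boldsymbol u_k, \boldsymbol X_k \boldsymbol u_k\rangle = \lambda_k = \lambda_1$ for $k \in \{1,2\}$. On the other hand, Lemma 5 bounds the Rayleigh quotient of $\boldsymbol S$ from above by its top eigenvalue, which is $\leq \lambda_1$, so in fact $\langle \boldsymbol u_k, \boldsymbol S \boldsymbol u_k\rangle = \lambda_1$ for both $k$. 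A symmetric matrix attains its largest eigenvalue as a Rayleigh quotient only on the corresponding eigenspace; since $\boldsymbol u_1 \neq \pm \boldsymbol u_2$ the two unit vectors are linearly independent and thus span $\mathbb{R}^2$, forcing the $\lambda_1$-eigenspace of $\boldsymbol S$ to be all of $\mathbb{R}^2$. Hence $\boldsymbol S = \lambda_1 \boldsymbol I$, as claimed.

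For the branch $\boldsymbol u_1 = \pm \boldsymbol u_2$ I would re-run the argument of Theorem 1 (that is, Lemmas 1--4) and check that the degeneracy $\lambda_2 = \lambda_1$ only perturbs constant prefactors. Here $s_2 = 0$ and $c_2 = \pm 1$, and more generally I would group all matrices whose major eigenvector is aligned with $\boldsymbol u_1$; these are axis-aligned ($s_i = 0$) and contribute only diagonal blocks $\operatorname{diag}(\mathrm{e}^{m\lambda_1},\mathrm{e}^{m\mu_i})$ to $\boldsymbol E_m$. The decisive observation is that such matrices \emph{drop out of the off-diagonal entries}, hence out of $R'(0) = 2\sum_{i} c_i s_i(\mathrm{e}^{m\lambda_i} - \mathrm{e}^{m\mu_i})$: although $\lambda_1$ is no longer unique, $R'(0)$ only sees matrices with tilted eigenvectors, so its leading exponent is $\lambda' := \max\{\lambda_i : \boldsymbol u_i \neq \pm\boldsymbol u_1\} < \lambda_1$. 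Meanwhile $R(0) = N\,\mathrm{e}^{m\lambda_1}(1+o(1))$ and $R''(0) = -2N\,\mathrm{e}^{m\lambda_1}(1+o(1)) < 0$, where $N \geq 2$ counts the matrices whose top eigenvalue $\lambda_1$ is attained in the direction $\boldsymbol u_1$. Re-running the Taylor argument of Lemma 3 with these yields $\varepsilon(m) \sim \mathrm{e}^{m(\lambda' - \lambda_1)} \to 0$, so $\boldsymbol u_1$ remains the major eigenvector (Lemma 1 analog) with eigenvalue $\lim_{m\to\infty} \frac{1}{m}\log(N\,\mathrm{e}^{m\lambda_1}(1+o(1))) = \lambda_1$, the factor $N$ contributing only $\tfrac{\log N}{m}\to 0$ (Lemma 2 analog). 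Finally, analysing the perpendicular Rayleigh product $\overline{R}(\varepsilon)$ exactly as in Lemma 4 extracts the perpendicular eigenvalue as the largest one whose eigenvector is not aligned with $\boldsymbol u_1$, namely $\mu_*$. The sub-case $\mu_1 = \lambda_1$, i.e. $\boldsymbol X_1 = \lambda_1\boldsymbol I$, is precisely the situation dispatched in the paragraph preceding the theorem and collapses both branches to $\boldsymbol S = \lambda_1\boldsymbol I$.

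The main obstacle I anticipate is the second branch, and specifically recognising \emph{why} the non-uniqueness of $\lambda_1$ is harmless: it is the alignment with $\boldsymbol u_1$ that kills the off-diagonal contributions, so the perturbation scale in Lemma 3 is set by the largest tilted eigenvalue $\lambda' < \lambda_1$ rather than by $\lambda_1$ itself. The delicate bookkeeping is to confirm that the multiplicity $N$ cancels in the ratio $-R'(0)/R''(0)$, that $R''(0) < 0$ is preserved so that $\varepsilon = 0$ remains a genuine maximum, and that $\varepsilon(m)\to 0$ survives; once this is verified, the extraction of $\mu_*$ from $\overline{R}(\varepsilon)$ is a verbatim repetition of Lemma 4, including its step-4 and step-5 handling of aligned and repeated eigenvalues. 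The first branch, by contrast, is comparatively immediate from the Rayleigh-quotient squeeze between Lemmas 5 and 6.
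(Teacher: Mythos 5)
Your proof is correct. For the branch $\boldsymbol u_1 = \pm\boldsymbol u_2$ you follow essentially the paper's route: the paper likewise observes that the matrices aligned with $\boldsymbol u_1$ only feed the diagonal of $\boldsymbol E_m$ and then states that one can ``repeat all the steps up to Theorem 1 with minor adjustments'' to obtain $\alpha=\mu_*$; your bookkeeping of the multiplicity $N$ cancelling in $-R'(0)/R''(0)$ and of the perturbation scale being set by the largest tilted eigenvalue is, if anything, more explicit than what the paper writes down. For the branch $\boldsymbol u_1 \neq \pm\boldsymbol u_2$, however, your argument is genuinely different and cleaner. The paper parametrises the unknown supremum as $\boldsymbol S_{-\varphi_1} = \lambda_1\bar{\boldsymbol u}\bar{\boldsymbol u}^{\textup T} + \alpha\bar{\boldsymbol v}\bar{\boldsymbol v}^{\textup T}$ with $\bar{\boldsymbol u}=(c,s)^{\textup T}$, derives the semidefiniteness conditions \eqref{not_unique_relations} for $\boldsymbol S_{-\varphi_1}-\boldsymbol X_1 \geq_\textup L \boldsymbol 0$ (including the determinant inequality), and then splits on $s\neq 0$ versus $s=0$; in the sub-case $s=0$ with $\boldsymbol u_2$ tilted it must return to the Rayleigh-product asymptotics of Lemma 4 to force $\alpha=\lambda_1$. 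Your squeeze
\begin{align*}
\lambda_1 = \langle\boldsymbol u_k,\boldsymbol X_k\boldsymbol u_k\rangle \leq \langle\boldsymbol u_k,\boldsymbol S\boldsymbol u_k\rangle \leq \lambda_{\max}(\boldsymbol S) \leq \lambda_1, \quad k=1,2,
\end{align*}
combined with the standard fact that the Rayleigh quotient attains the top eigenvalue only on its eigenspace, produces two linearly independent $\lambda_1$-eigenvectors of $\boldsymbol S$ in one stroke and avoids any case split on the orientation of $\boldsymbol S$'s own eigenvector. What the paper's route buys is the explicit inequality system that it reuses in its sub-case analysis; what yours buys is that the first branch needs no asymptotics at all, only Lemmas 5 and 6.
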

\begin{proof}
    1.) To ensure assumption \eqref{rotation}, we begin this proof with the same rotation as in the proof of Theorem 1. Then, we consider some of the properties of $\boldsymbol S$. The first property is that the largest eigenvalue of $\boldsymbol S$ is $\lambda_1$. This is a consequence of Lemma 5 and the fact that the Rayleigh product $R^*(\varepsilon)$ for this case is greater than or equal to the $R(\varepsilon)$ in Lemma 1. Consequently, Lemma 2 is also subject to this relation in accordance with
    \begin{align*}
        \lim_{m \rightarrow \infty} \frac{1}{m} \log R^*(\varepsilon) \geq \lim_{m \rightarrow \infty} \frac{1}{m} \log R(\varepsilon) = \lambda_1.
    \end{align*}
    This leads to the second property, namely the spectral decomposition of the rotated LES $\boldsymbol S_{-\varphi_1}$:
    \begin{align*}
        &\boldsymbol S_{-\varphi_1} = \lambda_1 \bar{\boldsymbol u} \bar{\boldsymbol u}^\textup T + \alpha \bar{\boldsymbol v} \bar{\boldsymbol v}^\textup T, \quad \lambda_1 \geq \alpha \in \mathbb R, \quad \Bar{\boldsymbol u} = (c,s)^\textup T, \quad \Bar{\boldsymbol v} = (-s,c)^\textup T, \\
        &c = \cos(\varphi), \quad s = \sin(\varphi), \quad \varphi \in \left[- \frac{\pi}{2}, \frac{\pi}{2}\right].
    \end{align*}
    Lemma 6 gives us the upper bound property, in particular, this satisfies
    \begin{align*}
        \boldsymbol 0 \leq_\textup L \boldsymbol S_{-\varphi_1} - \boldsymbol X_1 &= \lambda_1 \left(\Bar{\boldsymbol u} \Bar{\boldsymbol u}^\textup T - \boldsymbol u_1 \boldsymbol u_1^\textup T\right) + \alpha \bar{\boldsymbol v} \bar{ \boldsymbol v}^\textup T - \mu_1 \boldsymbol v_1 \boldsymbol v_1^\textup T \\
        &= \lambda_1 
        \begin{pmatrix}
            c^2 - 1 \hspace{0.5em}& cs \\
            cs \hspace{0.5em}& s^2
        \end{pmatrix}
        + \alpha 
        \begin{pmatrix}
            s^2 \hspace{0.5em}& -cs \\
            -cs \hspace{0.5em}& c^2
        \end{pmatrix}
        - 
        \begin{pmatrix}
            0 \hspace{0.5em}& 0 \\
            0 \hspace{0.5em}& \mu_1
        \end{pmatrix}
        \\
        &= 
        \begin{pmatrix}
            \alpha s^2 - \lambda_1 s^2 \hspace{0.5em}& cs (\lambda_1 - \alpha) \\
            cs (\lambda_1 - \alpha) \hspace{0.5em}& \lambda_1 s^2 + \alpha c^2 - \mu_1
        \end{pmatrix}
        \\
        &= 
        \begin{pmatrix}
            s^2(\alpha - \lambda_1) \hspace{0.5em}& cs (\lambda_1 - \alpha) \\
            cs (\lambda_1 - \alpha) \hspace{0.5em}& c^2(\alpha - \lambda_1) + \lambda_1 - \mu_1
        \end{pmatrix},
    \end{align*}
    but this is only fulfilled if
    \begin{align}
        s^2(\alpha - \lambda_1) \geq 0 \quad \wedge \quad c^2(\alpha - \lambda_1) + \lambda_1 - \mu_1 \geq 0 \quad \wedge \quad \det(\boldsymbol S_{-\varphi_1} - \boldsymbol X_1) \geq 0
        \label{not_unique_relations}
    \end{align}
    apply. \vspace{0.2cm}

    \noindent 2.) Let us assume that $s^2 \neq 0$, then it follows from the first relation of \eqref{not_unique_relations} that $\alpha \geq \lambda_1$. Because of Lemma 5 the eigenvalue $\alpha$ cannot be greater than $\lambda_1$, which leads to $\alpha = \lambda_1$. This in turn leads to the spectral decomposition having the following form:
    \begin{align*}
        \boldsymbol S_{-\varphi_1} = \lambda_1 \left( \bar{\boldsymbol u} \bar{\boldsymbol u}^\textup T + \bar{\boldsymbol v} \bar{\boldsymbol v}^\textup T \right) = \lambda_1 
        \begin{pmatrix}
            c^2 + s^2 \hspace{0.5em}& 0 \\
            0 \hspace{0.5em}& c^2 + s^2
        \end{pmatrix}
        = \lambda_1 \boldsymbol I.
    \end{align*}
    By rotating all eigenvectors back by $\boldsymbol R_{\varphi_1}$, we achieve
    \begin{align*}
        \boldsymbol{S} = \boldsymbol R_{\varphi_1} \boldsymbol{S}_{-\varphi_1} \boldsymbol R_{\varphi_1}^{\textup T} = \lambda_1 \boldsymbol R_{\varphi_1} \boldsymbol{I} \boldsymbol R_{\varphi_1}^{\textup T} = \lambda_1 \boldsymbol{I}.
    \end{align*}
    
    \vspace{0.2cm}

    \noindent 3.) Now we assume the other case $s^2 = 0$. Therefore, we have $s = 0$ and $c = \pm 1$. If we insert these values into the spectral decomposition, we obtain
    \begin{align*}
        \boldsymbol S_{-\varphi_1} = \lambda_1
        \begin{pmatrix}
            1 \hspace{0.5em}& 0\\
            0 \hspace{0.5em}& 0
        \end{pmatrix}
        + \alpha 
        \begin{pmatrix}
            0 \hspace{0.5em}& 0 \\
            0 \hspace{0.5em}& 1
        \end{pmatrix}
        =
        \begin{pmatrix}
            \lambda_1 \hspace{0.5em}& 0 \\
            0 \hspace{0.5em}& \alpha
        \end{pmatrix}.
    \end{align*}
    By comparing with equation \eqref{not_unique_relations}, we see that the only non-trivial condition remaining is $\alpha \geq \mu_1$. 
    
    If now $\boldsymbol u_1 = \pm \boldsymbol u_2$ would hold, the matrix $\boldsymbol E_m$ would have the representation
    \begin{align*}
    \begin{pmatrix}
        2\mathrm{e}^{m \lambda_1} + \sum_{i=3}^n \left( c_i^2 \mathrm{e}^{m \lambda_i} + s_i^2 \mathrm{e}^{m \mu_i} \right) \hspace{0.5em}& \sum_{i=3}^n c_i s_i \left( \mathrm{e}^{m\lambda_i} - \mathrm{e}^{m \mu_i} \right) \\
        \sum_{i=3}^n c_i s_i \left( \mathrm{e}^{m\lambda_i} - \mathrm{e}^{m \mu_i} \right) \hspace{0.5em}& \mathrm{e}^{m \mu_1} + \mathrm{e}^{m \mu_2} + \sum_{i=3}^n \left( s_i^2 \mathrm{e}^{m \lambda_i} + c_i^2 \mathrm{e}^{m \mu_i} \right)
    \end{pmatrix}.
    \end{align*}
    Thus we can repeat all the steps up to Theorem 1 with minor adjustments and get the result from the theorem with $\alpha = \mu_*$, where $\mu_*$ is the next largest eigenvalue of $\mathcal{X}$ whose eigenvector $\boldsymbol{v}_*$ is not aligned with $\boldsymbol{u}_1$. By rotating it back, we achieve as in the proof of Theorem 1:
    \begin{align*}
        \boldsymbol{S} &= \boldsymbol R_{\varphi_1} \boldsymbol{S}_{-\varphi_1} \boldsymbol R_{\varphi_1}^{\textup T} = \boldsymbol R_{\varphi_1} \left( \lambda_1 \boldsymbol{e}_1 \boldsymbol{e}_1^{\textup T} + \mu_* \boldsymbol{e}_2 \boldsymbol{e}_2^{\textup T} \right) \boldsymbol R_{\varphi_1}^{\textup T} \\
        &= \lambda_1 \boldsymbol R_{\varphi_1} \boldsymbol{e}_1 \left( \boldsymbol R_{\varphi_1} \boldsymbol{e}_1 \right)^{\textup T} + \mu_* \boldsymbol R_{\varphi_1} \boldsymbol{e}_2 \left( \boldsymbol R_{\varphi_1} \boldsymbol{u}_2 \right)^{\textup T} = \lambda_1 \boldsymbol{u}_1 \boldsymbol{u}_1^{\textup T} + \mu_* \boldsymbol{v}_1 \boldsymbol{v}_1^{\textup T}.
    \end{align*}
    
    Otherwise, we return to the argumentation of Lemma 4 with the difference that $\lambda_2 = \lambda_1$ holds.
    We then obtain for the Rayleigh product in the case $s_2^2 \neq 0$:
    \begin{align*}
        \overline{R}(\varepsilon) &= \frac{\mathrm{e}^{m \lambda_1}}{1 + \varepsilon^2} \bigg( \big( s_2 - \varepsilon c_2 \big)^2 + \varepsilon^2 + \mathrm{e}^{m (\mu_1- \lambda_1)} + \big( c_2 + \varepsilon s_2 \big)^2 \mathrm{e}^{m (\mu_2- \lambda_1)} \\
        & \quad + \sum_{i=3}^n \Big( \big( s_i - \varepsilon c_i \big)^2 \mathrm{e}^{m (\lambda_i - \lambda_1)} + \big( c_i + \varepsilon s_i \big)^2 \mathrm{e}^{m (\mu_i - \lambda_1)} \Big) \bigg) \\
        &= \frac{\mathrm{e}^{m \lambda_1}}{1 + \varepsilon^2} \bigg( \big( s_2 - \varepsilon c_2 \big)^2 + \varepsilon^2 + \mathcal{O}\left( \mathrm{e}^{m (\mu_* - \lambda_1)} \right) \bigg),
    \end{align*}
    where $\mu_*$ is again the next largest eigenvalue of $\mathcal{X}$ whose eigenvector $\boldsymbol{v}_*$ is not aligned with $\boldsymbol{u}_1$. By determining $\frac{1}{m} \log \overline{R}(\varepsilon)$ again for this and then taking the limit for $m \to \infty$, the $\varepsilon$ terms and the $\mathcal{O}(\cdot)$ term will disappear and only $\lambda_1$ remains. Thus, we have $\boldsymbol{S}_{-\varphi_1} = \lambda_1 \boldsymbol{I}$ again. 
    \hfill $\square$
\end{proof}

At this point, we would also like to give a small example in the case that the largest eigenvalue is not unique.

\begin{example}
    Here we consider the simple example of a bipartite image consisting of the two RGB colours blue $\boldsymbol C_1 = (0, 0, 1)$ and green $\boldsymbol C_2 = (0, 1, 0)$. The corresponding symmetric matrices are
    \begin{align*}
        \boldsymbol{X}_1 = \frac{1}{2 \sqrt{2}}
        \begin{pmatrix}
            \sqrt{3} \hspace{0.5em}& -1 \\
            -1 \hspace{0.5em}& -\sqrt{3}
        \end{pmatrix}
        \approx
        \begin{pmatrix}
            \hspace{0.8em} 0.6124 \hspace{0.5em}& -0.3535 \\
            -0.3535 \hspace{0.5em}& -0.6124
        \end{pmatrix}
    \end{align*}
    and 
    \begin{align*}
        \boldsymbol{X}_2 = \frac{1}{2 \sqrt{2}}
        \begin{pmatrix}
            -\sqrt{3} \hspace{0.5em}& -1 \\
            -1 \hspace{0.5em}& \sqrt{3}
        \end{pmatrix}
        \approx
        \begin{pmatrix}
            -0.6124 \hspace{0.5em}& -0.3535 \\
            -0.3535 \hspace{0.5em}& \hspace{0.8em} 0.6124
        \end{pmatrix}.
    \end{align*}
    This results in the following eigenvalues and eigenvectors:
    \begin{align*}
        &\lambda_1 = \frac{1}{\sqrt{2}} \approx 0.7071,   &&\boldsymbol{u}_1 = \frac{1}{2 \sqrt{2+\sqrt{3}}}
        \begin{pmatrix}
            -\sqrt{3} -2 \\
            1
        \end{pmatrix}
        \approx
        \begin{pmatrix}
            -0.9659 \\
            \hspace{0.8em} 0.2588 
        \end{pmatrix}
        , \\ 
        &\mu_1 = - \lambda_1,  &&\boldsymbol{v}_1 = \frac{1}{2 \sqrt{2+\sqrt{3}}}
        \begin{pmatrix}
            -1 \\
            -\sqrt{3} -2
        \end{pmatrix}
        \approx
        \begin{pmatrix}
            -0.2588 \\
            - 0.9659
        \end{pmatrix}
    \end{align*}
    and
    \begin{align*}
        &\lambda_2 = \lambda_1,   &&\boldsymbol{u}_2 = \frac{1}{2 \sqrt{2+\sqrt{3}}}
        \begin{pmatrix}
            -1 \\
            \sqrt{3} +2
        \end{pmatrix}
        \approx
        \begin{pmatrix}
            -0.2588 \\
            \hspace{0.8em} 0.9659
        \end{pmatrix}
        , \\ 
        &\mu_2 = \mu_1,  &&\boldsymbol{v}_2 = \frac{1}{2 \sqrt{2+\sqrt{3}}}
        \begin{pmatrix}
            -\sqrt{3} -2 \\
            -1
        \end{pmatrix}
        \approx
        \begin{pmatrix}
            -0.9659 \\
            -0.2588 
        \end{pmatrix}.
    \end{align*}
    Since the largest eigenvalue is not unique and the corresponding eigenvectors $\boldsymbol{u}_1, \boldsymbol{u}_2$ are not equal, we obtain by Theorem 2 the LES
    \begin{align*}
        \boldsymbol{S} = \lambda_1 \boldsymbol{I} = \frac{1}{\sqrt{2}} \boldsymbol{I} \approx 
        \begin{pmatrix}
            0.7071 \hspace{0.5em}& 0 \\
            0 \hspace{0.5em}& 0.7071
        \end{pmatrix},
    \end{align*}
    which represents the RGB colour white $(1, 1, 1)$.
\end{example}

\subsection{General Characterisation and Properties of the LES}

In this section we will give a general characterisation of the LES for any combination of eigenvalues. Based on this, we will show two interesting properties in the form of transitivity and associativity with respect to dilation. The latter property in particular sets our approach apart from other multidimensional approaches, since, to our knowledge, it does not exist in colour morphology as opposed to grey-scale morphology.

In order to provide greater clarity, we shall now present a summary of the two preceding theorems in the form of a corollary that gives us a general characterisation of the LES.

\begin{corollary}
    Let $\boldsymbol{S}$ be the LES \eqref{LES} of the multi-set $\mathcal{X} = \{\boldsymbol{X}_1, \dots, \boldsymbol{X}_n\}$, $n \in \mathbb{N}$, of symmetric real $2 \times 2$ matrices with the spectral decompositions \eqref{spectral_decomposition} and $\lambda_1$ (one of) the largest eigenvalues of all these matrices. Furthermore, let $\mathcal{V}(\mathcal{X})$ be the set of all eigenvectors of the matrices of $\mathcal{X}$ and $\mathcal{V}_{\sup}^{\lambda_1}(\mathcal{X})$ be the set of corresponding eigenvectors to the largest eigenvalues equal to $\lambda_1$, i.e.
    \begin{align*}
        \mathcal{V}_{\sup}^{\lambda_1}(\mathcal{X}) := \{ \boldsymbol{v} \in \mathcal{V}(\mathcal{X}) : \exists \boldsymbol{X} \in \mathcal{X}: \boldsymbol{X} \boldsymbol{v} = \lambda \boldsymbol{v} \wedge \lambda = \lambda_1 \}.
    \end{align*}
    Then, the LES can be characterised as follows:
    \begin{align}
        \boldsymbol{S} = 
        \begin{cases}
            \lambda_1 \boldsymbol{I}, & \textup{if } \lambda_1 \textup{ is not unique and } \exists \boldsymbol{v} \in \mathcal{V}_{\sup}^{\lambda_1}(\mathcal{X}): \boldsymbol{v} \neq \pm \boldsymbol{u}_1, \\
            \lambda_1 \boldsymbol{u}_1 \boldsymbol{u}_1^{\textup T} + \mu_* \boldsymbol{v}_1 \boldsymbol{v}_1^{\textup T}, & \textup{otherwise},
        \end{cases}
        \label{LES_computed}
    \end{align}
    where $\mu_* \leq \lambda_1$ is the next largest eigenvalue of $\mathcal{X}$ whose corresponding eigenvector $\boldsymbol{v}_*$ is not aligned with $\boldsymbol{u}_1$.
\end{corollary}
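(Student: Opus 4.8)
The plan is to obtain the corollary as a direct consequence of Theorems 1 and 2, supplemented by Lemmas 5 and 6 to cover the multiplicities that neither theorem treats in full generality. As in both theorem proofs, I would first conjugate by the rotation $\boldsymbol{R}_{-\varphi_1}$ so that condition \eqref{rotation} holds and $\boldsymbol{u}_1 = \boldsymbol{e}_1$; since the whole LES construction \eqref{LES} is equivariant under orthogonal congruence, establishing the claim in this normalised frame and rotating back with $\boldsymbol{R}_{\varphi_1}$ reproduces the stated form.

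Next I would split on whether $\lambda_1$ is unique. If it is, the hypotheses of Theorem 1 hold verbatim and its conclusion is exactly the second line of \eqref{LES_computed}; moreover the condition ``$\lambda_1$ not unique'' fails, so we are correctly placed in the \textit{otherwise} branch. This disposes of the entire unique case without further work.

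For the non-unique case the crucial observation is that one need not repeat the Rayleigh-product asymptotics of Theorem 2 in full generality. Instead I would invoke the two structural facts already proved: by Lemma 6 the matrix $\boldsymbol{S}$ is a Loewner upper bound for every $\boldsymbol{X}_i$, and by Lemma 5 its largest eigenvalue is at most $\lambda_1$. Suppose first that some $\boldsymbol{v} \in \mathcal{V}_{\sup}^{\lambda_1}(\mathcal{X})$ satisfies $\boldsymbol{v} \neq \pm\boldsymbol{u}_1$. From $\boldsymbol{S} \geq_\textup{L} \lambda_1 \boldsymbol{u}_1 \boldsymbol{u}_1^\textup{T}$ and $\boldsymbol{S} \geq_\textup{L} \lambda_1 \boldsymbol{v} \boldsymbol{v}^\textup{T}$ one obtains $\langle \boldsymbol{u}_1, \boldsymbol{S} \boldsymbol{u}_1 \rangle \geq \lambda_1$ and $\langle \boldsymbol{v}, \boldsymbol{S} \boldsymbol{v} \rangle \geq \lambda_1$, while Lemma 5 forces both Rayleigh quotients to be at most $\lambda_1$. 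Hence both equal $\lambda_1$, so $\boldsymbol{u}_1$ and $\boldsymbol{v}$ are eigenvectors of $\boldsymbol{S}$ for the eigenvalue $\lambda_1$; being linearly independent in $\mathbb{R}^2$, they force the symmetric matrix $\boldsymbol{S}$ to equal $\lambda_1 \boldsymbol{I}$, the first branch. Conversely, if every vector of $\mathcal{V}_{\sup}^{\lambda_1}(\mathcal{X})$ equals $\pm\boldsymbol{u}_1$, then every occurrence of $\lambda_1$ lies in the $\boldsymbol{e}_1$-direction; in $\boldsymbol{E}_m$ this merely multiplies the leading entry $\mathrm{e}^{m\lambda_1}$ by the finite multiplicity, a factor washed out by $\frac{1}{m}\log(\,\cdot\,)$, so Lemmas 1--4 apply unchanged and return the second branch with $\mu_*$ the next largest eigenvalue possessing a non-aligned eigenvector.

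The main obstacle I anticipate is the bookkeeping for the degenerate configuration $\mu_1 = \lambda_1$, i.e.\ a matrix equal to $\lambda_1 \boldsymbol{I}$, whose eigenbasis is not canonically determined, so that the reference vector $\boldsymbol{u}_1$ is ambiguous. The clean resolution is that such a matrix automatically contributes a top-eigenvector transverse to any chosen $\boldsymbol{u}_1$, hence routes us into the first branch $\boldsymbol{S} = \lambda_1 \boldsymbol{I}$, whose value is manifestly independent of the ambiguous choice. Confirming that this is simultaneously consistent with the order-theoretic argument above and with the exclusion built into the definition of $\mathcal{V}_{\sup}^{\lambda_1}(\mathcal{X})$ is the one point that demands genuine care.
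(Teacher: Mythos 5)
Your proposal is correct, but it does not simply mirror the paper: the paper states this corollary without a separate proof, presenting it purely as a restatement of Theorems~1 and~2 (whose proofs carry the full Rayleigh-product asymptotics, the conditions \eqref{not_unique_relations}, and the determinant computation for the non-unique case). You reuse Theorem~1 verbatim for the unique case, but for the first branch of the non-unique case you substitute a shorter order-theoretic squeeze: Lemma~6 gives $\langle \boldsymbol{u}_1, \boldsymbol{S}\boldsymbol{u}_1\rangle \geq \langle \boldsymbol{u}_1, \boldsymbol{X}_1\boldsymbol{u}_1\rangle = \lambda_1$ and $\langle \boldsymbol{v}, \boldsymbol{S}\boldsymbol{v}\rangle \geq \lambda_1$ for a non-aligned top eigenvector $\boldsymbol{v}$, Lemma~5 caps the top eigenvalue of $\boldsymbol{S}$ at $\lambda_1$, and the equality case of Rayleigh--Ritz then forces two linearly independent eigenvectors for $\lambda_1$, hence $\boldsymbol{S}=\lambda_1\boldsymbol{I}$. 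This is cleaner than the paper's route through the spectral decomposition of $\boldsymbol{S}_{-\varphi_1}$ and the positivity conditions, and it buys a genuinely elementary derivation of that branch; the paper's longer argument, in exchange, is the one that also delivers the aligned subcase, which you (like the paper) still have to handle by rerunning Lemmas~1--4 with the multiplicity absorbed into the $\mathrm{e}^{m\lambda_1}$ coefficient. Your treatment of the degenerate configuration $\boldsymbol{X}_1=\lambda_1\boldsymbol{I}$ matches the paper's remark preceding Theorem~2. One small imprecision: the intermediate claims $\boldsymbol{S}\geq_{\textup L}\lambda_1\boldsymbol{u}_1\boldsymbol{u}_1^{\textup T}$ and $\boldsymbol{S}\geq_{\textup L}\lambda_1\boldsymbol{v}\boldsymbol{v}^{\textup T}$ do not follow from Lemma~6 when the minor eigenvalues are negative (since $\lambda\boldsymbol{u}\boldsymbol{u}^{\textup T}+\mu\boldsymbol{v}\boldsymbol{v}^{\textup T}\geq_{\textup L}\lambda\boldsymbol{u}\boldsymbol{u}^{\textup T}$ requires $\mu\geq0$); you should instead pass directly from $\boldsymbol{S}\geq_{\textup L}\boldsymbol{X}_j$ to the Rayleigh-quotient inequalities evaluated at the eigenvectors, which is all your argument actually uses, so the conclusion stands.
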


We will show in the following proposition that the LES $\boldsymbol S$, as previously characterised by \eqref{LES_computed}, exhibits a transitive property in general.

\begin{proposition}
    The LES \eqref{LES} is transitive, i.e. for multi-sets $\mathcal{X}$ and $\mathcal{Y}$ of symmetric $2 \times 2$ matrices one has 
    \begin{align}
        \Sup_{\textup{LE}}(\mathcal{X} \cup \mathcal{Y}) = \Sup_{\textup{LE}}\big(\{\Sup_{\textup{LE}}(\mathcal{X}), \Sup_{\textup{LE}}(\mathcal{Y})\}\big).
        \label{Transitivity}
    \end{align}
\end{proposition}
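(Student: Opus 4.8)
The plan is to establish \eqref{Transitivity} through the closed-form characterisation of the LES from Corollary 1, rather than by manipulating the limit \eqref{LES} directly. The first observation I would record is that the two branches of \eqref{LES_computed} merge into a single formula: whenever $\lambda_1$ is attained in two transverse directions one has $\mu_* = \lambda_1$, and then $\lambda_1 \boldsymbol{u}_1 \boldsymbol{u}_1^{\textup T} + \mu_* \boldsymbol{v}_1 \boldsymbol{v}_1^{\textup T} = \lambda_1(\boldsymbol{u}_1 \boldsymbol{u}_1^{\textup T} + \boldsymbol{v}_1 \boldsymbol{v}_1^{\textup T}) = \lambda_1 \boldsymbol{I}$ for any orthonormal basis $\{\boldsymbol{u}_1,\boldsymbol{v}_1\}$. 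Hence for every multi-set $\mathcal{Z}$ the LES is pinned down by two scalars and one direction: the global maximal eigenvalue $\lambda_1(\mathcal{Z})$ with a maximising unit eigenvector $\boldsymbol{u}_1(\mathcal{Z})$, and the maximal eigenvalue $\mu_*(\mathcal{Z})$ taken over all eigenpairs of members of $\mathcal{Z}$ whose eigenvector is not aligned with $\boldsymbol{u}_1(\mathcal{Z})$. It then suffices to show that the triple $(\lambda_1,\boldsymbol{u}_1,\mu_*)$ coincides for $\mathcal{X}\cup\mathcal{Y}$ and for $\{\boldsymbol{S}_{\mathcal{X}},\boldsymbol{S}_{\mathcal{Y}}\}$, where $\boldsymbol{S}_{\mathcal{X}} := \Sup_{\textup{LE}}(\mathcal{X})$ and $\boldsymbol{S}_{\mathcal{Y}} := \Sup_{\textup{LE}}(\mathcal{Y})$.

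The technical core is a transfer property: for any multi-set $\mathcal{Z}$ and any unit vector $\boldsymbol{d}$, the largest eigenvalue of $\boldsymbol{S}_{\mathcal{Z}}$ among directions not aligned with $\boldsymbol{d}$ equals the largest eigenvalue occurring anywhere in $\mathcal{Z}$ among directions not aligned with $\boldsymbol{d}$. I would prove this by distinguishing whether $\boldsymbol{d}$ is aligned with $\boldsymbol{u}_1(\mathcal{Z})$. If $\boldsymbol{d} = \pm\boldsymbol{u}_1(\mathcal{Z})$, the transverse maximum over $\mathcal{Z}$ is precisely $\mu_*(\mathcal{Z})$ by definition, and $\boldsymbol{S}_{\mathcal{Z}}$ carries exactly this value on its eigenvector $\boldsymbol{v}_1 \perp \boldsymbol{u}_1$. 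If $\boldsymbol{d} \neq \pm\boldsymbol{u}_1(\mathcal{Z})$, then $\lambda_1(\mathcal{Z})$ is itself attained in the transverse direction $\boldsymbol{u}_1(\mathcal{Z})$, so both maxima equal $\lambda_1(\mathcal{Z})$, which $\boldsymbol{S}_{\mathcal{Z}}$ retains. The degenerate case $\boldsymbol{S}_{\mathcal{Z}} = \lambda_1 \boldsymbol{I}$ is consistent with both alternatives, since then $\lambda_1$ occurs in every direction of $\boldsymbol{S}_{\mathcal{Z}}$ and, by the merging above, is also attained by $\mathcal{Z}$ in at least two transverse directions.

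With this lemma available I would assemble the result. Since the eigenpairs of $\mathcal{X}\cup\mathcal{Y}$ are the disjoint union of those of $\mathcal{X}$ and $\mathcal{Y}$, one has $\lambda_1(\mathcal{X}\cup\mathcal{Y}) = \max\{\lambda_1(\mathcal{X}),\lambda_1(\mathcal{Y})\}$, which is exactly the largest eigenvalue of $\boldsymbol{S}_{\mathcal{X}}$ and $\boldsymbol{S}_{\mathcal{Y}}$, hence of $\{\boldsymbol{S}_{\mathcal{X}},\boldsymbol{S}_{\mathcal{Y}}\}$, and it is attained in the same direction $\boldsymbol{u}_1^{\cup}$ on both sides; this fixes $(\lambda_1,\boldsymbol{u}_1)$. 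For the second entry I would write the transverse maximum of the union (with respect to $\boldsymbol{d} = \boldsymbol{u}_1^{\cup}$) as the maximum of the transverse maxima over $\mathcal{X}$ and over $\mathcal{Y}$; the transfer lemma replaces each of these by the corresponding transverse maximum of $\boldsymbol{S}_{\mathcal{X}}$ and $\boldsymbol{S}_{\mathcal{Y}}$, whose maximum is $\mu_*(\{\boldsymbol{S}_{\mathcal{X}},\boldsymbol{S}_{\mathcal{Y}}\})$. Thus $\mu_*$ agrees as well, and the $\lambda_1\boldsymbol{I}$ classification transfers because it is equivalent to $\mu_* = \lambda_1$. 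Consequently \eqref{LES_computed} returns the same matrix on both sides of \eqref{Transitivity}.

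The step I expect to be the main obstacle is the careful treatment of ties and of the degenerate $\lambda_1 \boldsymbol{I}$ case: one must ensure that the maximising direction $\boldsymbol{u}_1^{\cup}$ is selected consistently for $\mathcal{X}\cup\mathcal{Y}$ and for $\{\boldsymbol{S}_{\mathcal{X}},\boldsymbol{S}_{\mathcal{Y}}\}$, and — most delicately — that forming $\boldsymbol{S}_{\mathcal{X}}$ and $\boldsymbol{S}_{\mathcal{Y}}$, which discards every eigenvalue except the top and top-transverse one of each multi-set, does not destroy the information needed to evaluate the union's transverse maximum along $\boldsymbol{u}_1^{\cup}$, a direction that may coincide with neither $\boldsymbol{u}_1(\mathcal{X})$ nor $\boldsymbol{u}_1(\mathcal{Y})$. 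The transfer lemma is precisely what rules this out, so its two-case verification is the crux of the argument.
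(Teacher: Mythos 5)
Your argument is correct and follows the same overall strategy as the paper: both sides of \eqref{Transitivity} are evaluated through the closed-form characterisation \eqref{LES_computed} and shown to carry the same spectral data $(\lambda_1,\boldsymbol{u}_1,\mu_*)$. What you do differently is, first, to merge the two branches of \eqref{LES_computed} into one formula (observing that $\mu_*=\lambda_1$ in the non-unique transverse case, so that $\lambda_1\boldsymbol{u}_1\boldsymbol{u}_1^{\textup T}+\mu_*\boldsymbol{v}_1\boldsymbol{v}_1^{\textup T}=\lambda_1\boldsymbol{I}$), and second, to package the entire case analysis into a single reusable transfer lemma evaluated at the direction $\boldsymbol{d}=\boldsymbol{u}_1^{\cup}$. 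The paper instead treats the $\lambda_1\boldsymbol{I}$ branch and the generic branch separately and, in its final step, leaves the key identification of the second eigenvalues at the level of ``it can be shown''; your lemma is precisely the missing explicit argument, so your version is the tighter of the two. One point is worth spelling out when you write the lemma up: it is a statement about eigenvalue \emph{values} only, and that is exactly what is needed, because forming $\Sup_{\textup{LE}}(\mathcal{X})$ relocates $\mu_*(\mathcal{X})$ from its original eigendirection $\boldsymbol{v}_*$ to $\boldsymbol{u}_1(\mathcal{X})^{\perp}$, so eigendirections are not preserved by the inner suprema; the characterisation \eqref{LES_computed} is insensitive to this relocation because it uses the transverse eigenvector only through the alignment test against $\boldsymbol{u}_1$, and your two-case verification (whether or not $\boldsymbol{d}$ is aligned with $\boldsymbol{u}_1(\mathcal{Z})$, plus the $\lambda_1\boldsymbol{I}$ degeneration) covers both the value and the alignment bookkeeping correctly.
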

\begin{proof}
    1.) For the multi-sets we will use the notation
    \begin{align*}
        &\mathcal{X} = \{\boldsymbol X_1,\dots , \boldsymbol X_n\}, \quad \mathcal{Y} = \{\boldsymbol Y_1,\dots,\boldsymbol Y_m\}, \quad m,n \in \mathbb N, \\
        &\mathcal{Z} = \{\boldsymbol Z_1,\dots,\boldsymbol Z_{n+m}\} := \mathcal{X} \cup \mathcal{Y} = \{\boldsymbol X_1,\dots,\boldsymbol X_n,\boldsymbol Y_1,\dots,\boldsymbol Y_m\}.
    \end{align*}
    The right-hand side of \eqref{Transitivity} can be calculated as
    \begin{align*}
        &\boldsymbol{S}_r := \Sup_{\textup{LE}}\big(\{\boldsymbol{S}_1, \boldsymbol{S}_2\}\big) := \Sup_{\textup{LE}}\big(\{\Sup_{\textup{LE}}(\mathcal{X}), \Sup_{\textup{LE}}(\mathcal{Y})\}\big) 
        .
    \end{align*}
    The LES $\boldsymbol{S}_1$ and $\boldsymbol{S}_2$ are characterised according to \eqref{LES_computed} as
    \begin{align*}
        \boldsymbol{S}_1 = 
        \begin{cases}
            \lambda_1 \boldsymbol{I}, & \text{if } \lambda_1 \text{ is not unique and } \exists \boldsymbol{v} \in \mathcal{V}_{\sup}^{\lambda_1}(\mathcal{X}): \boldsymbol{v} \neq \pm \boldsymbol{u}_1, \\
            \lambda_1 \boldsymbol{u}_1 \boldsymbol{u}_1^{\textup T} + \lambda_* \boldsymbol{v}_1 \boldsymbol{v}_1^{\textup T}, & \text{otherwise},
        \end{cases}
    \end{align*}
    and
    \begin{align*}
        \boldsymbol{S}_2 = 
        \begin{cases}
            \mu_1 \boldsymbol{I}, & \text{if } \mu_1 \text{ is not unique and } \exists \boldsymbol{v} \in \mathcal{V}_{\sup}^{\mu_1}(\mathcal{Y}): \boldsymbol{v} \neq \pm \tilde{\boldsymbol{u}}_1, \\
            \mu_1 \tilde{\boldsymbol{u}}_1 \tilde{\boldsymbol{u}}_1^{\textup T} + \mu_* \tilde{\boldsymbol{v}}_1 \tilde{\boldsymbol{v}}_1^{\textup T}, & \text{otherwise}.
        \end{cases}
    \end{align*}
    For the characterisation of $\boldsymbol{S}_1$ we assumed that $\lambda_1$ is the largest eigenvalue of $\mathcal{X}$ with the corresponding eigenvector $\boldsymbol{u}_1$, $\lambda_*$ is the next largest eigenvalue of $\mathcal{X}$ whose eigenvector is not aligned with $\boldsymbol{u}_1$ and $\boldsymbol{v}_1$ is the eigenvector perpendicular to $\boldsymbol{u}_1$. The same can be said of the characterisation of $\boldsymbol{S}_2$, insofar as we use $\mathcal{Y}$, $\mu_1$ and $\mu_*$ instead of $\lambda_1$ and $\lambda_*$ and the eigenvectors $\tilde{\boldsymbol{u}}_1$ and $\tilde{\boldsymbol{v}}_1$ as replacement for $\boldsymbol{u}_1$ and $\boldsymbol{v}_1$. 
    
    Since $\boldsymbol{S}_r$ is again an LES approximation for two symmetric $2 \times 2$ matrices, we can characterise it as follows
    \begin{align*}
        \boldsymbol{S}_r = 
        \begin{cases}
            \eta_1 \boldsymbol{I} &, \text{if } \eta_1 \text{ is not unique and } \exists \boldsymbol{v} \in \mathcal{V}_{\sup}^{\eta_1}(\mathcal{S}): \boldsymbol{v} \neq \pm \bar{\boldsymbol{u}}_1, \\
            \eta_1 \bar{\boldsymbol{u}}_1 \bar{\boldsymbol{u}}_1^{\textup T} + \eta_* \bar{\boldsymbol{v}}_1 \bar{\boldsymbol{v}}_1^{\textup T} &, \text{otherwise},
        \end{cases}
    \end{align*}
    where $\eta_1$ represents the maximum of $\lambda_1$ and $\mu_1$ and it is associated with the corresponding normalised eigenvector $\bar{\boldsymbol u}_1$. The term $\eta_*$, on the other hand, denotes the next largest eigenvalue within the set $\{ \lambda_1, \lambda_*, \mu_1, \mu_* \}$ of possible eigenvalues and, in addition, corresponds to an eigenvector that is not aligned with $\bar{\boldsymbol u}_1$. The eigenvector $\bar{\boldsymbol{v}}_1$ denotes the eigenvector perpendicular to $\bar{\boldsymbol u}_1$ and $\mathcal{S}$ is defined as $\mathcal{S} := \{ \boldsymbol{S}_1, \boldsymbol{S}_2 \}$.

    \vspace{0.2cm}
    
    \noindent 2.) For the left-hand side of \eqref{Transitivity}, we obtain with the characterisation \eqref{LES_computed}:
    \begin{align*}
        \boldsymbol{S}_l &:= \Sup_{\textup{LE}}(\mathcal{Z}) \\
        &\,= 
        \begin{cases}
            \nu_1 \boldsymbol{I}, & \text{if } \nu_1 \text{ is not unique and } \exists \boldsymbol{v} \in \mathcal{V}_{\sup}^{\nu_1}(\mathcal{Z}): \boldsymbol{v} \neq \pm \hat{\boldsymbol{u}}_1, \\
            \nu_1 \hat{\boldsymbol{u}}_1 \hat{\boldsymbol{u}}_1^{\textup T} + \nu_* \hat{\boldsymbol{v}}_1 \hat{\boldsymbol{v}}_1^{\textup T}, & \text{otherwise},
        \end{cases}
    \end{align*}
    where $\nu_1$ is (one of) the largest eigenvalues of $\mathcal{Z}$ with the corresponding normalised eigenvector $\hat{\boldsymbol u}_1$, $\nu_*$ is the next largest eigenvalue of $\mathcal{Z}$ with an eigenvector that is not aligned with $\nu_1$ and $\hat{\boldsymbol v}_1$ is the normalised eigenvector perpendicular to $\hat{\boldsymbol u}_1$. Since $\nu_1$ is the largest eigenvalue of $\mathcal{Z} = \mathcal{X} \cup \mathcal{Y}$, it is also the largest eigenvalue of $\mathcal{X}$ and $\mathcal{Y}$ and as such it fulfils $\nu_1 = \eta_1$. 

    \vspace{0.2cm}
    
    \noindent 3.) The first case of $\boldsymbol{S}_l$ will only happen if there are at least two largest eigenvalues in $\mathcal{Z}$ whose eigenvectors are not aligned. As they would be the largest eigenvalues of $\mathcal{Z}$, they would also be the largest eigenvalues of $\mathcal{X}$ or $\mathcal{Y}$. In the event that both values correspond to matrices from the multi-set $\mathcal{X}$, then $\boldsymbol S_1$ would be a diagonal matrix with these eigenvalues. The same would apply for $\boldsymbol{S}_2$ if they would both correspond to matrices from $\mathcal{Y}$. This implies that in these cases, one of $\boldsymbol{S}_1$ and $\boldsymbol{S}_2$ would be a diagonal matrix with these eigenvalues. As they are part of the largest eigenvalues, the corresponding matrix would be selected for $\boldsymbol{S}_r$. 
    
    For the case that these two eigenvalues are distributed between $\mathcal{X}$ and $\mathcal{Y}$, they would nevertheless be selected as either $\lambda_1$, $\lambda_*$, $\mu_1$ or $\mu_*$. This is because they remain the largest eigenvalues, and any other largest eigenvalues would have the same eigenvectors as these two, which would then be deemed equivalent to the corresponding one of the two largest eigenvalues with non-aligned eigenvectors. Consequently, the corresponding eigenvectors would be in $\mathcal{V}_{\sup}^{\eta_1}(\mathcal{S})$, and $\boldsymbol{S}_r$ would take the form of a diagonal matrix with these two eigenvalues. In conclusion, The first case of $\boldsymbol{S}_l$ occurs if and only if it occurs at $\boldsymbol{S}_r$ and they coincide. 

    \vspace{0.2cm}
    
    \noindent 4.) For the second case, we use that we have already identified $\eta_1$ with $\nu_1$. This implies that the corresponding eigenvectors are also the same, that is, we have $\hat{\boldsymbol u}_1 = \bar{\boldsymbol u}_1$ and $\hat{\boldsymbol v}_1 = \bar{\boldsymbol v}_1$. Ultimately, it follows from the construction of the the corresponding characterisations shown above and a similar reasoning as in the first case that $\nu_2 = \eta_2$ must hold. Furthermore, it can be shown that $\boldsymbol{S}_l$ and $\boldsymbol{S}_r$ are identical in this case. \hfill $\square$
\end{proof}

To conclude this section, we want to use this property to prove the associativity of LES-dilation. For this purpose, we first define what we understand by LES-dilation and LES-erosion. 

\begin{definition}
    Let $\Omega \subseteq \mathbb{Z}^2$ be the two-dimensional, discrete image domain, $\boldsymbol{f}: \Omega \to \mathbb R^3$ a colour image and $\boldsymbol{b}: \mathbb{Z}^2 \to \mathbb R^3 \cup \{ (-\infty, -\infty, -\infty)^{\textup T} \}$ the structuring function with components according to \eqref{SE} of the set $B_0 \subset \mathbb Z^2$. Furthermore, let $\boldsymbol{\tau}: \mathbb R^3 \to \textup{Sym}(2)$ be the transformation from the vector of the corresponding colour space to the symmetric $2 \times 2$ matrix according to subsection 2.1. Then we define the \textbf{LES-dilation} as
    \begin{align}
    \begin{split}
        &\left( \boldsymbol{f} \oplus_{\textup{LES}} \boldsymbol{b} \right) (\boldsymbol{x}) := \boldsymbol{\tau}^{-1} \Big( \Sup_{\textup{LE}} \big( \big\{ \boldsymbol{\tau}(\boldsymbol{f}(\boldsymbol{x} - \boldsymbol{u})) + \boldsymbol{\tau}(\boldsymbol{b}(\boldsymbol{u})) : \boldsymbol{u} \in \mathbb Z^2 \big\} \big) \Big), 
        \quad \boldsymbol{x} \in \Omega.
    \end{split} \label{LES-dil}
    \end{align}
    In accordance with the duality \eqref{duality_dil-ero} between dilation and erosion for the grey value case, we can also define the \textbf{LES-erosion} here by means of LES-dilation as follows:
    \begin{align}
    \begin{split}
        \left( \boldsymbol{f} \ominus_{\textup{LES}} \boldsymbol{b} \right)(\boldsymbol{x}) := \left( \boldsymbol{f}^c \oplus_{\textup{LES}} \boldsymbol{b} \right)^c (\boldsymbol{x}), \quad \boldsymbol{x} \in \Omega,
    \end{split} \label{LES-ero}
    \end{align}
    where the complement of a colour vector is to be understood as an component-wise complementation according to \eqref{comp_image}:
    \begin{align*}
        \boldsymbol{f}^c(\boldsymbol{x}) &= \left( (R, G, B)^{\textup T} \right)^c := (R^c, G^c, B^c)^{\textup T} \\
        &= (R_{\max} - R + R_{\min}, G_{\max} - G + G_{\min}, B_{\max} - B + B_{\min})^{\textup T}, \quad \boldsymbol{x} \in \Omega.
    \end{align*}
    Here, $R_{\max}$ represents the largest red value of the image $\boldsymbol{f}$ and $R_{\min}$ the smallest; the same applies to the corresponding $G$ and $B$ terms.
\end{definition}

\noindent We now turn to the mentioned associativity of LES-dilation.

\begin{theorem}
    The LES-dilation \eqref{LES-dil} is associative, i.e. for a colour image $\boldsymbol f: \mathbb{Z}^2 \rightarrow [0,1]^3$ and for the structuring elements given by the structuring functions $\boldsymbol b_1, \boldsymbol b_2: \mathbb{Z}^2 \rightarrow \mathbb{R}^3 \cup \left\{ (-\infty, -\infty, -\infty)^\textup T \right\}$ of two sets $B_1, B_2 \subset \mathbb Z^2$ applies 
    \begin{align}
        (\boldsymbol f \oplus_{\textup{LES}} \boldsymbol b_1) \oplus_{\textup{LES}} \boldsymbol b_2 = \boldsymbol f \oplus_{\textup{LES}} (\boldsymbol b_1 \oplus_{\textup{LES}} \boldsymbol b_2).
        \label{associativity}
    \end{align}
\end{theorem}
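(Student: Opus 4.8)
The plan is to reduce \eqref{associativity} to an identity between symmetric matrices at each pixel and then to flatten the two nested suprema into a single one via the transitivity of Proposition~1. Since $\boldsymbol{\tau}$ is a bijection, it suffices to show that the two sides agree after applying $\boldsymbol{\tau}$; fixing $\boldsymbol{x}\in\Omega$ I would abbreviate $\boldsymbol{F}_{\boldsymbol{w}}:=\boldsymbol{\tau}(\boldsymbol{f}(\boldsymbol{x}-\boldsymbol{w}))$ and $\boldsymbol{B}_j^{\boldsymbol{u}}:=\boldsymbol{\tau}(\boldsymbol{b}_j(\boldsymbol{u}))$, noting that the offsets are $-\infty$ outside the finite supports $B_1,B_2$, so all multi-sets occurring below are finite and \eqref{LES} applies. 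Expanding the left-hand side of \eqref{associativity} with \eqref{LES-dil} twice yields the nested form $\Sup_{\textup{LE}}\big(\{\Sup_{\textup{LE}}(\mathcal{A}_{\boldsymbol{v}})+\boldsymbol{B}_2^{\boldsymbol{v}}:\boldsymbol{v}\in B_2\}\big)$ with $\mathcal{A}_{\boldsymbol{v}}=\{\boldsymbol{F}_{\boldsymbol{u}+\boldsymbol{v}}+\boldsymbol{B}_1^{\boldsymbol{u}}:\boldsymbol{u}\in B_1\}$, whereas the right-hand side becomes $\Sup_{\textup{LE}}\big(\{\boldsymbol{F}_{\boldsymbol{w}}+\Sup_{\textup{LE}}(\mathcal{B}_{\boldsymbol{w}}):\boldsymbol{w}\}\big)$ with $\mathcal{B}_{\boldsymbol{w}}=\{\boldsymbol{B}_1^{\boldsymbol{w}-\boldsymbol{v}}+\boldsymbol{B}_2^{\boldsymbol{v}}:\boldsymbol{v}\in B_2\}$.

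Two ingredients would then collapse each side to a single supremum. The first is transitivity \eqref{Transitivity}, iterated from two multi-sets to finitely many, which lets me merge an outer $\Sup_{\textup{LE}}$ of inner LES values into one $\Sup_{\textup{LE}}$ over the union of the inner multi-sets. The second is a translation-covariance law
\begin{align*}
\Sup_{\textup{LE}}\big(\{\boldsymbol{A}_i+\boldsymbol{C}\}\big)=\Sup_{\textup{LE}}\big(\{\boldsymbol{A}_i\}\big)+\boldsymbol{C},
\end{align*}
needed to pull the offsets $\boldsymbol{B}_2^{\boldsymbol{v}}$ past the inner supremum on the left, and the image matrices $\boldsymbol{F}_{\boldsymbol{w}}$ past it on the right. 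Granting both, each side reduces to $\Sup_{\textup{LE}}\big(\{\boldsymbol{F}_{\boldsymbol{u}+\boldsymbol{v}}+\boldsymbol{B}_1^{\boldsymbol{u}}+\boldsymbol{B}_2^{\boldsymbol{v}}:\boldsymbol{u}\in B_1,\boldsymbol{v}\in B_2\}\big)$; the reindexing $\boldsymbol{w}=\boldsymbol{u}+\boldsymbol{v}$ together with the associativity of matrix addition identifies the two index sets, and applying $\boldsymbol{\tau}^{-1}$ finishes the proof exactly as for the scalar max-plus dilation \eqref{dilation}.

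The translation-covariance law is where I expect the real difficulty to sit, and it is the step that genuinely separates the matrix-valued case from the scalar one. From \eqref{LES} it would be immediate if $\exp\big(m(\boldsymbol{A}_i+\boldsymbol{C})\big)=\exp(m\boldsymbol{C})\exp(m\boldsymbol{A}_i)$, but this factorisation is valid only when $\boldsymbol{C}$ commutes with each $\boldsymbol{A}_i$. For a scalar offset $\boldsymbol{C}=c\boldsymbol{I}$ the factor $\mathrm{e}^{mc}$ leaves the sum and the covariance is trivial — this is precisely the freedom to shift all eigenvalues by a constant already used in the proof of Lemma~3 — so I would first dispatch the case in which every $\boldsymbol{\tau}(\boldsymbol{b}_j(\cdot))$ is a multiple of the identity, which in particular covers flat structuring elements and reproduces the one-dimensional associativity verbatim.

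For general, non-commuting offsets the exponential no longer factors, so I would abandon the limit formula in favour of the closed spectral description in Corollary~1 and track how the selected pair — the largest eigenvalue with its eigenvector, and the next eigenvalue whose eigenvector is not aligned with it — transforms under each grouping. The main obstacle is to verify that this selection is compatible with adding the remaining offset, that is, that both groupings attach the same two eigenvalues to the same eigendirections; this demands the same case analysis (unique versus repeated largest eigenvalue, aligned versus non-aligned eigenvectors) that drives Theorem~2 and Proposition~1, and it is the part on which the full generality of the statement ultimately hinges.
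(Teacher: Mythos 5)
Your architecture coincides with the paper's: expand both sides pointwise, pull the constant offset matrices inside the inner suprema, and collapse the nested suprema with the transitivity of Proposition~1. The gap sits exactly where you suspect it does, but your proposal does not close it: the translation-covariance law $\Sup_{\textup{LE}}(\{\boldsymbol A_i+\boldsymbol C\}_i)=\Sup_{\textup{LE}}(\{\boldsymbol A_i\}_i)+\boldsymbol C$ is left as an announced case analysis rather than a proof, and in the generality you need it is actually false. Take $\boldsymbol A_1=\begin{pmatrix}1&0\\0&0\end{pmatrix}$, $\boldsymbol A_2=\begin{pmatrix}0&0\\0&1\end{pmatrix}$ and $\boldsymbol C=\begin{pmatrix}0&1\\1&0\end{pmatrix}$. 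By Corollary~1 (or directly from \eqref{LES}) one has $\Sup_{\textup{LE}}(\{\boldsymbol A_1,\boldsymbol A_2\})=\boldsymbol I$, so the right-hand side equals $\boldsymbol I+\boldsymbol C$, with eigenvalues $2$ and $0$; but $\boldsymbol A_1+\boldsymbol C$ and $\boldsymbol A_2+\boldsymbol C$ share the largest eigenvalue $(1+\sqrt{5})/2$ with non-aligned major eigenvectors, so the left-hand side equals $\frac{1+\sqrt{5}}{2}\boldsymbol I$. The two disagree. Hence the covariance holds only when the offset commutes with all the $\boldsymbol A_i$ (in particular for $\boldsymbol C=c\boldsymbol I$, i.e.\ the flat case you do settle), and the fallback you sketch via the spectral selection rule of Corollary~1 would have to confront precisely this failure rather than merely reproduce the selection on both sides.

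For what it is worth, the paper's own proof uses the identical step and justifies it with a single sentence (the summation with a constant matrix ``does not influence the decision of the LES except of a global rotation of the eigenvectors and change of the eigenvalues''), which is accurate only when the offset is a multiple of the identity; for a general symmetric offset the matrices $\boldsymbol A_i+\boldsymbol C$ do not arise from a common rotation of the original eigenbases. So you have faithfully reproduced the skeleton of the paper's argument and correctly located its weakest joint; but as written, neither your proposal nor the deferred case analysis establishes \eqref{associativity} beyond commuting (in particular flat) structuring functions.
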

\begin{proof}
    To proof this claim, we will compare the considered matrices for both sides of the equation. We use the expression $\boldsymbol{X}(\boldsymbol{x}) := \boldsymbol{\tau}(\boldsymbol{f}(\boldsymbol{x}))$ to notate the matrices of the image $\boldsymbol{f}$. Similarly, we do this for the structuring functions of $B_1$ and $B_2$ using the matrices $\boldsymbol{W}_1(\boldsymbol{x}) := \boldsymbol{\tau}(\boldsymbol{b}_1(\boldsymbol{x}))$ and $\boldsymbol{W}_2(\boldsymbol{x}) := \boldsymbol{\tau}(\boldsymbol{b}_2(\boldsymbol{x}))$.

    For the left-hand side of equation \eqref{associativity}, we calculate for the first dilation
    \begin{align*}
        (\boldsymbol{f} \oplus_{\textup{LES}} \boldsymbol{b}_1)(\boldsymbol{x}) = \boldsymbol{\tau}^{-1} \Big( \Sup_{\textup{LE}} \big(\{\boldsymbol{X}(\boldsymbol{x} - \boldsymbol{y}) + \boldsymbol{W}_1(\boldsymbol{y}): \boldsymbol{y} \in B_1\}\big) \Big), \quad \boldsymbol{x} \in \Omega,
    \end{align*}
    and by the second dilation
    \begin{align*}
        &\big((\boldsymbol f \oplus_{\textup{LES}} \boldsymbol b_1) \oplus_{\textup{LES}} \boldsymbol b_2\big)(\boldsymbol{x}) \\
        &= \boldsymbol{\tau}^{-1} \bigg( \Sup_{\textup{LE}} \Big( \big\{ \boldsymbol{\tau} \big( (\boldsymbol{f} \oplus_{\textup{LES}} \boldsymbol{b}_1)(\boldsymbol{x} - \boldsymbol{z}) \big) + \boldsymbol{W}_2(\boldsymbol{z}): \boldsymbol{z} \in B_2 \big\} \Big) \bigg) \\
        &= \boldsymbol{\tau}^{-1} \bigg( \Sup_{\textup{LE}} \Big( \big\{ \Sup_{\textup{LE}} \big(\{\boldsymbol{X}(\boldsymbol{x} - \boldsymbol{y} - \boldsymbol{z}) + \boldsymbol{W}_1(\boldsymbol{y}): \boldsymbol{y} \in B_1\}\big)\\
        & \quad + \boldsymbol{W}_2(\boldsymbol{z}): \boldsymbol{z} \in B_2 \big\} \Big) \bigg), \quad \boldsymbol{x} \in \Omega. 
    \end{align*}
    Since $B_2$ is a discrete set, we can numerate the elements of $B_2$ with $\boldsymbol{z}_1, \dots, \boldsymbol{z}_m$, $m \in \mathbb N$, and rewrite the equation as
    \begin{align*}
        &\big((\boldsymbol f \oplus_{\textup{LES}} \boldsymbol b_1) \oplus_{\textup{LES}} \boldsymbol b_2\big)(\boldsymbol{x}) \\
        &= \boldsymbol{\tau}^{-1} \bigg( \Sup_{\textup{LE}} \Big( \Sup_{\textup{LE}} \big(\{\boldsymbol{X}(\boldsymbol{x} - \boldsymbol{y} - \boldsymbol{z}_1) + \boldsymbol{W}_1(\boldsymbol{y}): \boldsymbol{y} \in B_1\}\big) + \boldsymbol{W}_2(\boldsymbol{z}_1), \dots, \\
        & \quad \Sup_{\textup{LE}} \big(\{\boldsymbol{X}(\boldsymbol{x} - \boldsymbol{y} - \boldsymbol{z}_m) + \boldsymbol{W}_1(\boldsymbol{y}): \boldsymbol{y} \in B_1\}\big) + \boldsymbol{W}_2(\boldsymbol{z}_m) \Big) \bigg) 
    \end{align*}
    \begin{align*}
        &= \boldsymbol{\tau}^{-1} \bigg( \Sup_{\textup{LE}} \Big( \Sup_{\textup{LE}} \big(\{\boldsymbol{X}(\boldsymbol{x} - \boldsymbol{y} - \boldsymbol{z}_1) + \boldsymbol{W}_1(\boldsymbol{y}) + \boldsymbol{W}_2(\boldsymbol{z}_1): \boldsymbol{y} \in B_1\}\big) , \dots, \\
        & \quad \Sup_{\textup{LE}} \big(\{\boldsymbol{X}(\boldsymbol{x} - \boldsymbol{y} - \boldsymbol{z}_m) + \boldsymbol{W}_1(\boldsymbol{y}) + \boldsymbol{W}_2(\boldsymbol{z}_m): \boldsymbol{y} \in B_1\}\big) \Big) \bigg), \quad \boldsymbol{x} \in \Omega,
    \end{align*}
    since the summation with a constant matrix $\boldsymbol{W}_2(\boldsymbol{z}_i)$ does not influence the decision of the LES except of a global (for each individual LES) rotation of the eigenvectors and change of the eigenvalues. Then by setting 
    \begin{align*}
        \mathcal{Y}_i(\boldsymbol{x}) := \{\boldsymbol{X}(\boldsymbol{x} - \boldsymbol{y} - \boldsymbol{z}_i) + \boldsymbol{W}_1(\boldsymbol{y}) + \boldsymbol{W}_2(\boldsymbol{z}_i): \boldsymbol{y} \in B_1\}, \quad \boldsymbol{x} \in \Omega,
    \end{align*}
    and using the transitivity \eqref{Transitivity}, we obtain for all $\boldsymbol{x} \in \Omega$:
    \begin{align}
        &\big((\boldsymbol f \oplus_{\textup{LES}} \boldsymbol b_1) \oplus_{\textup{LES}} \boldsymbol b_2\big)(\boldsymbol{x}) \nonumber\\
        &= \boldsymbol{\tau}^{-1} \bigg( \Sup_{\textup{LE}} \Big( \Sup_{\textup{LE}} \big( \mathcal{Y}_1(\boldsymbol{x}) \big), \dots, \Sup_{\textup{LE}} \big( \mathcal{Y}_m(\boldsymbol{x}) \big) \Big) \bigg) \nonumber\\
        &= \boldsymbol{\tau}^{-1} \bigg( \Sup_{\textup{LE}} \Big( \bigcup_{i = 1}^m \mathcal{Y}_i(\boldsymbol{x}) \Big) \bigg) \nonumber\\
        &= \boldsymbol{\tau}^{-1} \bigg( \Sup_{\textup{LE}} \Big( \big\{ \boldsymbol{X}(\boldsymbol{x} - \boldsymbol{y} - \boldsymbol{z}) + \boldsymbol{W}_1(\boldsymbol{y}) + \boldsymbol{W}_2(\boldsymbol{z}): \boldsymbol{y} \in B_1 \wedge \boldsymbol{z} \in B_2 \big\} \Big) \bigg). \label{trans-left}
    \end{align}

    The right-hand side of \eqref{associativity} fulfils for the first dilation with the structuring functions the equality
    \begin{align*}
        (\boldsymbol b_1 \oplus_{\textup{LES}} \boldsymbol b_2)(\boldsymbol{y}) = \boldsymbol{\tau}^{-1} \Big( \Sup_{\textup{LE}} \big(\{\boldsymbol{W}_1(\boldsymbol{y} - \boldsymbol{z}) + \boldsymbol{W}_2(\boldsymbol{z}): \boldsymbol{z} \in B_2\}\big) \Big), \quad \boldsymbol{y} \in \mathbb Z^2.
    \end{align*}
    However, since $\boldsymbol{b}_1$ and as such $\boldsymbol{W}_1$ will ``vanish" with $-\infty$ in the sense of the dilation for $\boldsymbol{y} - \boldsymbol{z} \notin B_1$, we can introduce the set 
    \begin{align*}
        B_1 \oplus B_2 := \{ \boldsymbol{x}_1 + \boldsymbol{x}_2 : \boldsymbol{x}_1 \in B_1 \wedge \boldsymbol{x}_2 \in B_2 \}
    \end{align*}
    and replace $\boldsymbol{y} \in Z^2$ with $\boldsymbol{y} \in B_1 \oplus B_2$. We calculate the second dilation as follows
    \begin{align*}
        &\big( \boldsymbol f \oplus_{\textup{LES}} (\boldsymbol b_1 \oplus_{\textup{LES}} \boldsymbol b_2) \big) (\boldsymbol{x}) \\
        &= \boldsymbol{\tau}^{-1} \bigg( \Sup_{\textup{LE}} \Big( \big\{ \boldsymbol{X}(\boldsymbol{x} - \boldsymbol{y}) \\
        &\quad + \Sup_{\textup{LE}} \big(\{\boldsymbol{W}_1(\boldsymbol{y} - \boldsymbol{z}) + \boldsymbol{W}_2(\boldsymbol{z}): \boldsymbol{z} \in B_2\}\big) : \boldsymbol{y} \in B_1 \oplus B_2 \big\} \Big) \bigg), \quad \boldsymbol{x} \in \Omega.
    \end{align*}
    Given that $B_1 \oplus B_2$ is also a discrete set, the same countability trick that was employed previously can be applied, resulting with $\boldsymbol{y}_1, \dots, \boldsymbol{y}_k$ in the following:
    \begin{align*}
        &\big( \boldsymbol f \oplus_{\textup{LES}} (\boldsymbol b_1 \oplus_{\textup{LES}} \boldsymbol b_2) \big) (\boldsymbol{x}) \\
        &= \boldsymbol{\tau}^{-1} \bigg( \Sup_{\textup{LE}} \Big(  \boldsymbol{X}(\boldsymbol{x} - \boldsymbol{y}_1) + \Sup_{\textup{LE}} \big(\{\boldsymbol{W}_1(\boldsymbol{y}_1 - \boldsymbol{z}) + \boldsymbol{W}_2(\boldsymbol{z}): \boldsymbol{z} \in B_2\}\big), \dots, \\
        &\quad \boldsymbol{X}(\boldsymbol{x} - \boldsymbol{y}_k) + \Sup_{\textup{LE}} \big(\{\boldsymbol{W}_1(\boldsymbol{y}_k - \boldsymbol{z}) + \boldsymbol{W}_2(\boldsymbol{z}): \boldsymbol{z} \in B_2\}\big) \Big) \bigg) 
    \end{align*}
    \begin{align*}
        &= \boldsymbol{\tau}^{-1} \bigg( \Sup_{\textup{LE}} \Big( \Sup_{\textup{LE}} \big(\{ \boldsymbol{X}(\boldsymbol{x} - \boldsymbol{y}_1) + \boldsymbol{W}_1(\boldsymbol{y}_1 - \boldsymbol{z}) + \boldsymbol{W}_2(\boldsymbol{z}): \boldsymbol{z} \in B_2\}\big), \dots, \\
        &\quad  \Sup_{\textup{LE}} \big(\{ \boldsymbol{X}(\boldsymbol{x} - \boldsymbol{y}_k) + \boldsymbol{W}_1(\boldsymbol{y}_k - \boldsymbol{z}) + \boldsymbol{W}_2(\boldsymbol{z}): \boldsymbol{z} \in B_2\}\big) \Big) \bigg), \quad \boldsymbol{x} \in \Omega.
    \end{align*}
    By defining 
    \begin{align*}
        \mathcal{Z}_i(\boldsymbol{x}) := \{ \boldsymbol{X}(\boldsymbol{x} - \boldsymbol{y}_i) + \boldsymbol{W}_1(\boldsymbol{y}_i - \boldsymbol{z}) + \boldsymbol{W}_2(\boldsymbol{z}): \boldsymbol{z} \in B_2\}, \quad x \in \Omega,
    \end{align*}
    and using the transitivity \eqref{Transitivity}, we obtain for all $\boldsymbol{x} \in \Omega$:
    \begin{align*}
        &\big( \boldsymbol f \oplus_{\textup{LES}} (\boldsymbol b_1 \oplus_{\textup{LES}} \boldsymbol b_2) \big) (\boldsymbol{x}) \\
        &= \boldsymbol{\tau}^{-1} \bigg( \Sup_{\textup{LE}} \Big( \Sup_{\textup{LE}} \big( \mathcal{Z}_1(\boldsymbol{x}) \big), \dots, \Sup_{\textup{LE}} \big( \mathcal{Z}_k(\boldsymbol{x}) \big) \Big) \bigg) \\
        &= \boldsymbol{\tau}^{-1} \bigg( \Sup_{\textup{LE}} \Big( \bigcup_{i = 1}^k \mathcal{Z}_i(\boldsymbol{x}) \Big) \bigg) \\
        &= \boldsymbol{\tau}^{-1} \bigg( \Sup_{\textup{LE}} \Big( \bigcup_{\boldsymbol{y} \in B_1 \oplus B_2} \{ \boldsymbol{X}(\boldsymbol{x} - \boldsymbol{y}) + \boldsymbol{W}_1(\boldsymbol{y} - \boldsymbol{z}) + \boldsymbol{W}_2(z) : \boldsymbol{z} \in B_2 \} \Big) \bigg).
    \end{align*}
    By substituting $\boldsymbol{y} = \hat{\boldsymbol{y}} + \boldsymbol{z}$ with $\hat{\boldsymbol{y}} \in B_1$ and $\boldsymbol{z} \in B_2$ in the last equation, we see that it equals \eqref{trans-left}, which concludes the proof.
    \hfill $\square$
\end{proof}

\begin{remark}
    Analogously, it can be shown for LES-erosion \eqref{LES-ero} that
    \begin{align*}
        (\boldsymbol{f} \ominus_{\textup{LES}} \boldsymbol{b}_1) \ominus_{\textup{LES}} \boldsymbol{b}_2  = \boldsymbol{f} \ominus_{\textup{LES}} (\boldsymbol{b}_1 \oplus_{\textup{LES}} \boldsymbol{b}_2)
    \end{align*}
    applies.
\end{remark}

\section{Minimality of the LES}

So far we have only shown that the LES \eqref{LES} is an upper bound in the sense of the Loewner order. In this section we address the question of whether it is also a smallest upper bound or under which conditions it is. To do this, we start from the framework \cite{WelkQuantile} of Welk, Kleefeld and Breu{\ss} and add another function to our semi-order to select a smallest upper bound from the set of upper bounds introduced by the semi-order, as already mentioned in section 2.2. This auxiliary function serves to find a unique minimum within the given set, so that we may have no total order but an auxiliary order. This is the reason why we refer to it as the auxiliary ordering function.

The main problem here is the existence of a total ordering function $\varphi$ on the set $\mathcal{U}(\mathcal{X})$ of upper bounds in the Loewner sense of the symmetric matrices $\mathcal{X} = \{ \boldsymbol X_1, \dots, \boldsymbol X_n \}$, $n \in \mathbb N$, for which the LES $\boldsymbol S$ is the unique minimiser. To see this, we first prove the following lemma. 

\begin{lemma}
    Let $\boldsymbol S$ be the LES \eqref{LES} for a given multi-set $\mathcal{X}$ of symmetric $2 \times 2$ matrices and let $\lambda_1$ be the unique largest eigenvalue of $\mathcal{X}$. Whenever the second largest eigenvalue $\lambda_2$ of $\mathcal{X}$ is unique, and its original eigenvector $\boldsymbol u_2$ is not perpendicular to the eigenvector $\boldsymbol u_1$ of $\lambda_1$, there exists some matrix 
    \begin{align}
        \boldsymbol S' := \lambda_1 \boldsymbol u_1 \boldsymbol u_1^\textup T + (\lambda_2 - \varepsilon) \boldsymbol v_1 \boldsymbol v_1^\textup T \in \mathcal{U}(\mathcal{X}), \quad \varepsilon > 0.
        \label{smaller_LES}
    \end{align}
\end{lemma}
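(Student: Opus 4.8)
The plan is to verify directly that $\boldsymbol S'$ is a Loewner upper bound, i.e. that $\boldsymbol S' - \boldsymbol X_i \geq_\textup L \boldsymbol 0$ for every $i = 1,\dots,n$, and to observe that these are only finitely many constraints, each of which I will show leaves an open range $(0,\varepsilon_i]$ of admissible values; taking $\varepsilon \le \min_i \varepsilon_i > 0$ then yields $\boldsymbol S' \in \mathcal{U}(\mathcal{X})$. As in the proof of Theorem 1, I would first rotate by $\boldsymbol R_{-\varphi_1}$ so that assumption \eqref{rotation} holds, turning $\boldsymbol u_1$ into $\boldsymbol e_1$ and $\boldsymbol v_1$ into $\boldsymbol e_2$; since a rotation preserves the Loewner order, it suffices to treat the rotated data, for which $\boldsymbol S' = \boldsymbol T - \varepsilon\, \boldsymbol e_2 \boldsymbol e_2^\textup T$ with $\boldsymbol T := \mathrm{diag}(\lambda_1,\lambda_2)$.

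Next I would sort the matrices by their largest eigenvalue and exploit both uniqueness hypotheses. Because $\lambda_1$ is the unique largest eigenvalue, exactly one matrix $\boldsymbol X_1$ attains it, with eigenvector $\boldsymbol e_1$ and second eigenvalue $\mu_1 < \lambda_1$; and $\mu_1 = \lambda_2$ would make $\lambda_2$ occur twice (also at the matrix realising $\lambda_2$), so uniqueness of $\lambda_2$ forces $\mu_1 < \lambda_2$. Similarly, $\lambda_2$ is the top eigenvalue of exactly one matrix $\boldsymbol X_j$ (a coincidence $\lambda_2 = \mu_k$ would again violate a uniqueness assumption), its eigenvector is the given $\boldsymbol u_2 = (c_2,s_2)^\textup T$ with $\boldsymbol u_2 \not\perp \boldsymbol e_1$, and its smaller eigenvalue satisfies $\mu_j < \lambda_2$. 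Every remaining matrix $\boldsymbol X_i$ has largest eigenvalue $\lambda_i < \lambda_2$.

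For the two easy families the claim is immediate. For $\boldsymbol X_1$ one has $\boldsymbol S' - \boldsymbol X_1 = \mathrm{diag}(0,\lambda_2 - \varepsilon - \mu_1) \geq_\textup L \boldsymbol 0$ as soon as $\varepsilon \le \lambda_2 - \mu_1$, a positive threshold by the previous paragraph. For any $\boldsymbol X_i$ with $\lambda_i < \lambda_2$ I would write $\boldsymbol T - \boldsymbol X_i = \mathrm{diag}(\lambda_1 - \lambda_2, 0) + (\lambda_2 \boldsymbol I - \boldsymbol X_i)$, a sum of a positive semi-definite and a (strictly) positive definite matrix, hence positive definite; a positive definite matrix stays positive definite under the small perturbation $-\varepsilon\,\boldsymbol e_2 \boldsymbol e_2^\textup T$, so these constraints pose no obstruction for small $\varepsilon$.

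The crux is the single matrix $\boldsymbol X_j$ realising $\lambda_2$, where the previous argument fails because $\lambda_2 \boldsymbol I - \boldsymbol X_j$ is only semi-definite. Here I would compute $\boldsymbol T - \boldsymbol X_j$ explicitly from the spectral decomposition and find that its $(1,1)$-entry is at least $\lambda_1 - \lambda_2 > 0$ and that $\det(\boldsymbol T - \boldsymbol X_j) = (\lambda_2 - \mu_j)\,c_2^2\,(\lambda_1 - \lambda_2)$. This is the decisive identity: all three factors are strictly positive precisely because $\lambda_1 > \lambda_2$ (uniqueness of $\lambda_1$), $\lambda_2 > \mu_j$ (uniqueness of $\lambda_2$), and $c_2 = \cos\varphi_2 \neq 0$ (the hypothesis $\boldsymbol u_2 \not\perp \boldsymbol u_1$). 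Since $\det(\boldsymbol S' - \boldsymbol X_j) = \det(\boldsymbol T - \boldsymbol X_j) - \varepsilon\,(\boldsymbol T - \boldsymbol X_j)_{11}$ decreases only linearly in $\varepsilon$, the matrix $\boldsymbol S' - \boldsymbol X_j$ retains a positive $(1,1)$-entry and a positive determinant, hence stays positive semi-definite, for all sufficiently small $\varepsilon$. I expect this determinant computation to be the main obstacle, and in particular recognising that the factor $c_2^2$ is exactly what non-perpendicularity guarantees to be nonzero: were $\boldsymbol u_2 \perp \boldsymbol u_1$, the determinant would vanish and no admissible $\varepsilon$ could exist, which is why the hypothesis is indispensable.
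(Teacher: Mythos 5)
Your proposal is correct and follows essentially the same route as the paper: after the same rotation and reduction, both arguments verify $\boldsymbol S' - \boldsymbol X \geq_\textup{L} \boldsymbol 0$ for each $\boldsymbol X \in \mathcal{X}$ via the $2\times2$ principal-minor criterion, identify $c_2^2 > 0$ (from $\boldsymbol u_2 \not\perp \boldsymbol u_1$) together with $\lambda_1 > \lambda_2 > \mu_j$ as the source of strict slack in the determinant condition, and take the minimum over the finitely many resulting thresholds. Your case split with the factorisation $\det(\boldsymbol T - \boldsymbol X_j) = (\lambda_2 - \mu_j)\,c_2^2\,(\lambda_1 - \lambda_2)$ is a tidier packaging of the paper's single closed-form bound $q \geq \lambda - \frac{c^2(1-\lambda)(\lambda-\mu)}{1 - \lambda c^2 - \mu s^2}$, but the underlying argument is the same.
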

\begin{proof}
    Due to Theorem 1, $\boldsymbol S = \lambda_1 \boldsymbol u_1 \boldsymbol u_1^\textup{T} + \lambda_2 \boldsymbol v_1 \boldsymbol v_1^\textup{T}$ holds, where $\boldsymbol v_1$ is the eigenvector perpendicular to $\boldsymbol u_1$. Since we can add a constant value to all eigenvalues and subtract it again later when we have formed the LES, we assume without loss of generality that $\lambda_1 = 1$, and its associated eigenvector is $\boldsymbol u_1 = (1,0)^\textup T$. This makes $\boldsymbol v_1 = (0,1)^\textup T$ and we get the representation
    \begin{align*}
        \boldsymbol S = 
        \begin{pmatrix}
            1 \hspace{0.5em}& 0 \\
            0 \hspace{0.5em}& \lambda_2
        \end{pmatrix}
        , \quad \lambda_2 < 1.
    \end{align*}
    Let 
    \begin{align*}
        \boldsymbol S' = 
        \begin{pmatrix}
            1 \hspace{0.5em}& 0 \\
            0 \hspace{0.5em}& q
        \end{pmatrix}
        , \quad q < \lambda_2.
    \end{align*}
    Then we examine for which $q$ the matrix $\boldsymbol S'$ remains in $\mathcal{U}(\mathcal{X})$. 
    
    For this we choose an arbitrary $\boldsymbol X \in \mathcal{X}$ with a major eigenvalue $\lambda$ satisfying $\lambda \leq \lambda_2$ and the associated eigenvector $\boldsymbol u = (c,s)^\textup T$ with $c = \cos(\varphi)$ and $s = \sin(\varphi)$. Further, let $\mu$ be the minor eigenvalue of $\boldsymbol X$ and have the associated eigenvector $\boldsymbol v = (-s,c)^\textup T$. Then $\boldsymbol X$ has the following representation 
    \begin{align*}
        \boldsymbol X = 
        \begin{pmatrix}
            \lambda c^2 + \mu s^2 \hspace{0.5em}& cs(\lambda - \mu) \\
            cs(\lambda - \mu) \hspace{0.5em}& \lambda s^2 + \mu c^2
        \end{pmatrix}
    \end{align*}
    because of its spectral decomposition \eqref{spectral_decomposition}. Note that $ \boldsymbol{Y} := \boldsymbol S' - \boldsymbol X \geq_\textup L \boldsymbol 0$ must necessarily be satisfied for $\boldsymbol S' \in \mathcal{U}(\mathcal{X})$ to hold.  
    To verify $\boldsymbol{Y} \geq_\textup L \boldsymbol 0$, the following relations must apply:
    \begin{align*}
        R_{\boldsymbol Y}(\boldsymbol u_1) \geq 0 \quad \wedge \quad R_{\boldsymbol Y}(\boldsymbol u_2) \geq 0 \quad \wedge \quad \det(\boldsymbol{Y}) \geq 0.
    \end{align*}
    We calculate for the first term
    \begin{align}
        R_{\boldsymbol Y}(\boldsymbol u_1) = 1 - \lambda c^2 - \mu s^2 \geq 1 - \lambda c^2 - \lambda s^2 = 1 - \lambda \geq 1 - \lambda_2 > 0. \label{ineq_pos_EW}
    \end{align}
    In order for the second term, $R_{\boldsymbol Y}(\boldsymbol u_2) = q - \lambda s^2 - \mu c^2$, to be non-negative, it is always possible to find a $q \in [\lambda s^2 + \mu c^2, \lambda_2)$ that will fulfil this requirement, provided $\boldsymbol{X} \neq \boldsymbol{X}_2$ with $\boldsymbol{u}_2 = (0, 1)^{\textup T} \perp \boldsymbol{u}_1$.
    The last term can be calculated as
    \begin{align*}
        \det(\boldsymbol{Y}) &= 
        \begin{vmatrix}
            1 - \lambda c^2 - \mu s^2 \hspace{0.5em}& cs(\lambda - \mu) \\
            cs(\lambda - \mu) \hspace{0.5em}& q - \lambda s^2 - \mu c^2
        \end{vmatrix} \\
        &= (1 - \lambda c^2 - \mu s^2)(q - \lambda s^2 - \mu c^2) - c^2 s^2 (\lambda - \mu)^2 
    \end{align*}
    \begin{align*}
        &= q(1 - \lambda c^2 - \mu s^2) - \lambda s^2 + c^2 s^2 (\lambda^2 + \mu^2) + (c^4 + s^4) \lambda \mu - \mu c^2  - c^2 s^2 (\lambda - \mu)^2 \\
        &= q(1 - \lambda c^2 - \mu s^2) - \lambda s^2 - \mu c^2 + (c^4 + 2 c^2 s^2 + s^4) \lambda \mu \\
        &= q(1 - \lambda c^2 - \mu s^2) - \lambda s^2 - \mu c^2 + (c^2 + s^2)^2 \lambda \mu \\
        &= q(1 - \lambda c^2 - \mu s^2) - \lambda s^2 - \mu c^2 + \lambda \mu
    \end{align*}
    which is non-negative if and only if
    \begin{align}
        q \geq \frac{\lambda s^2 + \mu c^2 - \lambda \mu}{1 - \lambda c^2 - \mu s^2} &= \lambda \frac{1 - \lambda c^2 - \mu s^2}{1 - \lambda c^2 - \mu s^2} + \frac{\lambda (s^2 - 1) + \mu c^2 - \lambda \mu (1 - s^2) + \lambda^2 c^2}{1 - \lambda c^2 - \mu s^2} \nonumber \\
        &= \lambda + \frac{c^2 (\mu - \lambda \mu + \lambda^2 - \lambda)}{1 - \lambda c^2 - \mu s^2} = \lambda + \frac{c^2 (1 - \lambda) (\mu - \lambda)}{1 - \lambda c^2 - \mu s^2} \nonumber \\
        &= \lambda - \frac{c^2 (1 - \lambda) (\lambda - \mu)}{1 - \lambda c^2 - \mu s^2}.
        \label{ineq_small_min_rhs}
    \end{align}
    Since $c^2 \geq 0$, $\mu \leq \lambda < 1$ and \eqref{ineq_pos_EW}, we see that the fraction on the right-hand side is non-negative. This means that the largest term that can occur on the right-hand side would be $\lambda$ and since $\lambda_2$ is our second largest eigenvalue and unique, $\lambda < \lambda_2$ except in the case where we consider the matrix $\boldsymbol{X}$ belonging to $\lambda_2$. However, in this case, because of uniqueness, $\lambda_2 > \mu_2$ and $c^2 \neq 0$, otherwise $\boldsymbol u_2$ would point in the same direction as $\boldsymbol v_1$, contradicting the condition $\boldsymbol{u}_2 \not\perp \boldsymbol{u}_1$. This means that the fraction on the right-hand side would be positive and thus the term on the right-hand side would be strictly less than $\lambda_2$.

    The case for $\boldsymbol{X}$ with $\lambda = \lambda_1$ provides the matrix
    \begin{align*}
        \boldsymbol{Y} = 
        \begin{pmatrix}
            0 \hspace{0.5em}& 0 \\
            0 \hspace{0.5em}& q - \mu_1
        \end{pmatrix}
    \end{align*}
    with $\mu_1 < \lambda_2$ because $\lambda_2$ is the second largest eigenvalue and unique. The relation $\boldsymbol{Y} \geq_\textup L \boldsymbol{0}$ holds if and only if $q \geq \mu_1$. Since $\mu_1 < \lambda_2$ and $q < \lambda_2$ (otherwise $\boldsymbol S' = \boldsymbol{S}$), we can find a $q$ with $\mu_1 \leq q < \lambda_2$ to fulfil the requirement.

    In summary, this means that in every case the right-hand side of \eqref{ineq_small_min_rhs} is strictly less than $\lambda_2$. Thus, if we compute for all $\boldsymbol X \in \mathcal X$ the bounds of $q$, which are all strictly less than $\lambda_2$, and then take the largest of them, we obtain the claim \eqref{smaller_LES} of the lemma.
    \hfill $\square$
\end{proof}

\begin{corollary}
    Under the conditions of Lemma 7, $\boldsymbol S$ cannot be the unique minimiser of a function $\varphi$ according to \eqref{phi}.
\end{corollary}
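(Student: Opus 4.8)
The plan is to exhibit, inside the feasible set $\mathcal{U}(\mathcal{X})$, a second matrix that performs at least as well as $\boldsymbol{S}$ under any admissible auxiliary function $\varphi$, thereby destroying uniqueness. The matrix $\boldsymbol{S}'$ constructed in Lemma 7 is precisely such a competitor, so the argument reduces to combining its membership in $\mathcal{U}(\mathcal{X})$ with the Loewner-monotonicity of $\varphi$.

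First I would recall from Theorem 1 that, under the hypotheses inherited from Lemma 7 (a unique largest eigenvalue $\lambda_1$, a unique second largest eigenvalue $\lambda_2$ whose eigenvector $\boldsymbol{u}_2$ is not perpendicular to $\boldsymbol{u}_1$), the LES takes the form $\boldsymbol{S} = \lambda_1 \boldsymbol{u}_1 \boldsymbol{u}_1^\textup{T} + \lambda_2 \boldsymbol{v}_1 \boldsymbol{v}_1^\textup{T}$. Lemma 7 then supplies a matrix $\boldsymbol{S}' = \lambda_1 \boldsymbol{u}_1 \boldsymbol{u}_1^\textup{T} + (\lambda_2 - \varepsilon) \boldsymbol{v}_1 \boldsymbol{v}_1^\textup{T} \in \mathcal{U}(\mathcal{X})$ for some $\varepsilon > 0$, which coincides with $\boldsymbol{S}$ in its $\boldsymbol{u}_1$-direction and differs only in the eigenvalue along $\boldsymbol{v}_1$.

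Next I would record the Loewner comparison $\boldsymbol{S} - \boldsymbol{S}' = \varepsilon \boldsymbol{v}_1 \boldsymbol{v}_1^\textup{T}$. Since the rank-one matrix $\boldsymbol{v}_1 \boldsymbol{v}_1^\textup{T}$ is positive semi-definite and $\varepsilon > 0$, this difference lies in $\textup{Sym}_+(2)$, whence $\boldsymbol{S}' \leq_\textup L \boldsymbol{S}$ while $\boldsymbol{S}' \neq \boldsymbol{S}$. Because any admissible $\varphi$ is Loewner-monotone on $\mathcal{U}(\mathcal{X})$, the implication $\boldsymbol{A} \leq_\textup L \boldsymbol{B} \Rightarrow \varphi(\boldsymbol{A}) \leq \varphi(\boldsymbol{B})$ yields $\varphi(\boldsymbol{S}') \leq \varphi(\boldsymbol{S})$.

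Finally I would conclude by contradiction against \eqref{phi}: if $\boldsymbol{S}$ were the unique minimiser of $\varphi$ over $\mathcal{U}(\mathcal{X})$, then $\varphi(\boldsymbol{Y}) > \varphi(\boldsymbol{S})$ would hold for every $\boldsymbol{Y} \in \mathcal{U}(\mathcal{X})$ with $\boldsymbol{Y} \neq \boldsymbol{S}$; taking $\boldsymbol{Y} = \boldsymbol{S}'$ contradicts $\varphi(\boldsymbol{S}') \leq \varphi(\boldsymbol{S})$, so no such $\varphi$ can exist. The genuinely substantive work lies entirely in Lemma 7 — namely verifying that $\boldsymbol{S}'$ is still an upper bound in the Loewner sense — so here there is no real obstacle; the only point requiring a moment's care is that uniqueness of a minimiser already demands strict separation from every other feasible point, which is exactly what the non-strict estimate $\varphi(\boldsymbol{S}') \leq \varphi(\boldsymbol{S})$ contradicts.
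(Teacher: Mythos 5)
Your proposal is correct and follows essentially the same route as the paper: take the competitor $\boldsymbol{S}'$ from Lemma 7, observe $\boldsymbol{S}' \leq_\textup{L} \boldsymbol{S}$ with $\boldsymbol{S}' \neq \boldsymbol{S}$, apply Loewner-monotonicity to get $\varphi(\boldsymbol{S}') \leq \varphi(\boldsymbol{S})$, and conclude that this contradicts uniqueness of the minimiser. Your explicit computation of $\boldsymbol{S} - \boldsymbol{S}' = \varepsilon \boldsymbol{v}_1 \boldsymbol{v}_1^\textup{T} \in \textup{Sym}_+(2)$ merely spells out a step the paper leaves implicit.
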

\begin{proof}
    Suppose $\boldsymbol S$ were a minimiser of a function $\varphi$ according to \eqref{phi}. Then $\varphi(\boldsymbol S) \geq \varphi(\boldsymbol S')$ because $\boldsymbol S \geq_\textup L \boldsymbol S'$ and the Loewner monotonicity of $\varphi$. Since $\boldsymbol S$ is minimal with respect to $\varphi$, only $\varphi(\boldsymbol S) = \varphi(\boldsymbol S')$ can be true. However, this is a contradiction to the uniqueness of the minimiser.
    \hfill $\square$
\end{proof}

This may seem like a big disadvantage at first. However, with the following slight modification of our set $\mathcal{U}(\mathcal{X})$ of possible upper bounds of $\mathcal{X}$, we can ensure that the LES actually acts as a minimiser there, as we will demonstrate in this section.

\begin{definition} 
    For a multi-set $\mathcal{X}$ of symmetric $2 \times 2$ matrices with nonnegative eigenvalues, let $\mathcal{X}^p := \{ \boldsymbol{X}^p : \boldsymbol{X} \in  \mathcal{X} \}$ be the element-wise application of the $p$-th power to the multi-set $\mathcal{X}$. We define the \textbf{$p$-power upper bound cone} as 
    \begin{align}
        \mathcal{U}_p(\mathcal{X}) := \big( \mathcal{U}(\mathcal{X}^p) \big)^\frac{1}{p} = \{ \boldsymbol{Y} \in \textup{Sym}(2) : \boldsymbol Y^p \in \mathcal{U}(\mathcal{X}^p) \}
        \label{p-power_upper_bound_cone}
    \end{align}
    and denote the intersection of all $p$-power upper bound cones as the \textbf{super-upper bound cone}
    \begin{align}
        \mathcal{U}_*(\mathcal{X}) := \bigcap_{p > 0} \mathcal{U}_p(\mathcal{X}).
        \label{super-upper_bound_cone}
    \end{align}
    For a multi-set $\mathcal{X}$ of symmetric $2\times2$ matrices for which $-c<0$ is the smallest among all eigenvalues of the matrices in $\mathcal{X}$, define
    \begin{align}
        \mathcal{U}_*(\mathcal{X}) := \,\mathcal{U}_*(\mathcal{X}+c) - c,
        \label{super-upper_bound_cone_(lifted)}
    \end{align}
    where 
    \begin{align*}
        \mathcal{X}+c := \{\boldsymbol{X}+c\boldsymbol{I} : \boldsymbol{X} \in\mathcal{X} \}
    \end{align*}
    and $\boldsymbol{I}$ is the identity matrix.
\end{definition}

\begin{remark}
    Equation $\eqref{super-upper_bound_cone_(lifted)}$ is particularly applicable to the set $\mathcal{U}(\mathcal{X})$ of upper bounds of a multi-set $\mathcal{X}$, as $\Sup_{\textup{LE}} (\mathcal{X}+c) - c\boldsymbol{I} = \Sup_{\textup{LE}}(\mathcal{X})$ and thus $\mathcal{U}(\mathcal{X}+c) - c = \mathcal{U}(\mathcal{X})$ always hold.
\end{remark}

To show that $\boldsymbol S$ is the unique minimiser of $\varphi$ in $\mathcal{U}_*(\mathcal{X})$, we first prove that it is contained in it.

\begin{lemma}
    Let $\mathcal{X}$ be a multi-set of symmetric $2 \times 2$ matrices, and let $\boldsymbol S$ be as in Lemma 7. Then one has $\boldsymbol S \in \mathcal{U}_p(\mathcal{X})$ for any $p > 0$.
\end{lemma}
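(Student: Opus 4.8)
The plan is to prove the stronger identity $\Sup_{\textup{LE}}(\mathcal{X}^p) = \boldsymbol S^p$ and then invoke Lemma 6. Since $\mathcal{X}^p$ (and hence $\mathcal{U}_p(\mathcal{X})$) is only defined when all eigenvalues are nonnegative, I assume this throughout. By the definition \eqref{p-power_upper_bound_cone}, the assertion $\boldsymbol S \in \mathcal{U}_p(\mathcal{X})$ is equivalent to $\boldsymbol S^p \in \mathcal{U}(\mathcal{X}^p)$, that is, $\boldsymbol S^p \geq_\textup L \boldsymbol X^p$ for every $\boldsymbol X \in \mathcal{X}$. Because $t \mapsto t^p$ fails to be operator monotone for $p > 1$, I cannot simply pass the known relation $\boldsymbol S \geq_\textup L \boldsymbol X$ through the $p$-th power; instead I would argue directly through the characterisation \eqref{LES_computed}.

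The key observation is that raising a symmetric matrix with nonnegative eigenvalues to the $p$-th power leaves its eigenvectors unchanged and replaces each eigenvalue $\lambda$ by $\lambda^p$. As $t \mapsto t^p$ is strictly increasing on $[0,\infty)$, it preserves both the strict and the non-strict ordering among all eigenvalues occurring across $\mathcal{X}$, together with every coincidence (tie) between them. Consequently the purely combinatorial selection underlying \eqref{LES_computed} — which eigenvalue is the largest $\lambda_1$, whether it is unique, and which is the next largest $\mu_*$ whose eigenvector is not aligned with $\boldsymbol u_1$ — produces exactly the same eigenvectors $\boldsymbol u_1, \boldsymbol v_1$ and the $p$-th powers $\lambda_1^p, \mu_*^p$ of the same selected eigenvalues when applied to $\mathcal{X}^p$. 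Writing out the Corollary for $\mathcal{X}^p$ therefore gives
\[
    \Sup_{\textup{LE}}(\mathcal{X}^p) = \lambda_1^p \boldsymbol u_1 \boldsymbol u_1^\textup T + \mu_*^p \boldsymbol v_1 \boldsymbol v_1^\textup T,
\]
respectively $\lambda_1^p \boldsymbol I$ in the degenerate case of \eqref{LES_computed}. Since $\boldsymbol u_1$ and $\boldsymbol v_1$ are orthonormal, the spectral calculus yields $\boldsymbol S^p = \lambda_1^p \boldsymbol u_1 \boldsymbol u_1^\textup T + \mu_*^p \boldsymbol v_1 \boldsymbol v_1^\textup T$ (resp. $\lambda_1^p \boldsymbol I$), so that $\Sup_{\textup{LE}}(\mathcal{X}^p) = \boldsymbol S^p$.

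Finally I would apply Lemma 6 to the multi-set $\mathcal{X}^p$: its LES is a Loewner upper bound of $\mathcal{X}^p$, i.e. $\Sup_{\textup{LE}}(\mathcal{X}^p) \in \mathcal{U}(\mathcal{X}^p)$. Combining this with the identity just established gives $\boldsymbol S^p \in \mathcal{U}(\mathcal{X}^p)$, which by \eqref{p-power_upper_bound_cone} is precisely $\boldsymbol S \in \mathcal{U}_p(\mathcal{X})$; as $p > 0$ was arbitrary, this proves the lemma. The one point demanding genuine care — and the only place the argument could break — is the verification that the selection rule in \eqref{LES_computed} is invariant under $\boldsymbol X \mapsto \boldsymbol X^p$; this rests entirely on the strict monotonicity of $t \mapsto t^p$ on $[0,\infty)$ preserving orderings, ties and the non-alignment conditions, and on the eigenvectors being left unchanged by the power map.
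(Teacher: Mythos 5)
Your proposal is correct and follows essentially the same route as the paper: both arguments observe that $\boldsymbol X \mapsto \boldsymbol X^p$ fixes eigenvectors and applies the strictly increasing map $t\mapsto t^p$ to the (after shifting, nonnegative) eigenvalues, so the selection rule of \eqref{LES_computed} is unchanged and $\Sup_{\textup{LE}}(\mathcal{X}^p)=\boldsymbol S^p$, whence Lemma 6 gives $\boldsymbol S^p\in\mathcal{U}(\mathcal{X}^p)$. Your explicit remark that operator monotonicity of the power function cannot be invoked is a point the paper leaves implicit, but the underlying argument is the same.
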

\begin{proof}
    We use for $\boldsymbol S$ the same form as in Lemma 7 and assume again without loss of generality that $\lambda_1 = 1$, and its associated eigenvector is $\boldsymbol u_1 = (1,0)^\textup T$. This implies $\boldsymbol v_1 = (0,1)^\textup T$.
    Since we only consider real symmetric and thus diagonalisable matrices, we can simply express the power of such a matrix according to \cite{matrix_algebra} as follows:
    \begin{align*}
        \boldsymbol X^p = \lambda^p \boldsymbol u \boldsymbol u^\textup T + \mu^p \boldsymbol v \boldsymbol v^\textup T.
    \end{align*}
    The representation of $\mathcal{S}$ in terms of the $p$-th power will then take the following form:
    \begin{align*}
        \boldsymbol S^p = 
        \begin{pmatrix}
            1 \hspace{0.5em}& 0 \\
            0 \hspace{0.5em}& \lambda_2^p
        \end{pmatrix}.
    \end{align*}
    On the basis of $\mathcal{U}(\mathcal{X}+c) - c = \mathcal{U}(\mathcal{X})$, see Remark 3, we can assume without restriction that $\lambda_2 \geq 0$ applies.
    So we see that the only difference to our previous calculation of the LES is that all eigenvalues are now raised to the power of $p$. This means that our LES looks the same as before with the difference that we exponentiate its entries, i.e. the two largest permissible eigenvalues, by $p$ and thus obtain $\boldsymbol S^p$. Thus we have $\boldsymbol S^p \in \mathcal{U}(\mathcal{X}^p)$, from which the claim follows.
    \hfill $\square$
\end{proof} 

Next, we will define the as yet unspecified function $\varphi$ and show that it fulfils all the necessary conditions according to section 2.2. In particular, we will see that for this function $\boldsymbol S$ is the unique minimiser in $\mathcal{U}_*(\mathcal{X})$.

\begin{theorem}
    Let the conditions of Lemma 8 be satisfied and let the function $\boldsymbol \varphi: \mathcal{U}(\mathcal{X}) \rightarrow \mathbb R^2$ be explained as $\boldsymbol \varphi(\boldsymbol Y) = (\lambda,\mu)$, where $\lambda > \mu$ are the eigenvalues of $\boldsymbol Y \in \mathcal{U}(\mathcal{X})$. Further, let $\mathbb R^2$ be endowed with the lexicographic order 
    \begin{align}
        (a,b) \prec (a',b') :\Longleftrightarrow (a < a' \vee (a = a' \wedge b \leq b')), \quad a,b,a',b' \in \mathbb R.
        \label{lexicogrphic_order}
    \end{align}
    Then it follows that 
    \begin{itemize}
        \item[$(i)$] the function $\boldsymbol \varphi$ is Loewner-monotone, \vspace{0.2cm}
        \item[$(ii)$] the function $\boldsymbol \varphi$ is convex on $\mathcal{U}(\mathcal{X})$, and \vspace{0.2cm}
        \item[$(iii)$] $\boldsymbol{S}$ is the unique minimiser of $\boldsymbol \varphi$ in $\mathcal{U}_*(\mathcal{X})$.
    \end{itemize}
\end{theorem}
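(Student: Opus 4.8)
The plan is to handle the three claims in turn, using throughout that a symmetric $2\times2$ matrix is determined by its eigenvalues and that its trace $\tr=\lambda_{\max}+\lambda_{\min}$ is linear. For $(i)$ I would invoke the Courant--Fischer min-max principle: if $\boldsymbol A\le_\textup L\boldsymbol B$ then $\lambda_{\max}(\boldsymbol A)\le\lambda_{\max}(\boldsymbol B)$ and $\lambda_{\min}(\boldsymbol A)\le\lambda_{\min}(\boldsymbol B)$. Writing $\boldsymbol\varphi(\boldsymbol A)=(\lambda_{\max}(\boldsymbol A),\lambda_{\min}(\boldsymbol A))$, these two inequalities give $\boldsymbol\varphi(\boldsymbol A)\prec\boldsymbol\varphi(\boldsymbol B)$ straight from \eqref{lexicogrphic_order}: if the leading components are strictly ordered we are done, and if they are equal the ordering of the trailing components closes the case. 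Only this implication is needed, the lexicographic order being total whereas $\le_\textup L$ is not.

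For $(ii)$ I would use that $\boldsymbol Y\mapsto\lambda_{\max}(\boldsymbol Y)=\max_{\abs{\boldsymbol w}=1}\langle\boldsymbol w,\boldsymbol Y\boldsymbol w\rangle$ is convex, being a pointwise supremum of the linear functionals $\boldsymbol Y\mapsto\langle\boldsymbol w,\boldsymbol Y\boldsymbol w\rangle$. Hence the leading component of
\[
\alpha\boldsymbol\varphi(\boldsymbol Y_1)+(1-\alpha)\boldsymbol\varphi(\boldsymbol Y_2)-\boldsymbol\varphi(\alpha\boldsymbol Y_1+(1-\alpha)\boldsymbol Y_2)
\]
is nonnegative; if it is strictly positive the difference already lies in the lexicographic order cone $K=\{\boldsymbol z\in\mathbb R^2:(0,0)\prec\boldsymbol z\}$. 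The only remaining case is equality in the leading component, and here the $2\times2$ structure is decisive: since $\lambda_{\max}+\lambda_{\min}=\tr$ is linear, equality of the $\lambda_{\max}$-terms forces equality of the $\lambda_{\min}$-terms, so the difference equals $(0,0)\in K$. This is what rescues convexity in the vector-valued sense \eqref{convexity} even though $\lambda_{\min}$ alone is concave.

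For $(iii)$ I would argue lexicographically. Membership $\boldsymbol S\in\mathcal U_*(\mathcal X)$ is Lemma 8, and since $\mathcal U_*(\mathcal X)\subseteq\mathcal U_1(\mathcal X)=\mathcal U(\mathcal X)$ every competitor is an ordinary Loewner upper bound. After the rotation and normalisation of Lemma 7 we may take $\lambda_1=1$, $\boldsymbol u_1=\boldsymbol e_1$, $\boldsymbol v_1=\boldsymbol e_2$ and $\boldsymbol S=\operatorname{diag}(\lambda_1,\mu_*)$ with $\mu_*=\lambda_2$. Any $\boldsymbol Y\in\mathcal U_*(\mathcal X)$ satisfies $\lambda_{\max}(\boldsymbol Y)\ge\lambda_1$ because $\boldsymbol Y\ge_\textup L\boldsymbol X_1$, so $\lambda_1$ is the minimal leading component and is attained by $\boldsymbol S$; a minimiser must therefore have $\lambda_{\max}(\boldsymbol Y)=\lambda_1$. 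Evaluating $\boldsymbol Y\ge_\textup L\boldsymbol X_1$ at $\boldsymbol e_1$ gives $\langle\boldsymbol e_1,\boldsymbol Y\boldsymbol e_1\rangle=\lambda_1=\lambda_{\max}(\boldsymbol Y)$, so (the top eigenvalue of a candidate minimiser being simple) $\boldsymbol e_1$ is the leading eigenvector of $\boldsymbol Y$ and $\boldsymbol Y=\operatorname{diag}(\lambda_1,q)$ with $q=\lambda_{\min}(\boldsymbol Y)$.

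The decisive step is $q\ge\mu_*$, and here the power cone does what $\mathcal U(\mathcal X)$ cannot --- recall Lemma 7 exhibited a strictly smaller bound $\boldsymbol S'$ inside $\mathcal U(\mathcal X)$. Let $\boldsymbol X_2$ realise $\mu_*=\lambda_2$ with leading eigenvector $\boldsymbol u_2=(a,b)^\textup T$; since $\boldsymbol u_2$ is not aligned with $\boldsymbol u_1$ we have $b\ne0$. For every $p>0$ the constraint $\boldsymbol Y^p\ge_\textup L\boldsymbol X_2^p$ evaluated at $\boldsymbol e_2$ --- not at $\boldsymbol u_2$, which would merely reproduce the dominant direction --- yields
\[
q^p=\langle\boldsymbol e_2,\boldsymbol Y^p\boldsymbol e_2\rangle\ge\langle\boldsymbol e_2,\boldsymbol X_2^p\boldsymbol e_2\rangle=\lambda_2^p\,b^2+\mu_2^p\,a^2.
\]
Taking $p$-th roots and letting $p\to\infty$, the right-hand side tends to $\lambda_2=\mu_*$ because $b\ne0$, whence $q\ge\mu_*$; equivalently, for $q<\mu_*$ the $(2,2)$-entry of $\boldsymbol Y^p-\boldsymbol X_2^p$ turns negative for large $p$, ejecting such $\boldsymbol Y$ (in particular $\boldsymbol S'$) from $\mathcal U_*(\mathcal X)$. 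Thus the minimal trailing component under $\lambda_{\max}=\lambda_1$ is $\mu_*$, attained by $\boldsymbol S$, and the only matrix with $\lambda_{\max}=\lambda_1$, $\lambda_{\min}=\mu_*$ and leading eigenvector $\boldsymbol e_1$ is $\operatorname{diag}(\lambda_1,\mu_*)=\boldsymbol S$, giving uniqueness. I expect this asymptotic $p$-th-root estimate, together with keeping track of the nonnegativity conventions from the definition of $\mathcal U_*(\mathcal X)$ and Remark 3 (so that $\boldsymbol Y^p$ and its root are well defined), to be the main obstacle.
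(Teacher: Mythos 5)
Your proposal is correct, and for parts $(i)$ and $(ii)$ it coincides with the paper's own argument: Loewner monotonicity of both eigenvalues via the min--max principle, and convexity via the convexity of $\lambda_{\max}$ as a supremum of linear functionals together with linearity of the trace to dispose of the case of equality in the leading component. For part $(iii)$ your overall strategy is also the paper's --- exploit the $p$-power constraints and let $p\to\infty$ to force the second eigenvalue up to $\lambda_2$ --- but the execution differs in two ways worth noting. First, you make explicit the reduction of the competitor set: any candidate minimiser must have largest eigenvalue $\lambda_1$ with leading eigenvector $\boldsymbol{e}_1$, because the Rayleigh quotient at $\boldsymbol{e}_1$ is pinched between $\langle\boldsymbol{e}_1,\boldsymbol{X}_1\boldsymbol{e}_1\rangle=\lambda_1$ and $\lambda_{\max}(\boldsymbol{Y})=\lambda_1$, hence is of the form $\operatorname{diag}(\lambda_1,q)$; the paper only rules out the specific competitors $\operatorname{diag}(\lambda_1,\lambda_2-\varepsilon)$ produced by Lemma 7 and leaves this reduction implicit, so your version is actually more complete on uniqueness. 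Second, to show $q\geq\lambda_2$ you test $\boldsymbol{Y}^p\geq_\textup{L}\boldsymbol{X}_2^p$ only against the vector $\boldsymbol{e}_2$, obtaining $q^p\geq\lambda_2^p s_2^2+\mu_2^p c_2^2$ and taking $p$-th roots; this is precisely the condition $R_{\boldsymbol{Y}}(\boldsymbol{u}_2)\geq 0$ that already appears among the three positivity conditions in the proof of Lemma 7, whereas the paper instead raises the determinant condition \eqref{ineq_small_min_rhs} to the $p$-th power and runs the longer limit computation \eqref{transform_of_rhs_of_ineq_small_min_rhs_3}--\eqref{limit_of_ineq_small_min_rhs_3}. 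Your route is shorter and reaches the same conclusion; the only care needed, which you correctly flag, is the shift to nonnegative eigenvalues (Remark 3 and \eqref{super-upper_bound_cone_(lifted)}) so that the matrix powers and their $p$-th roots are well defined.
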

\begin{proof}
    $(i)$ Let $\boldsymbol A, \boldsymbol B \in \mathcal{U}(\mathcal{X})$ 
    with $\boldsymbol B \leq_\textup L \boldsymbol A$. The inequality implies that for any vector $\boldsymbol{w} \in \mathbb R^2$ the Rayleigh products must satisfy $R_{\boldsymbol{B}}(\boldsymbol{w}) \leq R_{\boldsymbol{A}}(\boldsymbol{w})$.

    By setting $\boldsymbol w = \boldsymbol u_{\boldsymbol B}$ to the unit eigenvector corresponding to the largest eigenvalue $\lambda_{\boldsymbol B}$ of $\boldsymbol B$, the Rayleigh product of $\boldsymbol B$ will equal the largest eigenvalue according to $R_{\boldsymbol{B}}(\boldsymbol u_{\boldsymbol{B}}) = \lambda_{\boldsymbol{B}}$. The Rayleigh product of $\boldsymbol A$ with $\boldsymbol u_{\boldsymbol B}$ will take a value $\alpha \in [\mu_{\boldsymbol A}, \lambda_{\boldsymbol{A}}]$ according to the min-max theorem (cf. \cite{min-max-theorem}, Theorem 4.2.2), where $\mu_{\boldsymbol{A}}$ is the smaller and $\lambda_{\boldsymbol{A}}$ the larger eigenvalue of $\boldsymbol{A}$. Upon inserting this into the inequality for the Rayleigh products, we obtain 
    \begin{align*}
        \lambda_{\boldsymbol{B}} = R_{\boldsymbol{B}}(\boldsymbol{u}_{\boldsymbol{B}}) \leq R_{\boldsymbol{A}}(\boldsymbol{u}_{\boldsymbol{B}}) = \alpha \leq \lambda_{\boldsymbol{A}},
    \end{align*}
    from which it follows that the largest eigenvalue of $\boldsymbol A$ is greater or equal to the largest eigenvalue of $\boldsymbol{B}$. If $\lambda_{\boldsymbol{A}} > \lambda_{\boldsymbol{B}}$, $\boldsymbol \varphi(\boldsymbol{B}) \preccurlyeq \boldsymbol \varphi(\boldsymbol{A})$ is already established.

    If this is not the case, we need to consider the minor eigenvalues of $\boldsymbol{A}$ and $\boldsymbol{B}$. By choosing $\boldsymbol{w} = \boldsymbol v_{\boldsymbol{A}}$ as the unit eigenvector corresponding to the smallest eigenvalue $\mu_{\boldsymbol{A}}$ of $\boldsymbol{A}$, the Rayleigh product of $\boldsymbol A$ will equal the smallest eigenvalue according to $R_{\boldsymbol{A}}(\boldsymbol v_{\boldsymbol{A}}) = \mu_{\boldsymbol{A}}$. If we also consider the Rayleigh product with $\boldsymbol{B}$, we again obtain with the min-max theorem a value $\beta \in [\mu_{\boldsymbol{B}}, \lambda_{\boldsymbol{B}}] $, where $\mu_{\boldsymbol{B}}$ is the smallest eigenvalue of $\boldsymbol{B}$, for this and by inserting it into the inequality mentioned at the beginning, we obtain
    \begin{align*}
        \mu_{\boldsymbol{B}} \leq \beta = R_{\boldsymbol{B}}(\boldsymbol v_{\boldsymbol{A}}) \leq R_{\boldsymbol{A}}(\boldsymbol v_{\boldsymbol{A}}) = \mu_{\boldsymbol{A}}.
    \end{align*}
    Consequently, we can guarantee that the smallest eigenvalue of $\boldsymbol{B}$ is less than or equal to the smallest eigenvalue of $\boldsymbol{A}$ and thus $\boldsymbol \varphi(\boldsymbol B) \preccurlyeq \boldsymbol \varphi(\boldsymbol A) $ holds. \vspace{0.2cm}

    \noindent $(ii)$ We want to prove here that $\boldsymbol \varphi$ is convex on $\mathcal{U}(\mathcal{X})$ according to Definition 3. For this we consider the two vector spaces $\textup{Sym}(2)$ and $\mathbb R^2$ and the convex set $\mathcal{U}(\mathcal{X}) \subset \textup{Sym}(2)$. We have already seen the convexity of the last set in section 2.2. Further, we need an order cone on the $\mathbb R^2$ equipped with the lexicographic order, which we denote simply as 
    \begin{align*}
        K := \{ \boldsymbol{x} \in \mathbb R^2 : \boldsymbol{x} \succcurlyeq \boldsymbol{0} \}.
    \end{align*}
    That is, the only thing we still need to check is the validity of \eqref{convexity}. For this we consider for all $\boldsymbol{Y}_1, \boldsymbol{Y}_2 \in \mathcal{U}(\mathcal{X})$ and $\alpha \in [0,1]$:
    \begin{alignat}{3}
        &\alpha \boldsymbol \varphi(\boldsymbol{Y}_1) + (1-\alpha) \boldsymbol \varphi(\boldsymbol{Y}_2) - \boldsymbol \varphi(\alpha \boldsymbol{Y}_1 + (1-\alpha) \boldsymbol{Y}_2) \in K \nonumber \\
        &\Longleftrightarrow \boldsymbol{0} \preccurlyeq \alpha \boldsymbol \varphi(\boldsymbol{Y}_1) + (1-\alpha) \boldsymbol \varphi(\boldsymbol{Y}_2) - \boldsymbol \varphi(\alpha \boldsymbol{Y}_1 + (1-\alpha) \boldsymbol{Y}_2) \nonumber \\
        & \Longleftrightarrow  \alpha \boldsymbol \varphi(\boldsymbol{Y}_1) + (1-\alpha) \boldsymbol \varphi(\boldsymbol{Y}_2) \succcurlyeq \boldsymbol \varphi(\alpha \boldsymbol{Y}_1 + (1-\alpha) \boldsymbol{Y}_2) =: \varphi(\boldsymbol{Z}).
        \label{lexi_ineq}
    \end{alignat}
    
    Let us assume without loss of generality that $\boldsymbol{Z}$ is a diagonal matrix. Further, let $\lambda_i \geq \mu_i$ be the eigenvalues of $\boldsymbol{Y}_i$ with the corresponding eigenvectors $(c_i, s_i)^\textup T, (-s_i, c_i)^\textup T$ for $i = 1,2$ according to \eqref{spectral_decomposition}. Then, we have
    \begin{align*}
        &\boldsymbol{Z} = 
        \begin{pmatrix}
            \lambda_* \hspace{0.5em}& 0 \\
            0 \hspace{0.5em}& \mu_*
        \end{pmatrix}
        \quad \text{with} \quad \lambda_* := \alpha (\lambda_1 c_1^2 + \mu_1 s_1^2) + (1-\alpha) (\lambda_2 c_2^2 + \mu_2 s_2^2) \\
        &\text{and} \quad \mu_* := \alpha (\lambda_1 s_1^2 + \mu_1 c_1^2) + (1-\alpha) (\lambda_2 s_2^2 + \mu_2 c_2^2).
    \end{align*}
    Note that the two eigenvalues $\lambda_*, \mu_*$ do not necessarily have to be ordered in this way. If we substitute this into our inequality \eqref{lexi_ineq}, we obtain
    \begin{align}
        \alpha 
        \begin{pmatrix}
            \lambda_1 \\
            \mu_1
        \end{pmatrix}
         + (1-\alpha) 
         \begin{pmatrix}
             \lambda_2 \\
             \mu_2
         \end{pmatrix}
         \succcurlyeq 
         \begin{pmatrix}
             \lambda_* \\
             \mu_*
         \end{pmatrix}.
         \label{lexi_ineq_2}
    \end{align}
    
    To prove this inequality, we first show that both eigenvalues $\lambda_*, \mu_*$ are less than or equal to $\alpha \lambda_1 + (1-\alpha) \lambda_2$:
    \begin{align*}
        &\lambda_* \leq \alpha (\lambda_1 c_1^2 + \lambda_1 s_1^2) + (1-\alpha) (\lambda_2 c_2^2 + \lambda_2 s_2^2) = \alpha \lambda_1 + (1-\alpha) \lambda_2, \\
        &\mu_* \leq \alpha (\lambda_1 s_1^2 + \lambda_1 c_1^2) + (1-\alpha) (\lambda_2 s_2^2 + \lambda_2 c_2^2) = \alpha \lambda_1 + (1-\alpha) \lambda_2.
    \end{align*}
    If strict inequality applies to both inequalities, we would have proven the statement. Otherwise, one of the two diagonal entries is equal to $\alpha \lambda_1 + (1-\alpha) \lambda_2$, without loss of generality, let this be $\lambda_*$. Then it follows
    \begin{align*}
        \mu_* &= \trace(\boldsymbol{Z}) - \lambda_* = \trace(\boldsymbol{Z}) - \alpha \lambda_1 - (1-\alpha) \lambda_2 \\
        &= \alpha \trace(\boldsymbol{Y}_1) + (1-\alpha) \trace(\boldsymbol{Y}_2) - \alpha \lambda_1 - (1-\alpha) \lambda_2 \\
        &= \alpha (\trace(\boldsymbol{Y}_1) - \lambda_1) + (1-\alpha) (\trace(\boldsymbol{Y}_2) - \lambda_2) \\
        &= \alpha (\lambda_1 + \mu_1 - \lambda_1) + (1-\alpha) (\lambda_2 + \mu_2 - \lambda_2) = \alpha \mu_1 + (1-\alpha) \mu_2
    \end{align*}
    whereby we have exploited in the last line that the trace of a matrix can be understood as the sum of its eigenvalues. This means that in this case the smaller eigenvalue of $\boldsymbol{Z}$ is just the convex combination of the smaller eigenvalues of $\boldsymbol{Y}_1$ and $\boldsymbol{Y}_2$ or, in other words, satisfies the second line of inequality \eqref{lexi_ineq_2} with equality. \vspace{0.2cm}

    \noindent $(iii)$ For this we use the argumentation from the proof of Lemma 7 for $(\boldsymbol S')^p$ and $\boldsymbol X^p$ instead of $\boldsymbol S'$ and $\boldsymbol X$ to obtain a necessary condition for $(\boldsymbol S')^p \geq_\textup L \boldsymbol X^p$, using the representation from the proof of Lemma 8 for $\boldsymbol X^p$. Since in both cases the exponentiation by $p$ does not change the eigenvectors but only the eigenvalues, we can take our conclusion \eqref{ineq_small_min_rhs} from the proof of Lemma 7 by exponentiating the eigenvalues all by $p$:
    \begin{align}
        q^p \geq \lambda^p - \frac{c^2 (1 - \lambda^p) (\lambda^p - \mu^p)}{1 - \lambda^p c^2 - \mu^p s^2}.
        \label{ineq_small_min_rhs_2}
    \end{align}
    As in the proof of Lemma 7, we have again that the fraction on the right-hand side of \eqref{ineq_small_min_rhs_2} is non-negative for all $\mu \leq \lambda < 1$ and $p>0$. 
    
    In the proof of Lemma 7 we had shown that the right-hand side of the inequality \eqref{ineq_small_min_rhs} analogous to \eqref{ineq_small_min_rhs_2} was always smaller than $\lambda_2$. Thus we could guarantee that a $\varepsilon > 0$ exists, so that a similar matrix to the LES $\boldsymbol{S}$ exists whose second eigenvalue according to $\lambda_2 - \varepsilon < \lambda_2$ is strictly smaller than that of the LES $\boldsymbol{S}$, which showed that the LES was not the unique minimiser. So all we have to prove here is that no such $\varepsilon$ exists for given $\lambda, \mu, c, s$ and $p \rightarrow \infty$.

    Let us assume that $\varepsilon>0$ exists so that \eqref{ineq_small_min_rhs_2} is satisfied for $q=\lambda-\varepsilon$ and arbitrarily large $p>0$. In particular, we can also assume without restriction of generality that our eigenvalues can be mapped bijectively to the interval $[0,1]$ or more precisely to 
    $\left[\frac{\frac{1}{\sqrt{2}} - \lambda_1^\text{old}}{\sqrt{2}}, 1 \right]$, 
    where $\lambda_1^\text{old}$ is the original largest eigenvalue of $\mathcal{X}$, so that $\lambda_1 = 1$ and $\mu_1,\lambda_i,\mu_i \in [0,1)$ for $i = 2,\dots,n$, see the following remark.
    This would mean that 
    \begin{align}
        &0 \leq (\lambda - \varepsilon)^p - \lambda^p + \frac{c^2 (1 - \lambda^p) (\lambda^p - \mu^p)}{1 - \lambda^p c^2 - \mu^p s^2} \nonumber \\
        &\Longleftrightarrow \lambda^p - (\lambda - \varepsilon)^p \leq \frac{c^2 (1 - \lambda^p) (\lambda^p - \mu^p)}{1 - \lambda^p c^2 - \mu^p s^2} \leq \frac{(1 - \lambda^p) (\lambda^p - \mu^p)}{1 - \lambda^p c^2 - \mu^p s^2} \nonumber \\
        &\Longleftrightarrow (\lambda - \varepsilon)^p \geq \lambda^p - \frac{(1 - \lambda^p) (\lambda^p - \mu^p)}{1 - \lambda^p c^2 - \mu^p s^2} \nonumber \\
        &\Longleftrightarrow \varepsilon \leq \lambda - \left[ \lambda^p - \frac{(1 - \lambda^p) (\lambda^p - \mu^p)}{1 - \lambda^p c^2 - \mu^p s^2} \right]^\frac{1}{p}, \quad 0 \leq \mu \leq \lambda < 1, \quad p>0,
        \label{ineq_small_min_rhs_3}
    \end{align}
    holds. 
    We rewrite the right-hand side of the inequality \eqref{ineq_small_min_rhs_3} into
    \begin{align}
    \begin{split}
        &\lambda - \left[ \lambda^p - \frac{(1 - \lambda^p) (\lambda^p - \mu^p)}{1 - \lambda^p c^2 - \mu^p s^2} \right]^\frac{1}{p} = \lambda - \left[ \lambda^p - \lambda^p  \frac{ \left( 1- \frac{\mu^p}{\lambda^p} \right) (1 - \lambda^p)}{1 - \lambda^p c^2 - \mu^p s^2} \right]^\frac{1}{p} \\
        &\quad = \lambda - \left[ \lambda^p \left( 1 - \left( 1- \frac{\mu^p}{\lambda^p} \right) \frac{(1 - \lambda^p)}{1 - \lambda^p c^2 - \mu^p s^2} \right) \right]^\frac{1}{p} \\
        &\quad = \lambda \left\{ 1 - \left[ 1 - \left( 1- \frac{\mu^p}{\lambda^p} \right) \frac{(1 - \lambda^p)}{1 - \lambda^p c^2 - \mu^p s^2} \right]^\frac{1}{p} \right\}.
    \end{split}
    \label{transform_of_rhs_of_ineq_small_min_rhs_3}
    \end{align}
    If we consider the last $[\,\cdot\,]^\frac{1}{p}$ term in the equation \eqref{transform_of_rhs_of_ineq_small_min_rhs_3}, we can estimate as follows
    \begin{align}
        \left[ 1 - \frac{1}{1-\lambda^p c^2 - \mu^p s^2} \right]^\frac{1}{p} \leq \left[ 1 - \left( 1- \frac{\mu^p}{\lambda^p} \right) \frac{(1 - \lambda^p)}{1 - \lambda^p c^2 - \mu^p s^2} \right]^\frac{1}{p} \leq 1^\frac{1}{p}
        \label{up_and_low_est_1}
    \end{align}
    and by using the limes for $p \rightarrow \infty$ we obtain for the upper and lower limit
    \begin{align}
        &\lim_{p \rightarrow \infty} 1^\frac{1}{p} = 1,
        \label{limit_of_up_est_1}
        \\
    \begin{split}
        &\lim_{p \rightarrow \infty} \left[ 1 - \frac{1}{1-\lambda^p c^2 - \mu^p s^2} \right]^\frac{1}{p} = \lim_{p \rightarrow \infty} \exp \left( \frac{1}{p} \log \left( 1 - \frac{1}{1-\lambda^p c^2 - \mu^p s^2} \right) \right) \\
        & \quad = \exp \left( \lim_{p \rightarrow \infty} \frac{1}{p} \log \left( 1 - \frac{1}{1-\lambda^p c^2 - \mu^p s^2} \right) \right) = \mathrm{e}^0 = 1.
    \end{split}
    \label{limit_of_low_est_1}
    \end{align}
    By combining \eqref{up_and_low_est_1} with \eqref{limit_of_up_est_1} and \eqref{limit_of_low_est_1}, we achieve 
    \begin{align*}
        1  \leq \lim_{p \rightarrow \infty} \left[ 1 - \left( 1- \frac{\mu^p}{\lambda^p} \right) \frac{(1 - \lambda^p)}{1 - \lambda^p c^2 - \mu^p s^2} \right]^\frac{1}{p} \leq   1
    \end{align*}
    or 
    \begin{align}
       \lim_{p \rightarrow \infty} \left[ 1 - \left( 1- \frac{\mu^p}{\lambda^p} \right) \frac{(1 - \lambda^p)}{1 - \lambda^p c^2 - \mu^p s^2} \right]^\frac{1}{p} = 1.
       \label{limit_of_ineq_small_min_rhs_3}
    \end{align}
    For the last step, we substitute \eqref{transform_of_rhs_of_ineq_small_min_rhs_3} in \eqref{ineq_small_min_rhs_3}, let $p$ approach infinity and use the equation \eqref{limit_of_ineq_small_min_rhs_3}:
    \begin{align*}
        \varepsilon &\leq \lim_{p \rightarrow \infty} \lambda \left\{ 1 - \left[ 1 - \left( 1- \frac{\mu^p}{\lambda^p} \right) \frac{(1 - \lambda^p)}{1 - \lambda^p c^2 - \mu^p s^2} \right]^\frac{1}{p} \right\} \\
        &\quad = \lambda \left\{ 1 - \lim_{p \rightarrow \infty} \left[ 1 - \left( 1- \frac{\mu^p}{\lambda^p} \right) \frac{(1 - \lambda^p)}{1 - \lambda^p c^2 - \mu^p s^2} \right]^\frac{1}{p} \right\} = 0,
    \end{align*}
    which is a contradiction to $\varepsilon > 0$. This means that $\boldsymbol{S}' \notin \mathcal{U}_*(\mathcal{X})$. \hfill $\square$
\end{proof}

\begin{remark}
    Let $\boldsymbol X \in \mathcal{X}$ be arbitrary and $\boldsymbol{X} = \lambda \boldsymbol{u} \boldsymbol{u}^\textup T + \mu \boldsymbol{v} \boldsymbol{v}^\textup T$ as in \eqref{spectral_decomposition}, where $\lambda,\mu$ are the eigenvalues of $\boldsymbol{X}$ and $\boldsymbol{u},\boldsymbol{v}$ the associated normalised eigenvectors. Since the greatest element in the HCL bi-cone is the colour white with the eigenvalues $\frac{1}{\sqrt{2}}$ and the smallest element is the colour black with the eigenvalues $- \frac{1}{\sqrt{2}}$, see \cite{WelkQuantile}, we have $\lambda,\mu \in \left[- \frac{1}{\sqrt{2}}, \frac{1}{\sqrt{2}}\right]$. This means that the transformation
    \begin{align*}
        \bar{\boldsymbol{X}} &:= \frac{1}{\sqrt{2}} \boldsymbol{X} + \left( 1 - \frac{\lambda_1}{\sqrt{2}} \right) \boldsymbol{I} \\
        &= \frac{1}{\sqrt{2}} \lambda \boldsymbol{u} \boldsymbol{u}^\textup T + \frac{1}{\sqrt{2}} \mu \boldsymbol{v} \boldsymbol{v}^\textup T + \left( 1 - \frac{\lambda_1}{\sqrt{2}} \right) \boldsymbol{u} \boldsymbol{u}^\textup T + \left( 1 - \frac{\lambda_1}{\sqrt{2}} \right) \boldsymbol{v} \boldsymbol{v}^\textup T \\
        &= \left( \frac{\lambda}{\sqrt{2}} + 1 - \frac{\lambda_1}{\sqrt{2}} \right) \boldsymbol{u} \boldsymbol{u}^\textup T + \left( \frac{\mu}{\sqrt{2}} + 1 - \frac{\lambda_1}{\sqrt{2}} \right) \boldsymbol{v} \boldsymbol{v}^\textup T =: \Bar{\lambda} \boldsymbol{u} \boldsymbol{u}^\textup T + \Bar{\mu} \boldsymbol{v} \boldsymbol{v}^\textup T 
    \end{align*}
    has the eigenvalues $\Bar{\lambda},\Bar{\mu} \in [0,1]$. 
    It follows from the fact that 
    \begin{align*}
        1 + \frac{\lambda - \lambda_1}{\sqrt{2}} \in \left[\frac{\frac{1}{\sqrt{2}} - \lambda_1}{\sqrt{2}}, 1 \right] \subseteq [0,1],
    \end{align*}
    where the lower and upper bound are derived from $\lambda \in \left[-\frac{1}{\sqrt{2}}, \lambda_1 \right] $.
    Consequently, the addition of the scaled unit matrix affects only the eigenvalues but not the eigenvectors. 
    As a result, this can also be applied directly to the LES:
    \begin{align*}
        \bar{\boldsymbol{S}} := \Sup_{\textup{LE}}\left(\bar{\boldsymbol{X}}_1, \dots, \bar{\boldsymbol{X}}_n\right) = \frac{1}{\sqrt{2}} \boldsymbol{S} + \left( 1 - \frac{\lambda_1}{\sqrt{2}} \right) \boldsymbol{I}.
    \end{align*}
\end{remark}

\section{Relaxation of the LES}

Now that we have seen several favourable properties of the LES, we want to address a disadvantage of our approach that we have not considered further so far. It is one of the reasons why this approach was not pursued further in favour of the trace-supremum $\Sup_{\tr}$ in the work \cite{morph_op_mat_im}. This is due to the fact that the LES does not depend continuously on the input data. In this section we will first look at the cases where this problem occurs and then see that it can be solved using a straightforward relaxation.

To visualise this problem, let us consider the following scenario: We have a multi-set $\mathcal{X} = \{\boldsymbol{X}_1, \dots, \boldsymbol{X}_n\}$, $n \geq 3$, of symmetric $2 \times 2$ matrices in which the three largest eigenvalues $\lambda_1 > \lambda_2 > \lambda_3$ with the associated eigenvectors $\boldsymbol{u}_1, \boldsymbol{u}_2, \boldsymbol{u}_3$ belong to three different matrices $\boldsymbol{X}_1, \boldsymbol{X}_2, \boldsymbol{X}_3 \in \mathcal{X}$. In addition, the eigenvectors have the relationship $\boldsymbol{u}_1 = \boldsymbol{u}_2$ and $\boldsymbol{u}_3 \neq \pm \boldsymbol{u}_1$ to each other. According to Theorem 1, we obtain for the LES
\begin{align*}
    \boldsymbol{S} = \lambda_1 \boldsymbol{u}_1 \boldsymbol{u}_1^\textup T + \lambda_3 \boldsymbol{v}_1 \boldsymbol{v}_1^\textup T, \quad \boldsymbol{v}_1 \perp \boldsymbol{u}_1.
\end{align*}
We will now consider a rotated multi-set $\mathcal{X}^{(\delta)} = \{\boldsymbol{X}_1^{(\delta)}, \dots, \boldsymbol{X}_n^{(\delta)}\}$ by rotating the matrices $\boldsymbol{X}_i$ around the angles $\delta \alpha_i$ for $i \geq 2$ with the factor $\delta > 0$. The angles $\alpha_i$ can be different but $\sin(\delta \alpha_2) \neq 0$, such that the new eigenvector 
\begin{align*}
    \boldsymbol{u}_2^{(\delta)} = \boldsymbol{R}_{\delta \alpha_2} \boldsymbol{u}_2 = 
    \begin{pmatrix}
        \cos(\delta \alpha_2 ) \hspace{0.5em}& -\sin(\delta \alpha_2) \\
        \sin(\delta \alpha_2) \hspace{0.5em}& \cos(\delta \alpha_2 )
    \end{pmatrix}
    \begin{pmatrix}
        1 \\
        0
    \end{pmatrix}
    = 
    \begin{pmatrix}
        \cos(\delta \alpha_2) \\
        \sin(\delta \alpha_2 )
    \end{pmatrix}
    \neq \pm \boldsymbol{u}_1.
\end{align*}
As these are only rotations, the eigenvalues $\lambda_i^{(\delta)}, \mu_i^{(\delta)}$ of $\boldsymbol{X}_i^{(\delta)}$ according $\lambda_i^{(\delta)} = \lambda_i$ and $\mu_i^{(\delta)} = \mu_i$ are retained for all $i \in \{ 1, \dots, n \}$ and $\delta > 0$. By also applying Theorem 1 to this, we obtain
\begin{align*}
    \boldsymbol{S}^{(\delta)} := \Sup_{\textup{LE}}(\mathcal{X}^{(\delta)}) = \lambda_1 \boldsymbol{u}_1 \boldsymbol{u}_1^\textup T + \lambda_2 \boldsymbol{v}_1 \boldsymbol{v}_1^\textup T =: \boldsymbol{S}' \neq \boldsymbol{S}.
\end{align*}
If we now want to determine the limit value of this for $\delta \rightarrow 0$, we go to step 2 of the proof of Lemma 4 and obtain $\overline{R}(\varepsilon, \delta)$ according to Equation \eqref{leading terms} with $(s_2, c_2)^\textup T = \boldsymbol{u}_2^{(\delta)}$. Then, we calculate
\begin{align*}
    &\lim_{\delta \rightarrow 0} \lim_{m \rightarrow \infty} \left( \frac{1}{m} \log(\overline{R}(\varepsilon, \delta)) \right) \\
    & \quad = \lim_{\delta \rightarrow 0} \lim_{m \rightarrow \infty} \Bigg( \lambda_2 - \frac{\log (1 + \varepsilon^2)}{m} + \frac{\log \big( (-\sin(\delta \alpha_2) - \varepsilon \cos(\delta \alpha_2 ))^2\big)}{m} \\
    & \qquad + \frac{\log \left( 1 + \mathcal{O}\left( \mathrm{e}^{m(\lambda_2 - \lambda_1)} + \mathrm{e}^{m(\mu - \lambda_2)} \right) \right)}{m} \Bigg) \\
    & \quad = \lim_{\delta \rightarrow 0} \lambda_2 = \lambda_2,
\end{align*}
because $\varepsilon \rightarrow 0$ for $m \rightarrow \infty$, see proof of Lemma 4, and $\sin(\delta \varphi) \neq 0$. This leads to
\begin{align*}
    \lim_{\delta \rightarrow 0} \boldsymbol{S}^{(\delta)} = \boldsymbol{S}' \neq \boldsymbol{S}
\end{align*}
although $\mathcal{X}^{(\delta)} \rightarrow \mathcal{X}$ for $\delta \rightarrow 0$.

These types of discontinuity are removable,  since they only occur in non-generic configurations of $\mathcal{X}$. To address these kind of issues, we introduce the following definition.

\begin{definition}
    Let $\mathcal{X} = \{\boldsymbol{X}_1, \dots, \boldsymbol{X}_n\}$, $n \in \mathbb N$, be a multi-set of symmetric real $2 \times 2$ matrices with the spectral decomposition \eqref{spectral_decomposition}. 
    We call the multi-set 
    \begin{align*}
        \mathcal{B}(\mathcal{X}) := \{ \mathcal{B}(\boldsymbol{X}_1), \dots, \mathcal{B}(\boldsymbol{X}_n) \} := \big\{ \{ \boldsymbol{u}_1, \boldsymbol{v}_1 \}, \dots, \{ \boldsymbol{u}_n, \boldsymbol{v}_n \} \big\}
    \end{align*}
    of orthonormal bases of $\mathcal{X}$ \textbf{generic} if and only if no two orthonormal bases $\mathcal{B}(\boldsymbol{X}_i)$, $\mathcal{B}(\boldsymbol{X}_j)$, $i \neq j$, share the same orientation, i.e. $\boldsymbol{u}_i$ is not aligned with $\boldsymbol{u}_j$. 
    Furthermore, we define that \textbf{the multi-sets $\mathcal{X}^{(\delta)} = \big\{\boldsymbol{X}_1^{(\delta)}, \dots, \boldsymbol{X}_n^{(\delta)} \big\}$ converge planar towards the multi-set $\mathcal{X}$}, symbolically $\mathcal{X}^{(\delta)} \xlongrightarrow{\textup{pl}}	\mathcal{X}$, if $\mathcal{X}^{(\delta)} \rightarrow \mathcal{X}$ and the eigenvalues of $\boldsymbol{X}_i^{(\delta)}$ and $\boldsymbol{X}_i$ remain the same for all $i \in \{ 1, \dots, n \}$ and $\delta$.
    Then, we declare the \textbf{relaxed log-exp-supremum (RLES)} as
    \begin{align}
    \begin{split}
        \Sup_{\textup{RLE}}(\mathcal{X}) := 
        \begin{cases}
            \Sup_{\textup{LE}}(\mathcal{X}), & \textup{if } \mathcal{B}(\mathcal{X}) \textup{ is generic}, \\
            \lim\limits_{\substack{\mathcal{X}^{(\delta)} \xlongrightarrow{\textup{pl}} \mathcal{X} \\ \mathcal{B}(\mathcal{X}^{(\delta)}) \textup{ generic}}} \Sup_{\textup{LE}}(\mathcal{X}^{(\delta)}), & \textup{otherwise}.
        \end{cases}
    \end{split}
    \label{RLES}
    \end{align}
\end{definition}

\begin{remark}
    The idea of planar convergence stems from the fact that for 
    \begin{align*}
        &\boldsymbol{X} = \lambda \boldsymbol{u} \boldsymbol{u}^{\textup T} + \mu \boldsymbol{v} \boldsymbol{v}^{\textup T} = 
        \begin{pmatrix}
            \lambda u_1^2 + \mu v_1^2 \hspace{0.5em}& \lambda u_1 u_2 + \mu v_1 v_2 \\
            \lambda u_1 u_2 + \mu v_1 v_2 \hspace{0.5em}& \lambda u_2^2 + \mu v_2^2
        \end{pmatrix}
        , \\ 
        &\abs{\boldsymbol{u}} = 1 = \abs{\boldsymbol{v}}, \quad \boldsymbol{u} =
        \begin{pmatrix}
            u_1 \\
            u_2
        \end{pmatrix}
        , \boldsymbol{v} = 
        \begin{pmatrix}
            v_1 \\
            v_2
        \end{pmatrix}
        \in \mathbb{R}^2, \quad \lambda, \mu \in \mathbb R,
    \end{align*}
    and
    \begin{align*}
        &\boldsymbol{Y} = \lambda \boldsymbol{s} \boldsymbol{s}^{\textup T} + \mu \boldsymbol{t} \boldsymbol{t}^{\textup T} = 
        \begin{pmatrix}
            \lambda s_1^2 + \mu t_1^2 \hspace{0.5em}& \lambda s_1 s_2 + \mu t_1 t_2 \\
            \lambda s_1 s_2 + \mu t_1 t_2 \hspace{0.5em}& \lambda s_2^2 + \mu t_2^2
        \end{pmatrix}
        , \\ 
        &\abs{\boldsymbol{s}} = 1 = \abs{\boldsymbol{t}}, \quad \boldsymbol{s} =
        \begin{pmatrix}
            s_1 \\
            s_2
        \end{pmatrix}
        , \boldsymbol{t} = 
        \begin{pmatrix}
            t_1 \\
            t_2
        \end{pmatrix}
        \in \mathbb{R}^2, 
    \end{align*}
    their $z$-component in the bi-cone coordinates can be calculated as follows according to Burgeth and Kleefeld, see \cite{BurgethKleefeld}:
    \begin{align*}
        &z(\boldsymbol{X}) = \frac{\trace(\boldsymbol{X})}{\sqrt{2}}  = \frac{\lambda u_1^2 + \mu v_1^2 + \lambda u_2^2 + \mu v_2^2 }{\sqrt{2}}  = \frac{\lambda \abs{\boldsymbol{u}}^2 + \mu \abs{\boldsymbol{v}}^2}{\sqrt{2}} = \frac{\lambda + \mu}{\sqrt{2}}, \\
        &z(\boldsymbol{Y}) = \frac{\trace(\boldsymbol{Y})}{\sqrt{2}}  = \frac{\lambda s_1^2 + \mu t_1^2 + \lambda s_2^2 + \mu t_2^2 }{\sqrt{2}}  = \frac{\lambda \abs{\boldsymbol{s}}^2 + \mu \abs{\boldsymbol{t}}^2}{\sqrt{2}} = \frac{\lambda + \mu}{\sqrt{2}}.
    \end{align*}
    This means that the $z$-component remains the same for all colour matrices as long as they have the same eigenvalues, and that all the convergence happens on a particular $z$-plane.
\end{remark}

We can immediately deduce from this definition by means of the following corollary that the RLES does not leave the bi-cone.

\begin{corollary}
    Let $\mathcal{X} = (\boldsymbol{X}_1, \dots, \boldsymbol{X}_n)$, $n \in \mathbb N$, be a multi-set of symmetric real $2\times2$ matrices, which satisfy $- \frac{1}{\sqrt{2}} \boldsymbol{I} \leq_\textup L \boldsymbol{X}_i \leq_\textup L \frac{1}{\sqrt{2}} \boldsymbol{I}$ for all $i \in \{ 1, \dots, n \}$. Then we have 
    \begin{align*}
        - \frac{1}{\sqrt{2}} \boldsymbol{I} \leq_\textup L \Sup_{\textup{RLE}}(\mathcal{X}) \leq_\textup L \frac{1}{\sqrt{2}} \boldsymbol{I}.
    \end{align*}
\end{corollary}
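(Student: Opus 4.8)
The plan is to reduce the matrix Loewner bounds to scalar eigenvalue bounds, read off the eigenvalues of the LES from the explicit characterisation \eqref{LES_computed}, and then carry the bounds through the limit that defines the RLES in the non-generic case.

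First I would record the elementary equivalence that, for any $\boldsymbol{Y} \in \textup{Sym}(2)$, the sandwiching $-\frac{1}{\sqrt{2}} \boldsymbol{I} \leq_\textup{L} \boldsymbol{Y} \leq_\textup{L} \frac{1}{\sqrt{2}} \boldsymbol{I}$ holds if and only if both eigenvalues of $\boldsymbol{Y}$ lie in the interval $[-\frac{1}{\sqrt{2}}, \frac{1}{\sqrt{2}}]$. Indeed, $\frac{1}{\sqrt{2}} \boldsymbol{I} - \boldsymbol{Y} \in \textup{Sym}_+(2)$ is equivalent to the largest eigenvalue of $\boldsymbol{Y}$ being at most $\frac{1}{\sqrt{2}}$, and $\boldsymbol{Y} + \frac{1}{\sqrt{2}} \boldsymbol{I} \in \textup{Sym}_+(2)$ to its smallest eigenvalue being at least $-\frac{1}{\sqrt{2}}$. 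Applied to the hypothesis, this shows that every eigenvalue of every $\boldsymbol{X}_i$ lies in $[-\frac{1}{\sqrt{2}}, \frac{1}{\sqrt{2}}]$, and the same equivalence will be used to verify the conclusion for the RLES.

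Next I would settle the generic case, in which $\Sup_{\textup{RLE}}(\mathcal{X}) = \Sup_{\textup{LE}}(\mathcal{X})$ by \eqref{RLES}. By the characterisation \eqref{LES_computed}, the LES is either $\lambda_1 \boldsymbol{I}$ or $\lambda_1 \boldsymbol{u}_1 \boldsymbol{u}_1^{\textup T} + \mu_* \boldsymbol{v}_1 \boldsymbol{v}_1^{\textup T}$ with $\mu_* \leq \lambda_1$. In either form its two eigenvalues are $\lambda_1$ (and either $\lambda_1$ again or $\mu_*$), and both of these are genuine eigenvalues of matrices in $\mathcal{X}$: $\lambda_1$ is a largest eigenvalue of $\mathcal{X}$, and $\mu_*$ is by definition the next largest eigenvalue of $\mathcal{X}$ with eigenvector not aligned with $\boldsymbol{u}_1$. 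Hence, by the first step, both eigenvalues of the LES lie in $[-\frac{1}{\sqrt{2}}, \frac{1}{\sqrt{2}}]$, so no eigenvalue can escape the interval, and the conclusion follows in the generic case.

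Finally I would handle the non-generic case using planar convergence. Any approximating family $\mathcal{X}^{(\delta)} \xlongrightarrow{\textup{pl}} \mathcal{X}$ keeps the eigenvalues of each matrix unchanged, so each $\boldsymbol{X}_i^{(\delta)}$ still has its eigenvalues in $[-\frac{1}{\sqrt{2}}, \frac{1}{\sqrt{2}}]$ and therefore satisfies the same Loewner bounds as the $\boldsymbol{X}_i$. Applying the generic case to each generic $\mathcal{X}^{(\delta)}$ yields $-\frac{1}{\sqrt{2}} \boldsymbol{I} \leq_\textup{L} \Sup_{\textup{LE}}(\mathcal{X}^{(\delta)}) \leq_\textup{L} \frac{1}{\sqrt{2}} \boldsymbol{I}$, and since the feasible region $\{ \boldsymbol{Y} \in \textup{Sym}(2) : -\frac{1}{\sqrt{2}} \boldsymbol{I} \leq_\textup{L} \boldsymbol{Y} \leq_\textup{L} \frac{1}{\sqrt{2}} \boldsymbol{I} \}$ is closed, being the intersection of two closed Loewner half-cones, the bounds persist in the limit that defines $\Sup_{\textup{RLE}}(\mathcal{X})$. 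The only real content, and hence the single point to be careful about, is the bookkeeping in the generic case: confirming from \eqref{LES_computed} that both eigenvalues of the LES are drawn from the eigenvalues of the input matrices, so that no value outside the admissible interval can be created; the closedness argument for the limit is then routine.
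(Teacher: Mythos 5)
Your proposal is correct. The core observation is the same one the paper uses: by the characterisation \eqref{LES_computed} (and Definition 7 together with Theorem 5 for the relaxed version), the spectrum of $\Sup_{\textup{RLE}}(\mathcal{X})$ is drawn from the spectra of the input matrices, so no eigenvalue can leave $\left[-\frac{1}{\sqrt{2}},\frac{1}{\sqrt{2}}\right]$. The difference is in how the lower bound is closed off: the paper does not use the eigenvalue bookkeeping for that half, but instead invokes the upper-bound property $\boldsymbol{X}_i \leq_\textup{L} \Sup_{\textup{RLE}}(\mathcal{X})$ (Lemma 6 carried over to the RLES) and the transitivity of the Loewner order applied to $-\frac{1}{\sqrt{2}}\boldsymbol{I} \leq_\textup{L} \boldsymbol{X}_i \leq_\textup{L} \Sup_{\textup{RLE}}(\mathcal{X})$, whereas you treat both bounds symmetrically via the scalar equivalence "sandwiched by $\pm\frac{1}{\sqrt{2}}\boldsymbol{I}$ iff both eigenvalues lie in $\left[-\frac{1}{\sqrt{2}},\frac{1}{\sqrt{2}}\right]$". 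Your route is slightly more self-contained (it does not need Lemma 6) and, usefully, it makes explicit the step the paper glosses over in the non-generic case, namely that the bounds survive the planar limit because the Loewner sandwich is a closed set; the paper's route has the advantage of not depending on the correctness of the explicit characterisation, only on the supremum property. Both arguments are valid.
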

\begin{proof}
    The assertion follows from the fact that the LES lies in the bi-cone and the RLES has the same eigenvalues for selection as the LES due to Definition 7. As a result, the largest eigenvalue cannot be greater than $\frac{1}{\sqrt{2}}$, which gives rise to the upper bound. The lower bound results from the transitivity of the Loewner order via $- \frac{1}{\sqrt{2}} \boldsymbol{I} \leq_\textup L \boldsymbol{X}_i \leq_\textup L \Sup_{\textup{RLE}}(\mathcal{X})$ for all $i \in \{ 1, \dots, n \}$.
    \hfill $\square$
\end{proof}

Following our first two previous theorems, we therefore also give a characterisation for the RLES in this form.

\begin{theorem}
    Let $\mathcal{X}$ be a multi-set of symmetric real $2 \times 2$ matrices. Further, let $\lambda_1$ be (one of) the largest eigenvalue(s) of $\mathcal{X}$, and $\boldsymbol{u}_1$ the eigenvector associated with the eigenvalue $\lambda_1$ and $\boldsymbol{v}_1 \perp \boldsymbol{u}_1$ the minor eigenvector of the same matrix from $\mathcal{X}$, and let $\lambda_2 \leq \lambda_1$ be the next largest eigenvalue of $\mathcal{X}$. Then the RLES of $\mathcal{X}$ is characterised by
    \begin{align}
    \begin{split}
        \Sup_{\textup{RLE}}(\mathcal{X}) = 
        \begin{cases}
            \lambda_1 \boldsymbol{u}_1 \boldsymbol{u}_1^\textup T + \lambda_2 \boldsymbol{v}_1 \boldsymbol{v}_1^\textup T, & \textup{if } \lambda_1 \textup{ is unique}, \\
            \lambda_1 \boldsymbol{I}, & \textup{otherwise,}
        \end{cases}
    \end{split}
    \label{RLES_characterisation}
    \end{align}
    where $\boldsymbol{I}$ is the identity matrix.
\end{theorem}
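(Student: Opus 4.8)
The plan is to follow the case split built into Definition 7 and to reduce everything to the already-established characterisation of the LES in Corollary 1. First I would dispose of the generic case. If $\mathcal{B}(\mathcal{X})$ is generic, then $\Sup_{\textup{RLE}}(\mathcal{X}) = \Sup_{\textup{LE}}(\mathcal{X})$ and I may apply Corollary 1 verbatim. The point is that genericity pins down both unknowns in that corollary: since no two major eigenvectors are aligned, the eigenvector of $\lambda_2$ (indeed of every eigenvalue other than $\lambda_1$) is automatically not aligned with $\boldsymbol{u}_1$, so the quantity $\mu_*$ of Corollary 1 is forced to equal $\lambda_2$; and if $\lambda_1$ fails to be unique, then two distinct matrices carry $\lambda_1$ with non-aligned major eigenvectors, which triggers the first branch $\lambda_1 \boldsymbol{I}$ of Corollary 1. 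Hence in the generic case the closed form \eqref{RLES_characterisation} holds on the nose.

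For the non-generic case I would take an arbitrary generic multi-set sequence with $\mathcal{X}^{(\delta)} \xlongrightarrow{\textup{pl}} \mathcal{X}$; such sequences exist, since one can rotate the matrices by small, pairwise distinct angles as in the motivating computation preceding Definition 7, separating all eigenvector directions while leaving the eigenvalues untouched. By the generic case just treated, each $\Sup_{\textup{LE}}(\mathcal{X}^{(\delta)})$ is given by \eqref{RLES_characterisation} applied to the data of $\mathcal{X}^{(\delta)}$. Planar convergence keeps the eigenvalues fixed, so the only remaining task is to control the eigenvectors. If $\lambda_1$ is unique, the matrix carrying it has a simple top eigenvalue, whose eigenvector depends continuously on the matrix, so $\boldsymbol{u}_1^{(\delta)} \to \boldsymbol{u}_1$ and the perpendicular unit vector $\boldsymbol{v}_1^{(\delta)} \to \boldsymbol{v}_1$, while $\mu_*^{(\delta)} = \lambda_2$ for every $\delta$ by genericity. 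Passing to the limit yields $\lambda_1 \boldsymbol{u}_1 \boldsymbol{u}_1^{\textup T} + \lambda_2 \boldsymbol{v}_1 \boldsymbol{v}_1^{\textup T}$, reproducing and generalising the computation before Definition 7. If $\lambda_1$ is not unique, genericity of $\mathcal{X}^{(\delta)}$ makes the two matrices carrying $\lambda_1$ have non-aligned eigenvectors, so $\Sup_{\textup{LE}}(\mathcal{X}^{(\delta)}) = \lambda_1 \boldsymbol{I}$ identically in $\delta$, whence the limit is $\lambda_1 \boldsymbol{I}$.

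The main obstacle is to see that the limit in Definition 7 is genuinely well defined, i.e. independent of the approximating generic sequence, which is precisely what makes $\Sup_{\textup{RLE}}$ a function of $\mathcal{X}$ alone. This reduces to the two convergence facts just used: in the unique case the simplicity of $\lambda_1$ guarantees that the one-dimensional top eigenspace, and hence the rank-one term $\boldsymbol{u}_1 \boldsymbol{u}_1^{\textup T}$, varies continuously, so every admissible sequence produces the same limit regardless of the chosen rotation angles; in the non-unique case the value $\lambda_1 \boldsymbol{I}$ is constant along every generic sequence and hence trivially sequence-independent. A minor point worth verifying is that $\mu_*^{(\delta)}$ really stabilises at $\lambda_2$, ruling out the selection of some third eigenvalue strictly between $\lambda_2$ and $\lambda_1$; but this is immediate, since $\lambda_2$ is by definition the next largest eigenvalue and its eigenvector is non-aligned with $\boldsymbol{u}_1^{(\delta)}$ in the generic perturbation.
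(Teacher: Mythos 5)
Your proposal is correct and follows essentially the same route as the paper: the generic case is read off from the LES characterisation (Theorems 1 and 2, i.e.\ Corollary 1), and the non-generic case is handled by passing to a planar-convergent generic perturbation whose eigenvalues are fixed, so that the limit is either the constant $\lambda_1 \boldsymbol{I}$ (non-unique case) or follows from convergence of the top eigenvector back to $\boldsymbol{u}_1$ (unique case). Your extra discussion of why the limit in Definition 7 is independent of the approximating generic sequence goes slightly beyond what the paper writes down, but it does not alter the argument.
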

\begin{proof}
    The generic case of \eqref{RLES} is a straightforward consequence of Theorem 1 and Theorem 2. So we only need to consider the non-generic case. 
    
    Let $\lambda_1$ be unique. In this case we just need to follow the argumentation of the calculation that we used to introduce this section. By the rotation of eigenvectors that were shown there and the fact that the $\mathcal{X}^{(\delta)}$ of Definition 7 fulfils all necessary properties of the mentioned rotated multi-set, we can conclude the first case of Equation \eqref{RLES_characterisation}. 

    Now we want $\lambda_1$ to be not unique. This means that we have 
    \begin{align}
        \lambda_1 = \lambda_2 = \lambda_1^{(\delta)} = \lambda_2^{(\delta)}, 
        \label{planarity_of_eigenvalues}
    \end{align}
    where $\lambda_1^{(\delta)}, \lambda_2^{(\delta)}$ are the largest eigenvalues of $\mathcal{X}^{(\delta)}$ from Definition 7, since the $\mathcal{X}^{(\delta)}$ converge planar towards $\mathcal{X}$. Furthermore, let the corresponding eigenvectors of $\lambda_1, \lambda_2$ be given by $\boldsymbol{u}_1 = \boldsymbol{u}_2$. 
    We can find multi-sets $\mathcal{X}^{(\delta)}$ for $\delta > 0$ which have the same eigenvalues as $\mathcal{X}$ for all $\delta$, generic $\mathcal{B}\left( \mathcal{X}^{(\delta)} \right)$ and converge to $\mathcal{X}$ for $\delta \rightarrow 0$, e.g. by rotating the eigenvectors of $\mathcal{X}$ as at the beginning of this section.
    By Theorem 2 and Equation \eqref{planarity_of_eigenvalues} we get for the LES of $\mathcal{X}^{(\delta)}$:
    \begin{align*}
        \boldsymbol{S}^{(\delta)} := \Sup_{\textup{LE}}(\mathcal{X}^{(\delta)}) = \lambda_1^{(\delta)} \boldsymbol{I} = \lambda_1 \boldsymbol{I} \quad \forall \delta > 0.
    \end{align*}
    This means $\boldsymbol{S}^{(\delta_1)} = \boldsymbol{S}^{(\delta_2)}$ for all $\delta_1, \delta_2 > 0$. For $\delta \rightarrow 0$, it follows
    \begin{align*}
        \lim_{\delta \rightarrow 0} \boldsymbol{S}^{(\delta)} = \lim_{\delta \rightarrow 0} \lambda_1 \boldsymbol{I} = \lambda_1 \boldsymbol{I},
    \end{align*}
    which concludes the proof. 
    \hfill $\square$
\end{proof}

However, the question remains as to whether there is another type of problem that can cause the aforementioned discontinuity. We consider the following lemma to partially answer this question.

\begin{lemma}
    Let the conditions of Theorem 5 be fulfilled. Then, the RLES, given by Definition 7, depends continuously on the input data.
\end{lemma}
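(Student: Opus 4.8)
The plan is to leverage the explicit characterisation \eqref{RLES_characterisation} from Theorem 5 and recast the RLES as a single closed-form expression in which the only possibly discontinuous ingredient appears multiplied by a factor that vanishes exactly where that ingredient can jump. Using $\boldsymbol{u}_1 \boldsymbol{u}_1^\textup{T} + \boldsymbol{v}_1 \boldsymbol{v}_1^\textup{T} = \boldsymbol{I}$, both branches of \eqref{RLES_characterisation} combine into the uniform formula
\begin{align*}
    \Sup_{\textup{RLE}}(\mathcal{X}) = \lambda_2 \boldsymbol{I} + (\lambda_1 - \lambda_2)\, \boldsymbol{u}_1 \boldsymbol{u}_1^\textup{T},
\end{align*}
because for $\lambda_1 > \lambda_2$ (the case where $\lambda_1$ is unique) this reproduces $\lambda_1 \boldsymbol{u}_1 \boldsymbol{u}_1^\textup{T} + \lambda_2 \boldsymbol{v}_1 \boldsymbol{v}_1^\textup{T}$, whereas for $\lambda_1 = \lambda_2$ (the non-unique case) it collapses to $\lambda_1 \boldsymbol{I}$ regardless of the value of $\boldsymbol{u}_1$. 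It is precisely this collapse that will absorb the troublesome direction dependence.

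Next I would verify that the two scalar coefficients are continuous in the input $\mathcal{X} \in (\textup{Sym}(2))^n$. The two eigenvalues of each symmetric matrix $\boldsymbol{X}_i$ depend continuously on its entries, so the family of all $2n$ eigenvalues is a family of continuous functions of $\mathcal{X}$; consequently the largest and second-largest among these values, namely $\lambda_1$ and $\lambda_2$, are continuous as well. This settles the continuity of the summand $\lambda_2 \boldsymbol{I}$ and of the prefactor $\lambda_1 - \lambda_2$.

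The core of the argument is the treatment of the projection $\boldsymbol{u}_1 \boldsymbol{u}_1^\textup{T}$. On the open set $\{\lambda_1 > \lambda_2\}$ the eigenvalue $\lambda_1$ is attained by a single matrix $\boldsymbol{X}_{i^*}$ and is a simple eigenvalue of it, separated from all remaining eigenvalues by the gap $\lambda_1 - \lambda_2 > 0$. By continuity of the eigenvalues this spectral gap persists under sufficiently small perturbations, so the maximising index $i^*$ is locally constant and, by standard perturbation theory for a simple eigenvalue, the associated eigenprojection $\boldsymbol{u}_1 \boldsymbol{u}_1^\textup{T}$ varies continuously. Hence the entire right-hand side of the uniform formula is continuous throughout $\{\lambda_1 > \lambda_2\}$.

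The main obstacle is the boundary set $\{\lambda_1 = \lambda_2\}$, on which $\boldsymbol{u}_1 \boldsymbol{u}_1^\textup{T}$ may genuinely be discontinuous — the maximising matrix can switch, or the top eigenvector of one matrix can rotate as its two eigenvalues coalesce. The point is that this jump is rendered harmless by the vanishing prefactor. For an arbitrary sequence $\mathcal{X}^{(k)} \to \mathcal{X}_0$ with $\lambda_1(\mathcal{X}_0) = \lambda_2(\mathcal{X}_0)$ one has $\lambda_2(\mathcal{X}^{(k)}) \boldsymbol{I} \to \lambda_1(\mathcal{X}_0)\boldsymbol{I}$ by the continuity just established, while the second term satisfies $\lambda_1(\mathcal{X}^{(k)}) - \lambda_2(\mathcal{X}^{(k)}) \to 0$ and the projections $\boldsymbol{u}_1 \boldsymbol{u}_1^\textup{T}$ have unit Frobenius norm and are therefore uniformly bounded, so the product tends to $\boldsymbol{0}$. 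Thus $\Sup_{\textup{RLE}}(\mathcal{X}^{(k)}) \to \lambda_1(\mathcal{X}_0)\boldsymbol{I} = \Sup_{\textup{RLE}}(\mathcal{X}_0)$. Together with the continuity on $\{\lambda_1 > \lambda_2\}$, this establishes that the RLES depends continuously on $\mathcal{X}$ everywhere, completing the proof.
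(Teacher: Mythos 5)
Your proof is correct, but it follows a genuinely different route from the paper's. The paper argues dynamically: it splits an arbitrary perturbation of the input into a rotation of eigenvectors (already handled by the construction preceding Definition 7 and in Theorem 5) and a scaling of eigenvalues, and then checks for multiplicative and additive scalings that the size-ordering of the eigenvalues is eventually stable as the perturbation parameter tends to its limit, so the selected $\lambda_1,\lambda_2$ converge to the right values. You instead exploit the identity $\boldsymbol{u}_1\boldsymbol{u}_1^{\textup T}+\boldsymbol{v}_1\boldsymbol{v}_1^{\textup T}=\boldsymbol{I}$ to merge both branches of \eqref{RLES_characterisation} into the single branch-free expression $\lambda_2\boldsymbol{I}+(\lambda_1-\lambda_2)\,\boldsymbol{u}_1\boldsymbol{u}_1^{\textup T}$, and then verify continuity of each ingredient: $\lambda_1,\lambda_2$ as order statistics of continuously varying eigenvalues, the eigenprojection via standard simple-eigenvalue perturbation theory on the open set $\{\lambda_1>\lambda_2\}$, and boundedness of $\boldsymbol{u}_1\boldsymbol{u}_1^{\textup T}$ together with the vanishing prefactor on $\{\lambda_1=\lambda_2\}$. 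Your version buys generality and rigour: it treats an arbitrary sequence $\mathcal{X}^{(k)}\to\mathcal{X}_0$ in $(\textup{Sym}(2))^n$ at once, without having to decompose the perturbation into rotations and scalings or to track which eigenvalue is selected, and it makes the mechanism of the relaxation explicit (the only possibly discontinuous quantity is multiplied by the spectral gap, which vanishes exactly where that quantity can jump). The paper's version stays closer to the geometric bi-cone picture and to the specific perturbations used elsewhere in Section 5, but is more informal; both arguments rest equally on the characterisation established in Theorem 5, which the lemma's hypotheses permit.
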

\begin{proof}
    Since the LES depends only on the spectral decomposition of the input data, the discontinuities can only occur through transformations of the input data, namely the eigenvalues and eigenvectors. Moreover, these transformations must remain in the bi-cone, so that only a scaling of the eigenvalues in the interval $\left[-\frac{1}{\sqrt{2}},\frac{1}{\sqrt{2}}\right]$ and a rotation of the eigenvectors are possible. We have already shown that the RLES depends continuously on the rotated input data. Therefore, it is sufficient to consider only the case of the scaled eigenvalues.

    Since we only want to investigate the case of scaled eigenvalues, we can assume that the eigenvectors will not change in the modified multi-set $\mathcal{Y}^{(\delta)} = \left\{\boldsymbol{Y}_1^{(\delta)}, \dots, \boldsymbol{Y}_n^{(\delta)} \right\} \rightarrow \mathcal{X}$ with $\boldsymbol{Y}_i^{(\delta)} = \lambda_i^{(\delta)} \boldsymbol{u}_i^{(\delta)} \left( \boldsymbol{u}_i^{(\delta)} \right)^{\textup T} + \mu_i^{(\delta)} \boldsymbol{v}_i^{(\delta)} \left( \boldsymbol{v}_i^{(\delta)} \right)^{\textup T}$ according to $\boldsymbol{u}_i^{(\delta)} = \boldsymbol{u}_i$ and $\boldsymbol{v}_i^{(\delta)} = \boldsymbol{v}_i$ for all $i \in \{ 1, \dots, n \}$. As a result, as long as the order of the eigenvalues sorted by size does not change, no discontinuities can occur with regard to the input data, since $\lambda_i^{(\delta)} \rightarrow \lambda_i$ and $\mu_i^{(\delta)} \rightarrow \mu_i$ for all $i \in \{ 1, \dots, n \}$. 
    
    For a multiplicative scaling in the form of $\lambda_i^{(\delta)} := \delta \lambda_i$ for $\delta \rightarrow 1$, there will not occur any discontinuity, since the order of eigenvalues would stay the same. If we assume an additive scaling in form of $\lambda_i^{(\delta)} := \lambda_i + \delta \alpha_i$, $\alpha_i \in \mathbb R$, for $\delta \rightarrow 0$, we could change the order of the eigenvalues, e.g. according to $\lambda_1 > \lambda_2 > \lambda_3$ but $\lambda_3^{(\delta)} > \lambda_1^{(\delta)} > \lambda_2^{(\delta)}$ for $\delta \in [a, b] \subset [\varepsilon_1, \infty)$ with a sufficiently small $\varepsilon_1 > 0$. Since the RLES operates only with generic bases, $\delta \rightarrow 0$ and $\lambda_1^{(\delta)} > \lambda_2^{(\delta)} > \lambda_3^{(\delta)}$ for $\delta \in (0,\varepsilon_2)$ with a sufficiently small $\varepsilon_2 < \varepsilon_1$, we would obtain as RLES
    \begin{align*}
        \lim_{\delta \rightarrow 0} \Sup_{\textup{RLE}}(\mathcal{Y}^{(\delta)}) &= \lim_{\substack{\delta \rightarrow 0 \\ \delta \in (0,\varepsilon_2)}} \lambda_1^{(\delta)} \boldsymbol{u}_1 \boldsymbol{u}_1^\textup T + \lambda_2^{(\delta)} \boldsymbol{v}_1 \boldsymbol{v}_1^\textup T = \lambda_1 \boldsymbol{u}_1 \boldsymbol{u}_1^\textup T + \lambda_2 \boldsymbol{v}_1 \boldsymbol{v}_1^\textup T \\
        &= \Sup_{\textup{RLE}}(\mathcal{X}).
    \end{align*}
    An analogous reasoning is applicable in the case of non-unique eigenvalues.
    \hfill $\square$
\end{proof}

We would now like to summarise the most important features of the RLES in the following corollary.

\begin{corollary}
    Let the conditions of Theorem 5 be fulfilled. Then, the RLES \eqref{RLES} can be characterised by \eqref{RLES_characterisation}, depends continuously on its input data and is transitive.
\end{corollary}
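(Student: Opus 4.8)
The plan is to treat the three assertions separately, noting that the first two are already in hand. The characterisation by \eqref{RLES_characterisation} is precisely the content of Theorem 5, and the continuous dependence on the input data is exactly Lemma 9; under the conditions of Theorem 5 both apply verbatim. Hence only the transitivity of $\Sup_{\textup{RLE}}$ remains to be shown, that is, the identity $\Sup_{\textup{RLE}}(\mathcal{X} \cup \mathcal{Y}) = \Sup_{\textup{RLE}}(\{\Sup_{\textup{RLE}}(\mathcal{X}), \Sup_{\textup{RLE}}(\mathcal{Y})\})$ for multi-sets $\mathcal{X}, \mathcal{Y}$ of symmetric $2 \times 2$ matrices.

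For transitivity I would argue directly from the characterisation \eqref{RLES_characterisation} rather than from the limit definition \eqref{RLES}, exploiting the structural fact that $\Sup_{\textup{RLE}}$ depends only on the two largest eigenvalues of its argument together with the eigenvector of the largest one: in the unique case it places $\lambda_1$ along $\boldsymbol{u}_1$ and the next largest eigenvalue $\lambda_2$ along $\boldsymbol{v}_1 \perp \boldsymbol{u}_1$, and otherwise it returns $\lambda_1 \boldsymbol{I}$. Writing $\boldsymbol{S}_1 = \Sup_{\textup{RLE}}(\mathcal{X})$ and $\boldsymbol{S}_2 = \Sup_{\textup{RLE}}(\mathcal{Y})$, I would first establish the eigenvalue bookkeeping: the two largest eigenvalues of the union $\mathcal{X} \cup \mathcal{Y}$ (counted with multiplicity) coincide with the two largest among the four eigenvalues of $\boldsymbol{S}_1$ and $\boldsymbol{S}_2$. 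This is the elementary ``merge two sorted lists and keep the top two'' observation, using that by \eqref{RLES_characterisation} the two largest eigenvalues of $\mathcal{X}$ and of $\mathcal{Y}$ are exactly the ones retained as the eigenvalues of $\boldsymbol{S}_1$ and $\boldsymbol{S}_2$.

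In the case that the overall largest eigenvalue $\nu_1$ is unique, I would assume without loss of generality that it is attained in $\mathcal{X}$, so that $\boldsymbol{S}_1$ has major eigenvalue $\nu_1$ with major eigenvector $\boldsymbol{u}_1$; then both sides have major eigenvalue $\nu_1$, the same major eigenvector $\boldsymbol{u}_1$, hence the same minor eigenvector $\boldsymbol{v}_1 = \boldsymbol{u}_1^\perp$, and the same minor eigenvalue, namely the second largest eigenvalue of the union, which the merge step identifies with the second largest eigenvalue of $\{\boldsymbol{S}_1, \boldsymbol{S}_2\}$. In the degenerate case that $\nu_1$ is not unique, the merge step forces $\nu_1$ to appear at least twice among the eigenvalues of $\{\boldsymbol{S}_1, \boldsymbol{S}_2\}$, either because one of $\boldsymbol{S}_1, \boldsymbol{S}_2$ already equals $\nu_1 \boldsymbol{I}$ or because $\nu_1$ is the major eigenvalue of both, so that \eqref{RLES_characterisation} returns $\nu_1 \boldsymbol{I}$ on both sides. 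Combining the cases yields $\boldsymbol{S}_l = \boldsymbol{S}_r$.

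The main obstacle, and the point I would treat most carefully, is the bookkeeping in the non-generic case: one must verify that the relaxation's decision to ignore eigenvector alignment, always assigning $\lambda_2$ to $\boldsymbol{v}_1$ rather than the alignment-dependent $\mu_*$ used by the LES, is exactly what makes the selection rule transitive, and that the ``not unique'' trigger of \eqref{RLES_characterisation} fires simultaneously on both sides. Relative to Proposition 1, the advantage of this direct route is that it sidesteps the difficulty that the pair $\{\Sup_{\textup{RLE}}(\mathcal{X}), \Sup_{\textup{RLE}}(\mathcal{Y})\}$ need not possess a generic basis even when $\mathcal{X}$ and $\mathcal{Y}$ do, so that a reduction to the LES transitivity through the limit definition would require an extra continuity argument that the characterisation makes unnecessary.
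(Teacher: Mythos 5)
Your proposal is correct, but for the transitivity part it takes a genuinely different route from the paper. The paper's own proof is a one-line reduction: it rotates the eigenvectors so that all orthonormal bases become generic, observes that for generic configurations the RLES coincides with the LES, and then invokes Proposition~1. You instead verify transitivity directly from the closed-form characterisation \eqref{RLES_characterisation}, via the ``merge the top two eigenvalues'' bookkeeping and a case split on whether the overall largest eigenvalue of $\mathcal{X}\cup\mathcal{Y}$ is unique. Your argument is sound: because the RLES always retains the two largest eigenvalues of its argument (irrespective of eigenvector alignment, unlike the $\mu_*$ of the LES), the eigenvalue multiset of $\{\Sup_{\textup{RLE}}(\mathcal{X}),\Sup_{\textup{RLE}}(\mathcal{Y})\}$ is exactly the union of the top-two lists, the major eigenvectors propagate correctly, and the ``not unique'' branch fires on the left-hand side if and only if it fires on the right-hand side. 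What your route buys is self-containedness and a genuine repair of a gap you correctly identify in the paper's argument: even when $\mathcal{X}$ and $\mathcal{Y}$ have generic bases, the two intermediate suprema need not form a generic pair (indeed, when the global maximum is attained in $\mathcal{X}$, both $\Sup_{\textup{RLE}}(\mathcal{X})$ and $\Sup_{\textup{RLE}}(\mathcal{Y})$ may share eigenvector orientations), so the paper's reduction to Proposition~1 tacitly requires an additional limiting or continuity argument that your direct computation makes unnecessary. What the paper's route buys is brevity and reuse of the already-proven Proposition~1; yours requires redoing the case analysis, but it also makes transparent \emph{why} dropping the alignment condition in the relaxation is precisely what restores clean transitivity.
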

\begin{proof}
    The first two properties are covered by Theorem 5 and Lemma 9. 
    The eigenvectors in \eqref{Transitivity} can be rotated, such that the final input data is represented by generic orthonormal bases. Consequently, this constitutes a special case for input data with generic orthonormal bases in Proposition 1, which implies the transitivity.
    \hfill $\square$
\end{proof}

To conclude this section, we will consider a small example for the RLES with originally non-generic input data.

\begin{example}
    We calculate the RLES for the input data of Example 1, namely blue $\boldsymbol C_1 = (0,0,1)$, a medium-dark brown $\boldsymbol C_2 = \left( \frac{3}{5}, \frac{2}{5}, \frac{1}{5} \right)$ and a shade of blue-magenta $\boldsymbol C_3 = \left(\frac{1}{3}, \frac{1}{3}, \frac{5}{6}\right)$ with $\boldsymbol{X}_i = \lambda_i \boldsymbol{u}_i \boldsymbol{u}_i^\textup T + \mu_i \boldsymbol{v}_i \boldsymbol{v}_i^\textup T$, $i = 1, 2, 3$, and the corresponding spectral data
    \begin{align*}
        &\lambda_1 = \frac{1}{\sqrt{2}} \approx 0.7071,  &&\boldsymbol u_1 = \frac{1}{\sqrt{8 + 4 \sqrt{3}}} 
        \begin{pmatrix}
            -2-\sqrt{3} \\
            1
        \end{pmatrix}
        \approx
        \begin{pmatrix}
            -0.9659 \\
            \hspace{0.8em}0.2588
        \end{pmatrix}
        ,\\ 
        &\mu_1 = - \lambda_1,  &&\boldsymbol v_1 = \frac{1}{\sqrt{8 - 4 \sqrt{3}}}
        \begin{pmatrix}
            2-\sqrt{3} \\
            1
        \end{pmatrix}
        \approx
        \begin{pmatrix}
            -0.2588 \\
            -0.9659
        \end{pmatrix}
        , \\
        & \lambda_2 = \frac{1}{5 \sqrt{2}} \approx 0.1414,  &&\boldsymbol u_2 = \frac{1}{2} 
        \begin{pmatrix}
            1 \\
            \sqrt{3}
        \end{pmatrix}
        \approx
        \begin{pmatrix}
            0.5000 \\
            0.8660
        \end{pmatrix}
        ,\\
        &\mu_2 = - \frac{3}{5 \sqrt{2}} \approx -0.4243,  &&\boldsymbol v_2 = \frac{1}{2}
        \begin{pmatrix}
            -\sqrt{3} \\
            1
        \end{pmatrix}
        \approx
        \begin{pmatrix}
            -0.8660 \\
            \hspace{0.8em}0.5000
        \end{pmatrix}
        \quad \text{and} 
    \end{align*}
    \begin{align*}
        & \lambda_3 = \frac{2}{3 \sqrt{2}} \approx 0.4714,  &&\boldsymbol u_3 = \frac{1}{\sqrt{8 + 4 \sqrt{3}}}
        \begin{pmatrix}
            -2-\sqrt{3} \\
            1
        \end{pmatrix}
        \approx
        \begin{pmatrix}
            -0.9659 \\
            \hspace{0.8em}0.2588
        \end{pmatrix}
        , \\ 
        &\mu_3 = - \frac{1}{3 \sqrt{2}} \approx - 0.2357,  &&\boldsymbol v_3 = \frac{1}{\sqrt{8 - 4 \sqrt{3}}}
        \begin{pmatrix}
            2-\sqrt{3} \\
            1
        \end{pmatrix}
        \approx
        \begin{pmatrix}
            0.2588 \\
            0.9659
        \end{pmatrix}
        .
    \end{align*}
    Since $\mathcal{B}(\mathcal{X})$ is non-generic for $\mathcal{X} = \{ \boldsymbol{X}_1, \boldsymbol{X}_2, \boldsymbol{X}_3 \}$ because of $\boldsymbol{X}_1$ and $\boldsymbol{X}_3$, we consider a modified multi-set $\mathcal{X}^{(\delta)} = \{ \boldsymbol{X}_1^{(\delta)}, \boldsymbol{X}_2^{(\delta)}, \boldsymbol{X}_3^{(\delta)} \}$ according to Definition 7, which results by rotating the corresponding eigenvectors $\boldsymbol{u}_3, \boldsymbol{v}_3$ of the matrix $\boldsymbol{X}_3$ of $\boldsymbol{C}_3$ by $\delta \in (0, \pi)$:
    \begin{align*}
        &\boldsymbol{X}_1^{(\delta)} = \boldsymbol{X}_1, \quad \boldsymbol{X}_2^{(\delta)} = \boldsymbol{X}_2 \quad \text{and} \quad \boldsymbol{X}_3^{(\delta)} = \lambda_3^{(\delta)} \boldsymbol{u}_3^{(\delta)} \left( \boldsymbol{u}_3^{(\delta)} \right)^{\textup T} + \mu_3^{(\delta)} \boldsymbol{v}_3^{(\delta)} \left( \boldsymbol{v}_3^{(\delta)} \right)^{\textup T},\\
        & \lambda_3^{(\delta)} = \lambda_3, \quad \mu_3^{(\delta)} = \mu_3, \quad \boldsymbol u_3^{(\delta)} = \frac{1}{\sqrt{8 + 4 \sqrt{3}}}
        \begin{pmatrix}
            \left(-2-\sqrt{3}\right) \cos(\delta) - \sin(\delta) \\
            \left(-2-\sqrt{3}\right) \sin(\delta) + \cos(\delta)
        \end{pmatrix}
        , \quad \\
        & \boldsymbol v_3^{(\delta)} = \frac{1}{\sqrt{8 - 4 \sqrt{3}}}
        \begin{pmatrix}
            \left( 2-\sqrt{3} \right) \cos(\delta) - \sin(\delta) \\
            \left( 2-\sqrt{3} \right) \sin(\delta) + \cos(\delta)
        \end{pmatrix}
        , \quad \delta \in (0, \pi).
    \end{align*}
    Due to the generic nature of $\mathcal{B}\left(\mathcal{X}^{(\delta)}\right)$, we obtain for the RLES according to Theorem 5:
    \begin{align*}
        \boldsymbol{S}' = \lambda_1 \boldsymbol{u}_1 \boldsymbol{u}_1^\textup T + \lambda_3 \boldsymbol{v}_1 \boldsymbol{v}_1^\textup T = \frac{1}{12 \sqrt{2}}
        \begin{pmatrix}
            10+\sqrt{3} \hspace{0.5em}& -1 \\
            -1 \hspace{0.5em}& 10-\sqrt{3}
        \end{pmatrix}
        \approx 
        \begin{pmatrix}
            \hspace{0.3cm} 0.6913 \hspace{0.5em}& -0.0589 \\
            -0.0589 \hspace{0.5em}& \hspace{0.3cm} 0.4872
        \end{pmatrix}
    \end{align*}
    which represents in the RGB space a light shade of blue-magenta colour: \\$\left( \frac{5}{6}, \frac{5}{6}, 1 \right) \approx (0.8333, 0.8333, 1)$.
\end{example}

\section{Conclusion and future Work}

Building on the work of Burgeth and his co-authors, see \cite{Loewner,BurgethKleefeld,morph_op_mat_im}, we have investigated a characterisation for a previously unexplored application of the log-exp supremum, which was introduced by Maslov \cite{Maslov} as an approximation of the maximum in convex optimisation, for colour morphology.
To do this, we used Burgeth's and Kleefeld's \cite{BurgethKleefeld} bijective mapping to assign each colour from the RGB colour space a colour in the form of a symmetric real $2\times2$ matrix in the HCL bi-cone and used the spectral decomposition of these matrices, the Loewner order and properties of the Rayleigh product to calculate the approximation. 
In particular, we were able to show that the LES is transitive according to equation \eqref{Transitivity} and, in combination with the dilation, makes it associative in colour morphology. The latter is a property that is otherwise only known from binary and grey value morphology and, to our knowledge, the only multidimensional dilation in colour morphology with this property.

Moreover, our findings indicate that while the LES is not uniquely minimal with respect to the set of upper bounds of the input data derived from the Loewner ordering, it does, however, exhibit unique minimality when considered in conjunction with the intersection of all $p$-power upper bounds, as defined in Definition 6. To achieve this, the eigenvalues and the Rayleigh product of the exponentiated matrices were employed to eliminate the problematic cases through the limit as $p\to\infty$.

Finally, we have also addressed one of the biggest downsides of the LES, namely the fact that it does not depend continuously on the input data. However, among other things, this only applied to non-generic configurations of the orthonormal bases of the input data, which allowed us to remove these by means of a relaxation of the LES, the RLES. The input data was changed slightly so that these cases no longer occurred. This change also meant that not only these discontinuities were removed, but all discontinuities were removed without losing the favourable properties of the LES shown above. Since the duality relationship between dilation and erosion is also intact in our method, all the properties of dilation shown here can be transferred to the corresponding properties of erosion. 

In light of these findings, there are several avenues for further investigation of this method in the future. The most notable aspect is the significant increase in brightness of the image. 
One could compare whether the behaviour is similar to that of the paper by Kahra, Sridhar and Breuß, see \cite{KSB}. In addition, given the paper \cite{KahraBreuss} by Kahra and Breuß, it seems possible to find parameters so that these supposed artifacts disappear from \cite{KSB}. However, this would likely be a challenging undertaking. Alternatively, one could also consider the differences in supremum formation with respect to other suprema employed in colour morphology, such as the one-dimensional case or other multidimensional cases, including the trace-supremum \cite{tr_sup} or other suprema \cite{WelkQuantile}, and compare them with each other. 
It should be noted that our method is not an optimal case and could be replaced by selecting a colour space other than the HCL bi-cone, such as Lab, in order to potentially offer additional advantages. The choice of this space was influenced by its graphically clear geometric properties.

\bibliographystyle{splncs04}
\bibliography{MyBibliography}

\end{document}